\documentclass[10pt,onecolumn]{article}
\usepackage{natbib}
\usepackage{our-style}
\usepackage{adjustbox}

\usepackage{hyperref}       
\hypersetup{
    colorlinks=true,
    citecolor=blue
}
\usepackage{url}            
\usepackage{booktabs}       
\usepackage{amsfonts}       
\usepackage{nicefrac}       
\usepackage{microtype}      
\usepackage{xcolor}         
\usepackage{authblk}

\title{\huge Concept Modulation Models: A Unified Framework for Identifiability and Extrapolation}

\author[1]{Soheun Yi\thanks{Correspondence: Soheun Yi, \texttt{soheuny@andrew.cmu.edu}.}}
\author[2]{Yizhou Lu}
\author[2]{Chandler Squires}
\author[2]{Pradeep Ravikumar}
\affil[1]{Department of Statistics and Data Science, Carnegie Mellon University}
\affil[2]{Machine Learning Department, Carnegie Mellon University}
\date{}

\begin{document}

\maketitle

\begin{abstract}
Reliable generalization in conditional latent variable models requires understanding both identifiability and extrapolation: how observed variation across attributes determines latent structure, and how that structure determines distributions at unseen attributes.
However, existing identifiability and extrapolation guarantees are largely model-specific, with separate analyses in nonlinear ICA, causal representation learning, perturbation modeling, and related conditional latent variable models.
We introduce \emph{concept modulation models (CMMs)}, an attribute-indexed class of conditional generative models with structure \(A\to \modrv \to C\to X\), where attributes select modulators, modulators induce latent concept laws, and concepts generate observed features.
CMMs lift transition-based identifiability to conditional settings by showing that feature agreement on observed attributes induces a latent concept transition constrained by the CMM class.
We express these constraints through \emph{attribute potentials}, log-density ratios between attribute-conditioned concept laws, separating the generic lifting step from model-specific rigidity arguments.
The same potentials control extrapolation: agreement at unseen attributes holds exactly when the transported attribute-potential identities extend to those attributes.
This yields algebraic extrapolation criteria, identifies the common potential-based proof objects behind several existing identifiability and extrapolation results, and, when combined with the model-specific rigidity arguments in those works, recovers their stated conclusions.
\end{abstract}

\addtocontents{toc}{\protect\setcounter{tocdepth}{0}}

\section{Introduction}\label{sec:intro}

Identifiability in latent-variable representation learning asks when latent structure is not merely useful for prediction, but uniquely determined by the variation available in data, at least up to well-characterized ambiguities \citep{hyvarinen2019nonlinear,khemakhem2020variational,Scholkopf2021causal}.
This question is central to reliable generalization because data from observed conditions alone need not determine behavior under unseen ones: two models may agree on all observed conditions while disagreeing off-support \citep{damour2022underspecification}.
An identifiability guarantee offers a principled route around this obstacle: if the latent structure governing how attributes affect observations is identifiable, then recovering it can help justify generalization beyond the observed training conditions.

In this paper, we consider two problems central to this route in conditional generative models, where an observed feature \(X\) varies with an attribute \(A \in \cA\), such as an environment, intervention, perturbation, or conditioning input.
\emph{Identifiability} asks what aspects of the latent structure are determined, up to allowable ambiguities, by the conditional feature distributions \(p(x \mid a)\) observed at attributes \(a\in\cAo\), where $\cAo \subseteq \cA$ may be a very small subset of $\cA$.
\emph{Extrapolation} asks when agreement on the observed conditional distributions, together with the structural assumptions of the model class, forces agreement on \(p(x \mid a')\) at unseen attributes \(a'\in\cA\setminus\cAo\).

Existing answers to these questions have largely been developed on a per-model basis.
Nonlinear ICA and identifiable VAEs use auxiliary or conditioning variables to identify latent sources \citep{hyvarinen2019nonlinear,khemakhem2020variational}.
Causal representation learning uses environments or interventions to recover latent causal variables and causal structure \citep{ahuja2023interventional,squires2023linear,von2023nonparametric,varici2024general,varici2025score}.
Perturbation models study when learned latent responses can extrapolate to unseen perturbations \citep{von2025representation}.
Although these results share a common reliance on structured variation across attributes, their assumptions and conclusions are usually stated in model-specific terms.

Several recent works provide unifying perspectives, but their scope is different from ours.
For example, \citet{khemakhem2020variational} unify VAEs and nonlinear ICA through condition-dependent latent priors, while \citet{YaoRancatiCadeiFumeroLocatello2025unifying} unify causal representation learning through an invariance principle.
\citet{Reizinger2025identifiable} study identifiability through exchangeable mechanisms or related structural assumptions.
These frameworks clarify important families of identifiability results, but they primarily address when latent representations are identifiable within particular structural regimes.
Our goal is complementary: we address a wide variety of structural regimes at once by exposing the contrastive objects that underlie several identifiability results, and we use these objects to derive conditions under for extrapolation.
We provide a more detailed survey and comparison in \Cref{appendix:related-works}.

To give a common formulation of these different structural regimes, we introduce a shared structural form for attribute-conditioned generation: \(A\to\modrv\to C\to X\).
Here, the attribute \(A\) indexes a latent modulator \(\modrv\), the modulator specifies a latent concept distribution, and the concept \(C\) generates the observed feature \(X\).
We call such models \emph{concept modulation models} (CMMs).
The modulator separates attribute-specific indexing from the shared rule that maps modulators to concept distributions, thereby tying conditional concept laws together across attributes.

Our analysis builds on the transition-based identifiability perspective of \citet{squires2026unifying}, which reduces latent identifiability to characterizing concept-space transitions compatible with a model class.
CMMs are a conditional lift of this perspective: for each fixed attribute, a CMM induces a latent concept generative model, while the CMM class ties the family of attribute-conditioned concept laws through a shared modulation mechanism.
This attribute-indexed structure raises an extrapolation question absent from the unconditional setting: when does agreement of feature distributions on observed attributes force agreement at unseen attributes?

Our characterization of feature equivalence is expressed through the log-density ratio \(\log p(c \mid a)-\log p(c \mid a_0)\), which we call the \emph{attribute potential}.
Attribute potentials remove terms shared across attributes, isolating how the latent concept law changes with \(A\).
We show that feature-equivalent CMMs are related by a latent transition that preserves the observed attribute potentials, so identifiability reduces to determining which transitions remain compatible with the model class.

The same object also characterizes extrapolation.
Once agreement on observed attributes has been lifted to a latent transition, extrapolation is equivalent to the corresponding transported attribute-potential identities holding not only on \(\cAo\), but also at the unseen attributes of interest.
These conditions recover recent guarantees for causal representation learning and perturbation modeling, and also yield new interaction-based extrapolation guarantees for structured attribute spaces.

Our contributions are as follows:
\begin{itemize}
    \item \textbf{Concept modulation models:}
    We introduce CMMs, a conditional generative framework with graphical structure \(A\to\modrv\to C\to X\) (\Cref{def:cmm,def:cmm-class}). 
    This separates attribute-specific indexing from the shared modulation mechanism and captures several weakly supervised latent-variable settings.

    \item \textbf{Conditional transition-based identifiability:}
Building on the transition-based framework of \citet{squires2026unifying}, we prove a conditional lifting theorem for CMMs: feature equivalence on observed attributes induces a latent concept transition compatible with both components of the model class (\Cref{thm:intersection-individual}).
We then characterize the concept-side compatibility condition through preservation of attribute potentials (\Cref{thm:tq-characterization}).
    
    \item \textbf{Extrapolation via attribute potentials:}
    We prove that, once observed feature agreement has been lifted to a latent transition, agreement at unseen attributes is equivalent to extension of the transported attribute-potential identities (\Cref{thm:extrapolation-delta-iff}).
    This yields algebraic extrapolation criteria, recovers perturbation extrapolation guarantees as a special case, and gives interaction-based extrapolation results for structured attribute spaces.
    
\end{itemize}

\section{A unifying framework: Concept modulation models}
\label{sec:cmm}

\subsection{Preliminaries and notation}

Throughout, all spaces are standard Borel spaces.
For measures $\mu$, $\mu'$ defined on a space $\cU$, we write $\mu \ll \mu'$ if $\mu$ is absolutely continuous with respect to $\mu'$, i.e., $\mu'(B) = 0 \implies \mu(B) = 0$, and write $\mu \sim \mu'$ if $\mu \ll \mu'$ and $\mu' \ll \mu$.
We say that a condition holds $\mu$-a.e. for $u \in \cU$ if the set of points $u$ for which the condition fails has $\mu$-measure zero. 
We write $\mu \otimes \mu'$ for the product measure of $\mu$, $\mu'$.

For a space \(\cU\), let \(\cP(\cU)\) denote the set of probability measures on \(\cU\).
For spaces \(\cU,\cV\), let \(\cF(\cU\to\cV)\) denote the set of measurable maps
from \(\cU\) to \(\cV\), and let \(\MK(\cU\to\cV)\) denote the set of Markov kernels
from \(\cU\) to \(\cV\).
For measurable $\cU' \subseteq \cU$, we denote by $\bK_{\cU'}$ the restriction of $\bK$ to $\cU'$.
Given a measurable map \(f\in\cF(\cU\to\cV)\), we let \(\Push_f\colon \cP(\cU)\to\cP(\cV)\)
denote the associated pushforward operator, i.e., $(\Push_f \mu)(B) \defeq \mu(f^{-1}[B])$ for all $\mu\in\cP(\cU)$ and measurable $B\subseteq \cV$.
Given a Markov kernel \(\bK\in\MK(\cU\to\cV)\), we define its pushforward of measure $\mu \in \cP(\cU)$ by $(\bK\mu)(B) \defeq \int_{\cU} \bK(B\mid u)\,\mu(du)$.

As a running example throughout this paper, we consider attribute-conditioned distributions induced by the process
\begin{equation}\label{eq:running-example}
    p(c\mid a) \propto q(c)\exp\{\inner{f(a)}{c}\},
    \qquad
    X=g(C),
\end{equation}
where $f$ and $g$ are unknown.
Here, the attribute \(a\) is mapped to the modulator \(\modval = f(a)\), the modulator $\modval$ changes the latent concept law through an exponential-family tilt $q(c)\exp\{ \inner{\cdot}{c} \}$, and the concept is mapped to the observation by \(g\).

\subsection{Concept modulation models}

A concept modulation model (CMM) factors a conditional generative model through \(A \to \modrv \to C \to X\), where \(A \in \cA\) is an \defword{attribute}, \(\modrv \in \modclass\) is a \defword{modulator}, \(C \in \cC\) is a latent \defword{concept}, and \(X \in \cX\) is an observed \defword{feature}. 
Intuitively, the attribute selects a modulator, the modulator specifies a distribution over concepts, and the concept is mapped to the feature space.
Here, \(\cA\), \(\modclass\), \(\cC\), and \(\cX\) are standard Borel spaces, referred to as the attribute, modulator, concept, and feature spaces, respectively. 
In the running example \eqref{eq:running-example}, the $A \to \modrv$ stage corresponds to the map \(f\), the $\modrv \to C$ stage corresponds to the exponential tilting, and the $C \to X$ stage corresponds to the observation map \(g\).

We now formalize this generative structure at the level of Markov kernels.
\begin{definition}\label{def:cmm}
    A \defword{concept modulation model} \(\M\) on $(\cA,\modclass,\cC,\cX)$ is a tuple $\M=(\bQ,\bB,\bK)$, where \(\bQ \in \MK(\cA\to\modclass)\) is the \defword{indexing kernel}, \(\bB \in \MK(\modclass\to\cC)\) is the \defword{concept modulation kernel}, and \(\bK \in \MK(\cC\to\cX)\) is the \defword{mixing kernel}.
\end{definition}

\begin{figure*}[t]
\centering

\def\panelW{5.0}
\def\panelH{2.5}

\newlength{\PanelCellW}
\setlength{\PanelCellW}{0.46\textwidth}

\newlength{\PanelContentHeight}
\setlength{\PanelContentHeight}{2.5cm}

\newlength{\PanelLabelHeight}
\setlength{\PanelLabelHeight}{0.75cm}

\newlength{\PanelColGap}
\setlength{\PanelColGap}{1.8em}

\newcommand{\PanelGraphic}[1]{%
  \includegraphics[
    width=0.95\linewidth,
    height=\PanelContentHeight,
    keepaspectratio
  ]{#1}%
}

\newcommand{\PanelCell}[2]{%
  \begin{minipage}[t]{\PanelCellW}
    \centering
    \begin{minipage}[c][\PanelContentHeight][c]{\linewidth}
      \centering
      #1
    \end{minipage}%
    \par\vspace{0.35em}%
    \begin{minipage}[t][\PanelLabelHeight][t]{\linewidth}
      \centering
      {\small #2}
    \end{minipage}%
  \end{minipage}%
}

\begin{tabular}{@{}c@{\hspace{\PanelColGap}}c@{}}

\PanelCell{%
\begin{tikzpicture}[
  >=Latex,
  every node/.style={font=\normalsize},
  var/.style={circle, draw, minimum size=8mm, inner sep=0pt},
  sqvar/.style={rectangle, draw, minimum size=8mm, inner sep=0pt},
  shaded/.style={fill=gray!20}
]
  \path[use as bounding box] (0,0) rectangle (\panelW,\panelH);

  \def\hgap{1.5}
  \pgfmathsetmacro{\gcenterX}{0.5*\panelW}
  \pgfmathsetmacro{\gcenterY}{0.5*\panelH}

  \node[sqvar] (a) at ({\gcenterX-1.5*\hgap},\gcenterY) {$A$};
  \node[var, shaded] (lam) at ({\gcenterX-0.5*\hgap},\gcenterY) {$\modrv$};
  \node[var, shaded] (c) at ({\gcenterX+0.5*\hgap},\gcenterY) {$C$};
  \node[var] (x) at ({\gcenterX+1.5*\hgap},\gcenterY) {$X$};

  \draw[->] (a) -- node[midway, above] {$\bQ$} (lam);
  \draw[->] (lam) -- node[midway, above] {$\bB$} (c);
  \draw[->] (c) -- node[midway, above] {$\bK$} (x);
\end{tikzpicture}%
}{(a) Graphical model of CMM}

&

\PanelCell{%
  \PanelGraphic{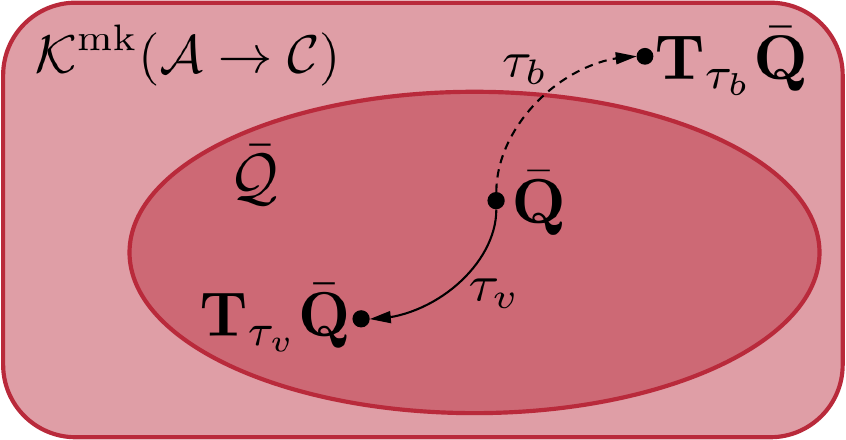}%
}{(b) Valid and invalid concept transitions $\tau_v$ and $\tau_b$}

\\

\PanelCell{%
  \PanelGraphic{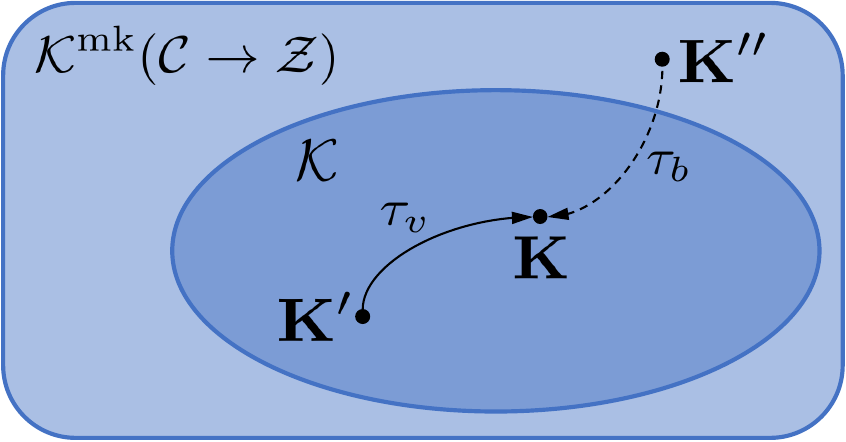}%
}{(c) Valid and invalid mixing transitions $\tau_v$ and $\tau_b$}

&

\PanelCell{%
  \PanelGraphic{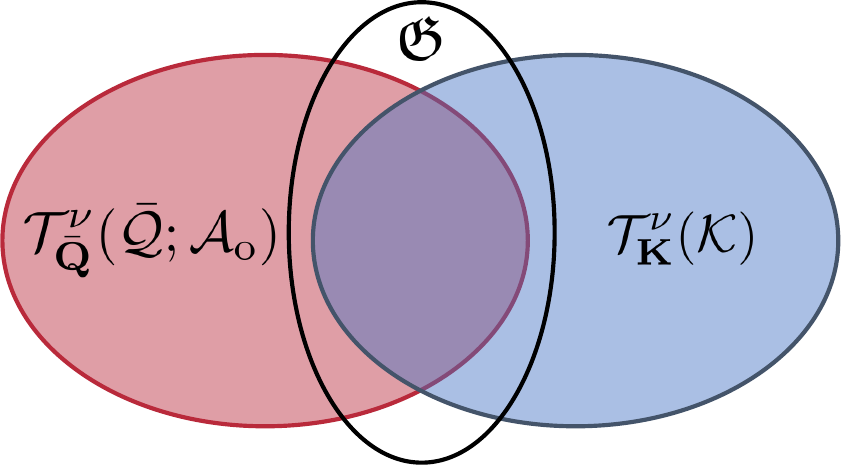}%
}{(d) Transition-intersection criterion (\Cref{thm:intersection-individual})}

\end{tabular}

\vspace{-1.0em}
\caption{
\textbf{Overview of CMMs and transition constraints.}
\textit{(a)} The observed attribute \(A\) influences a modulation variable \(\modrv\), which modulates the latent concept \(C\), which generates the observation \(X\).
\textit{(b,c)} Valid concept and mixing transitions are concept-space transformations that can be absorbed into the corresponding model class. 
Arrows in (b) denote equivalence after restriction to \(\cAo\), while arrows in (c) read \(\bK \refmeq \bK'\Push_{\tau_v} \refmeq \bK''\Push_{\tau_b}\).
\textit{(d)} The transition-intersection criterion selects transitions valid on both sides; if their intersection is contained in \(\frakG\), the CMM is identifiable up to \(\sim_\frakG\) (\Cref{thm:intersection-individual}).
}
\label{fig:cmm}
\end{figure*}
We denote by \(\bP^\M\defeq \bK\bB\bQ\in\MK(\cA\to\cX)\) the \defword{feature generation kernel} induced by $\M$.
A graphical representation of \Cref{def:cmm} is shown in \Cref{fig:cmm}: the indexing kernel \(\bQ\) specifies how attributes index modulators, while the fixed concept modulation kernel \(\bB\) specifies how each modulator induces a concept distribution.
A CMM \emph{class} specifies candidate models with a shared concept-modulation mechanism. 
We therefore fix the kernel \(\bB\), which maps modulators to concept distributions, while allowing the indexing kernel \(\bQ\) and the mixing kernel \(\bK\) to vary.
\begin{definition}
\label{def:cmm-class}
    A \defword{concept modulation model class (CMM class)} is a product class
    \[
        \cM = \cQ \times \{\bB\} \times \cK,
        \quad
        \text{where}
        \quad
        \cQ \subseteq \MK(\cA\to\modclass)
        \quad
        \text{and}\quad
        \cK \subseteq \MK(\cC\to\cX)
    \]
    so that all models in \(\cM\) share the same concept modulation kernel \(\bB\).
    Two models \(\M=(\bQ,\bB,\bK)\) and \(\M'=(\bQ',\bB,\bK')\) in the CMM class \(\cM\) are \defword{feature equivalent on \(\cAo \subseteq \cA \)} if $\bP^\M_{\cAo}=\bP^{\M'}_{\cAo}$, equivalently, if $\bK\bB\bQ(\cdot\mid a)=\bK'\bB\bQ'(\cdot\mid a)$ for all $a \in \cAo$.
\end{definition}

Given \(\cM=\cQ\times\{\bB\}\times\cK\), we denote the induced concept-kernel class by \(\bB\cQ \defeq\setbuild{\bB\bQ}{\bQ\in\cQ}\subseteq\MK(\cA\to\cC)\).
When \(\bB\) is clear from context, we write \(\bar\bQ\) and \(\bar{\cQ}\) for \(\bB\bQ\) and \(\bB\cQ\), respectively; throughout, any notation \(\bar{\bQ}\) or \(\bar{\cQ}\) is understood relative to this fixed \(\bB\).

As we will see, many existing identifiability results, including nonlinear ICA \citep{hyvarinen2019nonlinear,khemakhem2020variational,khemakhem2020ice}, causal representation learning \citep{squires2023linear,Buchholz2023learning,von2023nonparametric,varici2025score}, and other works \citep{von2025representation,SchmidtSchneiderBethge2025equivariance} can be expressed in this class-level framework. 
\Cref{tab:related_work_objects_full,tab:related_work_operators_full} in \Cref{appendix:tables} illustrate corresponding variable-level and operator-level dictionaries for selected results.

We close this section by showing how the running example fits into the CMM framework.
 \begin{example}\label{ex:running-example-setup}
    For the running example in \eqref{eq:running-example}, take
    \(\cA=\bbR^m\), \(\modclass=\bbR^k\), \(\cC=\bbR^k\), and \(\cX=\bbR^d\). Then
    \[
        \bQ=\Push_f,
        \qquad
        \bB(dc\mid\modval)
        =\frac{q(c) \exp(\inner{\modval}{c})}{Z(\modval)}\,dc,
        \qquad
        \bK=\Push_g,
    \]
    where $q(c) > 0$ for all $c \in \cC$ and \(Z(\modval)=\int q(c) \exp(\inner{\modval}{c})\,dc < \infty\). 
    A corresponding CMM class is
    \(\cM=\cQ\times\{\bB\}\times\cK\), with $\cQ=\setbuild{\Push_f}{f\in\cF}$ and $\cK=\setbuild{\Push_g}{g\in \cG}$, where \(\cF\subseteq\cF(\cA\to\modclass)\) and \(\cG \subseteq \cF(\cC \to \cX) \) are classes of functions, with \(\cG\) restricted to injective smooth maps.
\end{example}

\section{Conditional transition-based identifiability and attribute potentials}\label{sec:identifiability}

Given a CMM class, we now show how identifiability can be recovered through a common transition-intersection argument, generalizing \citet{squires2026unifying} to the conditional generative setting.
The key idea is to view feature equivalence as evidence for a latent transition \(\tau\) on concept space.
For CMMs, the transition must not take the model outside the model class: after adjusting for \(\tau\), the resulting mixing kernel and attribute-conditioned concept laws must still be realizable in the model class.
Identifiability up to a transition group \(\frakG\) then follows when every transition satisfying these compatibility requirements lies in \(\frakG\), as formalized in \Cref{thm:intersection-individual}.
The compatibility condition for the attribute-conditioned concept laws can then be characterized by log-density ratios, as in \Cref{thm:tq-characterization}.

\subsection{A transition-intersection criterion}

We first fix the reference-measure setting in which latent transitions are defined. 
Let \(\refm\) be a \(\sigma\)-finite reference measure on \(\cC\), and restrict attention to concept laws dominated by \(\refm\). 
For kernels \(\bK,\bK'\in\MK(\cC\to\cX)\), write \(\bK\refmeq\bK'\) if \(\bK\mu=\bK'\mu\) for all \(\mu\ll\refm\). 
A \defword{latent concept transition} is a measurable automorphism of concept space, defined modulo \(\refm\)-null sets:
\[
    \Aut_{\refm}(\cC)
    \defeq
    \setbuild{\tau \colon \cC \to \cC}{
    \tau \text{ is invertible up to } \refm\text{-null sets, }
    \Push_\tau \refm \sim \refm,\ \text{and } \Push_{\tau^{-1}} \refm \sim \refm } .
\]
Thus, elements of \(\Aut_{\refm}(\cC)\) are precisely the concept-space transformations that preserve the \(\refm\)-null sets in both directions.
For example, when \(\cC=\bbR^d\) and \(\refm\) is Lebesgue measure, this class contains the usual diffeomorphisms.

We next impose the structural condition that allows feature equivalence to be lifted to such a transition. Following the transition-based perspective of \citet{squires2026unifying}, we require the mixing class to factor through an injective embedding of concept distributions.

\begin{definition}[\(\refm\)-Blackwell reducible mixing class]
\label{def:mu-blackwell-reducible}
A mixing class \(\cK\subseteq\MK(\cC\to\cX)\) is \defword{\(\refm\)-Blackwell reducible} if there exist a standard Borel space \(\tcX\), a class \(\cG\) of bimeasurable embeddings \(g\colon\cC\to\tcX\), and a shared kernel \(\widetilde{\bK}\in\MK(\tcX\to\cX)\) such that every \(\bK\in\cK\) admits some \(g\in\cG\) with \(\bK\refmeq\widetilde{\bK}\Push_g\), and the map \(\xi\mapsto \widetilde{\bK}\xi\) is injective on \(\setbuild{\Push_g\mu}{g\in\cG,\ \mu \ll \refm}\).
\end{definition}

The injectivity of the shared kernel \(\widetilde{\bK}\) means that if two distributions on the intermediate space \(\tcX\) give the same observed feature law after applying \(\widetilde{\bK}\), then those distributions on \(\tcX\) must already be equal.
Thus, after writing \(\bK\refmeq\widetilde{\bK}\Push_g\), the relevant concept distribution is \(\Push_g\mu\) with the concept law \(\mu \ll \refm\).
This condition is what lets equality of feature laws lift to equality of $g$-pushed concept laws, and ultimately produces a transition \(\tau\in\Aut_{\refm}(\cC)\) in \Cref{thm:intersection-individual}.

Intuitively, a transition is valid if its effect can be absorbed without leaving the model class.
On the mixing side, this means changing only the mixing kernel, so that \(\bK'\refmeq\bK\Push_\tau\) for some \(\bK'\in\cK\).
On the concept side, this means changing only the induced concept kernel, so that \(\bar\bQ(\cdot\mid a)=\Push_\tau\bar\bQ'(\cdot\mid a)\) for all \(a\in\cAo\) and some \(\bar\bQ'\in\bar\cQ\).

\begin{definition}[Valid transition sets]\label{def:transition-sets}
    For \(\bar\bQ \in \bar\cQ\), define the \defword{valid concept transitions} by
    \[
        \cT_{\bar\bQ}^{\refm}(\bar\cQ;\cAo)
        \defeq
        \setbuild{
        \tau\in\Aut_{\refm}(\cC)}{
        \exists \bar\bQ' \in \bar\cQ\text{ such that }
        \bar\bQ(\cdot\mid a) = \Push_\tau\bar\bQ'(\cdot\mid a)
        \ \text{ for all } a\in\cAo
        }.
    \]
    For \(\bK\in\cK\), define the \defword{valid mixing transitions} by
    \[
        \cT_{\bK}^{\refm}(\cK)
        \defeq
        \setbuild{
        \tau\in\Aut_{\refm}(\cC)
        }{
        \exists \bK'\in\cK\text{ such that }\bK'\refmeq\bK\Push_\tau
        }.
    \]
\end{definition}
Panels (b) and (c) of \Cref{fig:cmm} illustrate the valid transition sets.
We adopt the notion of Blackwell equivalence proposed by \citet{squires2026unifying}, extending the notion defined in the unconditional setup to the attribute-conditional setup.

\begin{definition}[Blackwell equivalence, attribute-conditional version]
\label{def:blackwell-equivalence}
    When two models \(\M=(\bQ,\bB,\bK)\) and \(\M'=(\bQ',\bB,\bK')\) satisfy $\bK'\refmeq\bK\Push_\tau$ and $\bar\bQ(\cdot\mid a)=\Push_\tau\bar\bQ'(\cdot\mid a)$ for all $a\in\cAo$ with $\bar\bQ = \bB\bQ$ and $\bar\bQ' = \bB\bQ'$, we say $\M$ and $\M'$ are \defword{Blackwell equivalent via $\tau$}.
    When $\tau \in \frakG$ for a transition group $\frakG$, we write \(\M\sim_{\frakG}\M'\) on \(\cAo\).
\end{definition}
Given that \(\frakG\) is a group, \(\sim_{\frakG}\) is an equivalence relation; see \Cref{appendix:proof-equiv}.
We next impose a common-support condition to keep concept transitions and density ratios well defined.
An analogous condition is assumed by \citet{squires2026unifying}, and it holds for the main continuous examples below.

\begin{definition}[Common \(\refm\)-support]\label{def:ref-support}
We say that the induced concept-kernel class \(\bar\cQ = \bB\cQ\) has \defword{common \(\refm\)-support} if, for every \(\bar\bQ\in\bar\cQ\) and every \(a\in\cA\), it holds that \(\bar\bQ(\cdot\mid a)\ll\refm\) with density $p_{\bar\bQ} \defeq d\bar\bQ(\cdot\mid a) / d\refm$ satisfying \(0<p_{\bar\bQ}(c\mid a)<\infty\) for \(\refm\)-almost every \(c\).
\end{definition}

The following theorem is a CMM-level analogue of the transition-intersection theorem of \citet{squires2026unifying}, visualized in panel (d) of \Cref{fig:cmm}.
Its CMM-specific content is that feature equivalence induces a transition that simultaneously transports the \emph{entire} observed set of attribute-conditioned concept laws, rather than a single distribution.
\Cref{thm:tq-characterization} later gives the density-level form of this concept-side compatibility.

\begin{restatable}[CMM-lift of the intersection theorem]{theorem}{intersection}
\label{thm:intersection-individual}
Let \(\cM=\cQ\times\{\bB\}\times\cK\) be a CMM class, and let \(\cAo\neq\varnothing\).
Assume that \(\cK\) is \(\refm\)-Blackwell reducible and that the induced concept-kernel class \(\bar\cQ = \bB\cQ\) has common \(\refm\)-support.
Then, for any \(\M,\M'\in\cM\) feature-equivalent on \(\cAo\), there exists 
\[
    \tau\in\cT_{\bK}^\refm(\cK)\cap\cT_{\bar\bQ}^\refm(\bar\cQ;\cAo)
\]
such that \(\M\) and \(\M'\) are Blackwell-equivalent through \(\tau\) on \(\cAo\).
If, moreover, \(\cT_{\bK}^\refm(\cK)\cap\cT_{\bar\bQ}^\refm(\bar\cQ;\cAo)\subseteq\frakG\) for a transition group \(\frakG\subseteq\Aut_{\refm}(\cC)\), then \(\M\) is identifiable up to \(\sim_{\frakG}\) in \(\cM\) on \(\cAo\), i.e., every \(\M'\in\cM\) feature-equivalent to \(\M\) on \(\cAo\) satisfies \(\M\sim_{\frakG}\M'\).
\end{restatable}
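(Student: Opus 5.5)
The plan is to build the transition $\tau$ directly from the two embeddings given by Blackwell reducibility. Fix $\M=(\bQ,\bB,\bK)$ and $\M'=(\bQ',\bB,\bK')$ feature equivalent on $\cAo$. By $\refm$-Blackwell reducibility there are $g,g'\in\cG$ with $\bK\refmeq\widetilde{\bK}\Push_g$ and $\bK'\refmeq\widetilde{\bK}\Push_{g'}$. Common $\refm$-support gives $\bar\bQ(\cdot\mid a)\ll\refm$ and $\bar\bQ'(\cdot\mid a)\ll\refm$. Feature equivalence then reads $\widetilde{\bK}\Push_g\bar\bQ(\cdot\mid a)=\widetilde{\bK}\Push_{g'}\bar\bQ'(\cdot\mid a)$ for $a\in\cAo$. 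Injectivity of $\xi\mapsto\widetilde{\bK}\xi$ on $g$-pushed dominated laws yields
\[
\Push_g\bar\bQ(\cdot\mid a)=\Push_{g'}\bar\bQ'(\cdot\mid a)\qquad\text{for all } a\in\cAo .
\]

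Next I define the candidate $\tau\defeq g^{-1}\circ g'$, using bimeasurability of $g$ so that $g^{-1}$ is measurable on $g[\cC]$. The obstacle is that $g'[\cC]$ need not lie inside $g[\cC]$, so $\tau$ is a priori only defined on $g'^{-1}[g[\cC]]$. To handle this, fix any $a_0\in\cAo$ (using $\cAo\neq\varnothing$). The displayed identity shows that $\Push_{g'}\bar\bQ'(\cdot\mid a_0)$ is concentrated on $g[\cC]$. Since $\bar\bQ'(\cdot\mid a_0)\sim\refm$ by the positivity of the density, $g'(c)\in g[\cC]$ for $\refm$-a.e.\ $c$. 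Symmetrically, $g(c)\in g'[\cC]$ for $\refm$-a.e.\ $c$. Hence $\tau=g^{-1}\circ g'$ and $\tau'=g'^{-1}\circ g$ are defined $\refm$-a.e. and are mutual inverses up to $\refm$-null sets.

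It remains to check the null-set conditions in $\Aut_\refm(\cC)$, which I expect to be the main technical step. Applying $g^{-1}$ to the displayed identity gives $\bar\bQ(\cdot\mid a)=\Push_\tau\bar\bQ'(\cdot\mid a)$ for all $a\in\cAo$. Taking $a=a_0$ and using $\bar\bQ(\cdot\mid a_0)\sim\refm\sim\bar\bQ'(\cdot\mid a_0)$, we get $\Push_\tau\refm\sim\Push_\tau\bar\bQ'(\cdot\mid a_0)=\bar\bQ(\cdot\mid a_0)\sim\refm$. Here we use that pushforward preserves mutual absolute continuity, and $\sigma$-finiteness is only needed to speak of equivalence classes. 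The same argument gives $\Push_{\tau^{-1}}\refm\sim\refm$, so $\tau\in\Aut_\refm(\cC)$. The displayed concept identity also shows $\tau\in\cT_{\bar\bQ}^\refm(\bar\cQ;\cAo)$, witnessed by $\bar\bQ'\in\bar\cQ$.

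For the mixing side, take any $\mu\ll\refm$. Then $\Push_\tau\mu\ll\refm$, and
\[
\bK\Push_\tau\mu=\widetilde{\bK}\Push_g\Push_\tau\mu=\widetilde{\bK}\Push_{g\circ\tau}\mu=\widetilde{\bK}\Push_{g'}\mu=\bK'\mu ,
\]
because $g\circ\tau=g'$ $\refm$-a.e. Thus $\bK'\refmeq\bK\Push_\tau$, so $\tau\in\cT_\bK^\refm(\cK)$ with witness $\bK'\in\cK$. Together with the concept identity, this shows that $\M$ and $\M'$ are Blackwell equivalent via $\tau$ on $\cAo$. For the final claim, if the intersection of the two transition sets is contained in $\frakG$, then $\tau\in\frakG$, which gives $\M\sim_\frakG\M'$ for every $\M'$ feature equivalent to $\M$ on $\cAo$.
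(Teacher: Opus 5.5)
Your proposal is correct and follows essentially the same route as the paper's proof. You lift feature equivalence through the injectivity of $\widetilde{\bK}$ to $\Push_g\bar\bQ(\cdot\mid a)=\Push_{g'}\bar\bQ'(\cdot\mid a)$, set $\tau=g^{-1}\circ g'$, and then check the concept-side and mixing-side identities. The only difference is minor: you use a single anchor $a_0\in\cAo$ with $\bar\bQ'(\cdot\mid a_0)\sim\refm$, and you read off $\Push_\tau\refm\sim\refm$ directly from the transported law at $a_0$. The paper instead argues under a weaker assumption, namely that some $\pi$-mixture of the observed concept laws over $\cAo$ is equivalent to $\refm$; your anchor argument is the special case $\pi=\delta_{a_0}$ and extends verbatim to it.
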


\begin{proof}[Proof sketch]
We provide a proof sketch; \Cref{appendix:proof-intersection} gives the full proof under a condition weaker than the one introduced in \Cref{def:ref-support}.
Let \(\M=(\bQ,\bB,\bK)\) and \(\M'=(\bQ',\bB,\bK')\) be feature-equivalent on \(\cAo\). By \(\refm\)-Blackwell reducibility, write \(\bK\refmeq \widetilde{\bK}\Push_g\) and \(\bK'\refmeq \widetilde{\bK}\Push_{g'}\), where \(g\) and \(g'\) are bimeasurable injective maps and \(\widetilde{\bK}\) is injective on the relevant pushed-forward laws.
The common-support condition then makes \(\tau=g^{-1}\circ g' \in \Aut_\refm(\cC)\) a well-defined concept transition modulo \(\refm\)-null sets.
We therefore obtain \(\bK'\refmeq\bK\Push_\tau\) and \(\bB\bQ(\cdot\mid a)=\Push_\tau\bB\bQ'(\cdot\mid a)\) for all \(a\in\cAo\).
Hence \(\tau\in\cT_{\bK}^{\refm}(\cK)\cap\cT_{\bar\bQ}^{\refm}(\bar\cQ;\cAo)\), and if this intersection lies in \(\frakG\), the ambiguity is up to \(\sim_{\frakG}\).
\end{proof}

\subsection{Characterizing concept-side transitions}

We now characterize the concept-side valid transitions at the density level.
This refines the transition-intersection criterion by isolating the common proof object behind model-specific identifiability arguments.
Instead of working directly with equality of pushed concept laws, we use density ratios between attribute-conditioned concept distributions; their logarithms are the \emph{attribute potentials}.

\begin{definition}
\label{def:attribute-potential}
Suppose that \(\bB\) and \(\bQ\) together satisfy \Cref{def:ref-support}.
For \(a,a'\in\cA\), define the \defword{attribute potential} of $\bar\bQ = \bB\bQ$ by \(\Delta_{a,a'}^{\bar\bQ}(c)\defeq \log p_{\bar\bQ}(c\mid a)-\log p_{\bar\bQ}(c\mid a')\), which is well-defined \(\refm\)-a.e.
\end{definition}

The following theorem gives an equivalent characterization of the concept-side valid transition set \(\cT_{\bar\bQ}^{\refm}(\bar\cQ;\cAo)\).
This characterization will also be the main tool for extrapolation in \Cref{sec:extrapolation}, where we ask when preservation of observed attribute potentials forces preservation at unseen attributes.

\begin{restatable}[Density characterization of concept transitions]{theorem}{tqcharacterization}
\label{thm:tq-characterization}
Consider a fixed concept modulation kernel $\bB$, and assume that $\bar\cQ' = \bB\cQ$ has common $\refm$-support as defined in \Cref{def:ref-support}.
Let \(\bQ,\bQ' \in \cQ\) be two indexing kernels.
Let \(\cA'\subseteq\cA\), \(\tau\in\Aut_{\refm}(\cC)\), and define $r_{\tau^{-1}} \defeq  d(\Push_{\tau^{-1}}\refm) / d\refm$.
For any fixed anchor \(a_0\in\cA'\) and $\bar\bQ = \bB\bQ$, $\bar\bQ' = \bB\bQ'$, the following are equivalent:
\begin{enumerate}[label=(\roman*),leftmargin=*,itemsep=0pt,topsep=0pt]
    \item \(\bar\bQ_{\cA'}=\Push_\tau\bar\bQ'_{\cA'}\).
    \item For every \(a\in\cA'\), $p_{\bar\bQ'}(c\mid a) = p_{\bar\bQ}(\tau(c)\mid a)r_{\tau^{-1}}(c)$ holds for $\refm$-a.e. $c$.
    \item The anchored density identity $p_{\bar\bQ'}(c\mid a_0)
        =
        p_{\bar\bQ}(\tau(c)\mid a_0)r_{\tau^{-1}}(c)$ holds for $\refm$-a.e. $c$, and for every \(a\in\cA'\), $\Delta_{a,a_0}^{\bar\bQ'}(c) = \Delta_{a,a_0}^{\bar\bQ}(\tau(c))$ holds $\refm$-a.e. $c$.
\end{enumerate}
Consequently, \(\tau\in\cT_{\bar\bQ}^\refm(\bar\cQ;\cAo)\) if and only if there exists \(\bQ'\in\cQ\) such that these equivalent conditions hold with \(\cA'=\cAo\).
\end{restatable}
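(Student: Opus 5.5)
The plan is to prove (i)\(\Leftrightarrow\)(ii) by a change-of-variables computation for densities under \(\tau\), and then (ii)\(\Leftrightarrow\)(iii) by pure algebra on logarithms, which is legitimate thanks to common \(\refm\)-support. The final ``consequently'' clause then follows by unfolding \Cref{def:transition-sets}.

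For (i)\(\Leftrightarrow\)(ii), fix \(a\in\cA'\) and write \(\nu=\bar\bQ(\cdot\mid a)\) and \(\nu'=\bar\bQ'(\cdot\mid a)\). Because \(\tau\in\Aut_\refm(\cC)\), the identity \(\nu=\Push_\tau\nu'\) is equivalent to \(\nu'=\Push_{\tau^{-1}}\nu\), with \(\tau^{-1}\) defined modulo \(\refm\)-null sets. I will compute the density of \(\Push_{\tau^{-1}}\nu\) with respect to \(\refm\). For measurable \(B\), the definition gives \((\Push_{\tau^{-1}}\nu)(B)=\int \mathbf 1_B(\tau^{-1}(y))\,p_{\bar\bQ}(y\mid a)\,\refm(dy)\). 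Substituting \(y=\tau(c)\), that is, viewing \(\refm\) as \(\Push_{\tau}\) of the measure \(\Push_{\tau^{-1}}\refm\), turns this into
\[
\int \mathbf 1_B(c)\,p_{\bar\bQ}(\tau(c)\mid a)\,(\Push_{\tau^{-1}}\refm)(dc)=\int_B p_{\bar\bQ}(\tau(c)\mid a)\,r_{\tau^{-1}}(c)\,\refm(dc).
\]
Here \(r_{\tau^{-1}}\) exists by Radon--Nikodym, since \(\Push_{\tau^{-1}}\refm\sim\refm\) and \(\refm\) is \(\sigma\)-finite. Note also that \(\Push_{\tau^{-1}}\refm\) may fail to be \(\sigma\)-finite a priori, but its equivalence with \(\refm\) together with the existence of an a.e.-positive finite density handles this. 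The computation shows that \(p_{\bar\bQ}(\tau(\cdot)\mid a)\,r_{\tau^{-1}}\) is a version of \(d(\Push_{\tau^{-1}}\nu)/d\refm\). By uniqueness of Radon--Nikodym derivatives, \(\nu'=\Push_{\tau^{-1}}\nu\) holds if and only if the two densities agree \(\refm\)-a.e., which is exactly (ii). One subtlety must be checked: the composition \(p_{\bar\bQ}(\tau(c)\mid a)\) must be well defined \(\refm\)-a.e. even though \(p_{\bar\bQ}\) is only defined up to null sets. This holds because \(\tau^{-1}\) maps \(\refm\)-null sets to \(\refm\)-null sets, which is the null-set preservation in the definition of \(\Aut_\refm(\cC)\). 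I expect this measure-theoretic bookkeeping, including the change of variables with an a.e.-invertible \(\tau\), to be the main obstacle, although it is routine.

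For (ii)\(\Rightarrow\)(iii), the anchored identity is (ii) at \(a=a_0\in\cA'\). By common \(\refm\)-support all densities are in \((0,\infty)\) a.e., and since \(\tau\) preserves null sets, \(p_{\bar\bQ}(\tau(c)\mid\cdot)\) is positive and finite a.e. as well. The two identities also force \(r_{\tau^{-1}}\in(0,\infty)\) a.e.; this is in any case immediate from \(\Push_{\tau^{-1}}\refm\sim\refm\). We may therefore take logarithms of (ii) at \(a\) and at \(a_0\) and subtract. The \(\log r_{\tau^{-1}}\) terms cancel, leaving \(\Delta^{\bar\bQ'}_{a,a_0}(c)=\Delta^{\bar\bQ}_{a,a_0}(\tau(c))\). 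This uses only countably many (here two) a.e. statements per \(a\), so no issue arises from uncountable \(\cA'\).

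For (iii)\(\Rightarrow\)(ii), fix \(a\in\cA'\). Add the log of the anchored identity to the potential identity to get \(\log p_{\bar\bQ'}(c\mid a)=\log p_{\bar\bQ}(\tau(c)\mid a)+\log r_{\tau^{-1}}(c)\) a.e., then exponentiate. This closes the cycle of equivalences.

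Finally, by \Cref{def:transition-sets}, \(\tau\in\cT^\refm_{\bar\bQ}(\bar\cQ;\cAo)\) means that some \(\bar\bQ'=\bB\bQ'\) with \(\bQ'\in\cQ\) satisfies (i) with \(\cA'=\cAo\). By the equivalences above, this holds if and only if (ii) or (iii) holds for that \(\bQ'\).
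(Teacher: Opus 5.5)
Your proposal is correct and follows the paper's proof essentially step for step. You get (i)$\Leftrightarrow$(ii) by rewriting (i) as $\bar\bQ'(\cdot\mid a)=\Push_{\tau^{-1}}\bar\bQ(\cdot\mid a)$ and identifying its $\refm$-density; your explicit change of variables is just the unpacked Radon--Nikodym chain rule the paper cites. The remaining steps also match: (ii)$\Rightarrow$(iii) by taking logarithms and cancelling $\log r_{\tau^{-1}}$, (iii)$\Rightarrow$(ii) by recombining with the anchored identity, and the final clause by unfolding \Cref{def:transition-sets}.

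The only thing to tidy is your $\sigma$-finiteness aside. As phrased it is somewhat circular, and it is also unnecessary. If $\refm(C_n)<\infty$ and $\bigcup_n C_n=\cC$, then the sets $\{c:\tau(c)\in C_n\}$ cover $\cC$ and have $\Push_{\tau^{-1}}\refm$-measure $\refm(C_n)$. Hence $\Push_{\tau^{-1}}\refm$ is automatically $\sigma$-finite, and $r_{\tau^{-1}}$ exists as a finite, a.e.\ positive density.
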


\begin{proof}[Proof sketch]
The equivalence between (i) and (ii) is the Radon--Nikodym chain rule applied to \(\Push_{\tau^{-1}}\).
Under the positivity assumption, (ii) implies (iii) by taking \(a=a_0\), then taking logs of the identities in (ii) for \(a\), \(a_0\) and subtracting to cancel \(r_{\tau^{-1}}(c)\).
Conversely, if (iii) holds, multiplying the anchored density identity by the exponential of the attribute potential gives (ii).
The final statement follows from the definition of \(\cT_{\bar\bQ}^\refm(\bar\cQ;\cAo)\).
\end{proof}

Notably, statement \textit{(iii)} of \Cref{thm:tq-characterization} characterizes the concept-side valid transition set \(\cT_{\bar\bQ}^{\refm}(\bar\cQ;\cAo)\) in terms of anchored density identities and attribute-potential preservation, avoiding the intractable $r_{\tau^{-1}}$ term.
We now illustrate how \Cref{thm:intersection-individual,thm:tq-characterization} recover an identifiability result for the running example.

\begin{example}[Running example, affine recovery]
\label{ex:running-example-theorems}
Consider the running example (\Cref{ex:running-example-setup}). 
Taking \(\widetilde\cX=\cX\) and \(\widetilde{\bK}=\Push_{\mathrm{id}_{\cX}}\), the class \(\cK\) is \(\refm\)-Blackwell reducible with \(\refm\) taken to be Lebesgue measure on \(\bbR^k\).

Fix \(\M=(\bQ,\bB,\bK)\in\cM\), with \(\bQ=\Push_f\) and
\(\bK=\Push_g\).
Fixing \(a_0\in\cAo\), the attribute potential is
\begin{equation}\label{eq:running-example-attribute-potential}
    \Delta^{\bar\bQ}_{a,a_0}(c)
    =
    \inner{f(a)-f(a_0)}{c}
    -
    \{\log Z(f(a))-\log Z(f(a_0))\},
\end{equation}
and \Cref{thm:tq-characterization} implies that for any \(\tau\in\cT_{\bar\bQ}^{\refm}(\bar\cQ;\cAo)\) there must exist \(\bQ'\in\cQ\) such that $\Delta_{a, a_0}^{\bar\bQ'}(c) = \Delta_{a, a_0}^{\bar\bQ}(\tau(c))$.
In particular, if $\Span\setbuild{f(a)-f(a_0)}{a\in\cAo}=\bbR^k$, then preservation forces \(\tau\) to be affine, as we show in \Cref{app:running-example-details}. 
Hence $\cT_{\bar\bQ}^{\refm}(\bar\cQ;\cAo) \subseteq \mathrm{Affine}(k)\cap\Aut_{\refm}(\bbR^k)$, where $\mathrm{Affine}(k)$ is the set of affine transformations on $\bbR^k$.
Therefore,
\[
    \cT_{\bK}^{\refm}(\cK)\cap\cT_{\bar\bQ}^{\refm}(\bar\cQ;\cAo)
    \subseteq
    \cT_{\bar\bQ}^{\refm}(\bar\cQ;\cAo)
    \subseteq
    \frakG \coloneqq \mathrm{Affine}(k)\cap\Aut_{\refm}(\bbR^k).
\]
Since the exponential-family concept laws have positive densities, the support
condition in \Cref{thm:intersection-individual} holds. 
Therefore, \Cref{thm:intersection-individual} implies that the running example is
identifiable up to $\sim_{\frakG}$, i.e., invertible affine transformations of the concept space.
\end{example}

Many existing identifiability results impose richness assumptions under which the observed contrasts have enough rank to identify the entire latent representation up to the target ambiguity class.
The attribute-potential characterization also describes what remains identifiable when this rank condition fails: the observed contrast equations constrain only the component of the latent transition visible through the observed contrast span.

\begin{remark}[Partial identifiability under incomplete contrast span]
Consider the CMM class from \Cref{ex:running-example-setup,ex:running-example-theorems}, but let $V=\Span\setbuild{f(a)-f(a_0)}{a\in\cAo}\subsetneq\bbR^k$.
Then feature equivalence need not force the latent transition \(\tau\) to be
affine on all of \(\bbR^k\). 
Nevertheless, the projection identity derived in \Cref{app:running-example-details}, $\Pi_V\tau(c)=M\Pi_{V'}c+b$, where $\Pi_V$ and $\Pi_{V'}$ are projection matrices onto $V$ and $V'$, shows that the component of the projected concept in the observed contrast subspace is still identified up to an affine map. 
If the observed contrast subspace is fixed across the class, i.e., \(\Span\setbuild{f(a)-f(a_0)}{a\in\cAo}=V_\cF\) for every \(f\in\cF\), then the latent transition \(\tau\) belongs to a group whose elements may be arbitrary off \(V_\cF\) but must act affinely after projection onto \(V_\cF\).
\end{remark}


\section{Extrapolation via attribute potentials}\label{sec:extrapolation}

Extrapolation asks when predictive agreement on observed attributes extends beyond them: if two CMMs agree on conditional feature laws over \(\cAo\), when must they also agree on \(\cAex\supseteq\cAo\)?
Unlike identifiability, which characterizes latent transitions compatible with feature equivalence on \(\cAo\), extrapolation asks whether the induced transition also enforces agreement outside \(\cAo\).
For a transition \(\tau\) from \Cref{thm:intersection-individual}, with \(\bK'\refmeq\bK\Push_\tau\) and \(\bar\bQ_{\cAo}=\Push_\tau\bar\bQ'_{\cAo}\), the mixing side remains fixed as the attribute set expands. 
Therefore, extrapolation reduces to whether this concept-side relation extends from \(\cAo\) to \(\cAex\).
By \Cref{thm:tq-characterization}, this is equivalent to preserving attribute potentials, as formalized next.

\begin{restatable}[Attribute-potential characterization of extrapolation]{theorem}{extrapolationdeltaiff}
\label{thm:extrapolation-delta-iff}
Let \(\cM=\cQ\times\{\bB\}\times\cK\) be a CMM class, and let \(\cAo\subseteq\cAex\subseteq\cA\).
Assume that \(\cK\) is \(\refm\)-Blackwell reducible and that \(\bar\cQ = \bB\cQ\) has common \(\refm\)-support.
Fix \(a_0\in\cAo\), and let \(\M=(\bQ,\bB,\bK)\) and \(\M'=(\bQ',\bB,\bK')\) be feature-equivalent on \(\cAo\), and let \(\tau\) be a transition induced by \Cref{thm:intersection-individual}, so that \(\bK'\refmeq\bK\Push_\tau\) and \(\bar\bQ_{\cAo}=\Push_\tau\bar\bQ'_{\cAo}\) where $\bar\bQ = \bB\bQ$, $\bar\bQ' = \bB\bQ'$.
Then, for this transition \(\tau\),
\[
    \bP^\M_{\cAex}=\bP^{\M'}_{\cAex}
    \quad\Longleftrightarrow\quad
    \Delta_{a,a_0}^{\bar\bQ'}(c)=\Delta_{a,a_0}^{\bar\bQ}(\tau(c))
    \text{ for every }a\in\cAex,\ \refm\text{-a.e. }c.
\]
Thus, if the identities $\Delta_{a,a_0}^{\bar\bQ'}(c)=\Delta_{a,a_0}^{\bar\bQ}(\tau(c))$ that hold on \(\cAo\) extend over \(\cAex\), then \(\bP^\M_{\cAex}=\bP^{\M'}_{\cAex}\).
\end{restatable}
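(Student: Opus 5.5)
We reduce the feature-level equivalence on \(\cAex\) to a concept-level equivalence, using the mixing-side relation and Blackwell reducibility. Then we apply \Cref{thm:tq-characterization} with \(\cA'=\cAex\).

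The first step is to show that, for the fixed \(\tau\), \(\bP^\M_{\cAex}=\bP^{\M'}_{\cAex}\) holds if and only if \(\bar\bQ_{\cAex}=\Push_\tau\bar\bQ'_{\cAex}\).

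For the ``if'' direction, fix \(a\in\cAex\). By common \(\refm\)-support, \(\bar\bQ'(\cdot\mid a)\ll\refm\), so \(\bK'\refmeq\bK\Push_\tau\) gives
\[
\bK'\bar\bQ'(\cdot\mid a)=\bK\Push_\tau\bar\bQ'(\cdot\mid a)=\bK\bar\bQ(\cdot\mid a).
\]

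For the ``only if'' direction, write \(\bK\refmeq\widetilde\bK\Push_g\) and \(\bK'\refmeq\widetilde\bK\Push_{g'}\) as in the proof of \Cref{thm:intersection-individual}, with \(\tau=g^{-1}\circ g'\) modulo \(\refm\)-null sets. Feature agreement at \(a\) gives
\[
\widetilde\bK\Push_g\bar\bQ(\cdot\mid a)=\widetilde\bK\Push_{g'}\bar\bQ'(\cdot\mid a).
\]
Both arguments lie in \(\{\Push_g\mu:g\in\cG,\ \mu\ll\refm\}\), so the injectivity of \(\widetilde\bK\) there yields \(\Push_g\bar\bQ(\cdot\mid a)=\Push_{g'}\bar\bQ'(\cdot\mid a)\). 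Since \(g\) is a bimeasurable embedding, applying \(\Push_{g^{-1}}\) on the image of \(g\) gives
\[
\bar\bQ(\cdot\mid a)=\Push_{g^{-1}\circ g'}\bar\bQ'(\cdot\mid a)=\Push_\tau\bar\bQ'(\cdot\mid a).
\]
Here \(\Push_{g'}\bar\bQ'(\cdot\mid a)\) is concentrated on the image of \(g\), because it equals \(\Push_g\bar\bQ(\cdot\mid a)\).

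One subtlety is that we must use the specific \(\tau\) produced by \Cref{thm:intersection-individual}, namely \(g^{-1}\circ g'\). If \(\tau\) were just any transition with \(\bK'\refmeq\bK\Push_\tau\), we would instead argue via \(\bK\refmeq\widetilde\bK\Push_g\): injectivity yields \(\Push_g\bar\bQ=\Push_g\Push_\tau\bar\bQ'\), and injectivity of \(g\) then gives \(\bar\bQ=\Push_\tau\bar\bQ'\). This needs \(\Push_\tau\bar\bQ'\ll\refm\), which holds because \(\tau\in\Aut_\refm(\cC)\). So either route works, and I would present the second one since it avoids re-entering the construction.

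The second step applies \Cref{thm:tq-characterization} with \(\cA'=\cAex\) and anchor \(a_0\in\cAo\subseteq\cAex\). Condition (i) there is equivalent to (iii): the anchored density identity at \(a_0\) together with \(\Delta^{\bar\bQ'}_{a,a_0}=\Delta^{\bar\bQ}_{a,a_0}\circ\tau\) for all \(a\in\cAex\). The anchored identity at \(a_0\) already holds, because (i)\(\Rightarrow\)(ii) of the same theorem applied on \(\cAo\) gives it from \(\bar\bQ_{\cAo}=\Push_\tau\bar\bQ'_{\cAo}\). So (iii) on \(\cAex\) reduces to the potential identities alone. Chaining this with the first step gives the stated equivalence, and the final sentence of the theorem is just its ``\(\Leftarrow\)'' direction.

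The main obstacle is the ``only if'' part of the first step. It requires correctly invoking injectivity of \(\widetilde\bK\) on pushed laws with \(\refm\)-dominated arguments, and handling the ``modulo \(\refm\)-null sets'' definition of \(\tau\) so that \(\Push_\tau\bar\bQ'(\cdot\mid a)\) is well defined and dominated by \(\refm\). Both are ensured by \(\tau\in\Aut_\refm(\cC)\) and common support.
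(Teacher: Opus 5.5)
Your proposal is correct and matches the paper's proof: you reduce feature agreement on $\cAex$ to $\bar\bQ_{\cAex}=\Push_\tau\bar\bQ'_{\cAex}$ using $\bK'\refmeq\bK\Push_\tau$ together with injectivity of $\bK$ on $\refm$-dominated laws, and the paper argues exactly this way, as in your second route. You then apply \Cref{thm:tq-characterization} with $\cA'=\cAex$ and observe that the anchored identity at $a_0$ already follows from $a_0\in\cAo$, which is also what the paper does.
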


We defer the proof of \Cref{thm:extrapolation-delta-iff} to \Cref{appendix:proof-extrapolation-delta-iff}.
The remaining work is to provide class-specific arguments showing that the transported attribute-potential identities on \(\cAo\) force the corresponding identities on \(\cAex\).
This implication is not automatic: in the running exponential-family CMM with unrestricted indexing map \(f\colon\cA\to\bbR^k\), one may keep \(f(a)\) unchanged for all \(a\in\cAo\) while altering \(f(\aex)\) arbitrarily at an unseen attribute \(\aex\notin\cAo\).
The observed conditional feature laws are unchanged, but \(p(X\mid A=\aex)\) can change.
Thus extrapolation requires structural assumptions tying unseen attribute potentials to observed ones.
The following theorem gives one such condition, requiring affine dependence through a fixed attribute representation \(\varphi\).

\begin{restatable}[Attribute-potential preservation under affine attribute-potential differences]{theorem}{affineindexingextrapolation}
\label{thm:extrapolation-affine-indexing}
Let \(\cM=\cQ\times\{\bB\}\times\cK\) be a CMM class satisfying the assumptions of \Cref{thm:intersection-individual,thm:tq-characterization}.
Let \(\cAo\subseteq\cAex\subseteq\cA\), and fix \(a_0\in\cAo\).
Suppose there exists a fixed attribute representation \(\varphi\colon\cA\to\bbR^m\) such that, for every $\bar\bQ \in \bar\cQ = \bB\cQ$, there exists a measurable map \(D^{\bar\bQ}\colon\cC\times\cC\to\bbR^m\) satisfying
\[
    \Delta_{a,a_0}^{\bar{\bQ}}(c)-\Delta_{a,a_0}^{\bar{\bQ}}(c_0)
    =
    \inner{\varphi(a)-\varphi(a_0)}{D^{\bar\bQ}(c,c_0)}
\]
for every \(a\in\cAex\) and \(\refm \otimes \refm\)-a.e. \((c,c_0)\).
Let \(\M = (\bQ, \bB, \bK)\) and \(\M' = (\bQ', \bB, \bK')\) be feature-equivalent on \(\cAo\), and let \(\tau\) be the transition induced by \Cref{thm:intersection-individual}.
If \(\varphi(\cAex)\) is contained in the affine hull of \(\varphi(\cAo)\), then the transported attribute-potential identities extend from \(\cAo\) to \(\cAex\): for every \(a\in\cAex\), $\Delta_{a,a_0}^{\bar{\bQ}'}(c) = \Delta_{a,a_0}^{\bar{\bQ}}(\tau(c))$ for \(\refm\)-a.e. \(c\).
\end{restatable}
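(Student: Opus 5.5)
The plan is to use the affine-hull assumption to write the unseen attribute potentials as linear combinations of observed ones, transport these combinations through \(\tau\), and then remove a leftover additive constant using normalization of the densities.

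\textbf{Step 1: reduce to centered potentials on \(\cAo\).} Fix \(a\in\cAex\). Since \(\varphi(a)\) lies in the affine hull of \(\varphi(\cAo)\), there are finitely many \(a_1,\dots,a_n\in\cAo\) and scalars \(\lambda_i\) with \(\sum_i\lambda_i=1\) and \(\varphi(a)=\sum_i\lambda_i\varphi(a_i)\). Because the weights sum to one, \(\varphi(a)-\varphi(a_0)=\sum_i\lambda_i(\varphi(a_i)-\varphi(a_0))\). Applying the hypothesis to \(\bar\bQ\) and to \(\bar\bQ'\), each with its own \(D^{\bar\bQ}\) or \(D^{\bar\bQ'}\), and using linearity of the inner product, I get for \(\refm\otimes\refm\)-a.e. \((c,c_0)\)
\[
\Delta^{\bar\bQ'}_{a,a_0}(c)-\Delta^{\bar\bQ'}_{a,a_0}(c_0)=\sum_i\lambda_i\bigl[\Delta^{\bar\bQ'}_{a_i,a_0}(c)-\Delta^{\bar\bQ'}_{a_i,a_0}(c_0)\bigr].
\]
The same identity holds with \(\bar\bQ\) in place of \(\bar\bQ'\). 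This uses that \(\cAo\subseteq\cAex\), so the hypothesis applies at each \(a_i\); only finitely many null sets are involved.

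\textbf{Step 2: transport through \(\tau\).} I evaluate the \(\bar\bQ\)-identity at \((\tau(c),\tau(c_0))\). This is legitimate because \(\tau\in\Aut_\refm(\cC)\) preserves \(\refm\)-null sets in both directions. Hence \(\tau\times\tau\) preserves \(\refm\otimes\refm\)-null sets, since \(\Push_{\tau\times\tau}(\refm\otimes\refm)=\Push_\tau\refm\otimes\Push_\tau\refm\sim\refm\otimes\refm\). By \Cref{thm:tq-characterization}(iii) applied on \(\cAo\) (valid by \Cref{thm:intersection-individual}), we have \(\Delta^{\bar\bQ'}_{a_i,a_0}=\Delta^{\bar\bQ}_{a_i,a_0}\circ\tau\) \(\refm\)-a.e. for each \(i\). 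Chaining the two identities from Step 1 then gives
\[
\Delta^{\bar\bQ'}_{a,a_0}(c)-\Delta^{\bar\bQ}_{a,a_0}(\tau(c))=\Delta^{\bar\bQ'}_{a,a_0}(c_0)-\Delta^{\bar\bQ}_{a,a_0}(\tau(c_0))
\]
for \(\refm\otimes\refm\)-a.e. \((c,c_0)\). By Fubini (using \(\sigma\)-finiteness of \(\refm\)), I fix a good \(c_0\). This shows \(\Delta^{\bar\bQ'}_{a,a_0}(c)=\Delta^{\bar\bQ}_{a,a_0}(\tau(c))+\kappa_a\) for \(\refm\)-a.e. \(c\), with \(\kappa_a\) a finite constant.

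\textbf{Step 3: show \(\kappa_a=0\).} I combine Step 2 with the anchored identity \(p_{\bar\bQ'}(c\mid a_0)=p_{\bar\bQ}(\tau(c)\mid a_0)\,r_{\tau^{-1}}(c)\) from \Cref{thm:tq-characterization}(iii). Multiplying by \(e^{\Delta}\) gives
\[
p_{\bar\bQ'}(c\mid a)=e^{\kappa_a}\,p_{\bar\bQ}(\tau(c)\mid a)\,r_{\tau^{-1}}(c).
\]
Now I integrate against \(\refm\). The left side integrates to \(1\). On the right, \(p_{\bar\bQ}(\tau(\cdot)\mid a)\,r_{\tau^{-1}}\) is exactly the \(\refm\)-density of \(\Push_{\tau^{-1}}\bar\bQ(\cdot\mid a)\); this is the same Radon--Nikodym chain-rule computation that gives (i)\(\Leftrightarrow\)(ii) in \Cref{thm:tq-characterization}. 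Hence the right side integrates to \(e^{\kappa_a}\), which forces \(\kappa_a=0\).

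The main obstacle I expect is the measure-theoretic bookkeeping in Step 2. One must ensure that composing with \(\tau\) in both coordinates keeps the product-a.e. statements valid, and that the Fubini slicing yields a single constant \(\kappa_a\) rather than a \(c_0\)-dependent family. Step 3 is conceptually the key point, since potentials only determine densities up to normalization, but it is a routine change of variables.
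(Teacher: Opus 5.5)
Your proposal is correct and follows the paper's proof essentially step for step: an affine expansion of the centered potentials for both models, transport of the observed identities through $\tau$ using null-set preservation, Fubini to reduce the discrepancy to an additive constant, and normalization via the anchored density identity to force that constant to zero. Your explicit check that $\tau\times\tau$ preserves product-null sets is slightly more careful than the paper, which only asserts it.
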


\begin{proof}[Proof sketch]
We defer the full proof to \Cref{appendix:proof-extrapolation-affine-indexing}.
For any \(\aex\in\cAex\), we can write \(\varphi(\aex)=\sum_i\alpha_i\varphi(a_i)\), where \(a_i\in\cAo\) and \(\sum_i\alpha_i=1\).
For any $\M \in \cM$, we have the centered potential \(\Delta_{a,a_0}^{\bar{\bQ}}(c)-\Delta_{a,a_0}^{\bar{\bQ}}(c_0)=\inner{\varphi(a)-\varphi(a_0)}{D^{\bar\bQ}(c, c_0)}\), so the same quantity at \(\aex\) is an affine combination of the centered potentials at the \(a_i\)'s, and the same relation holds for \(\bar{\bQ}'\).
Statement \textit{(iii)} of \Cref{thm:tq-characterization} transports the observed identities, hence the affine expansion gives \(\Delta_{\aex,a_0}^{\bar{\bQ}'}(c)-\Delta_{\aex,a_0}^{\bar{\bQ}'}(c_0)=\Delta_{\aex,a_0}^{\bar{\bQ}}(\tau(c))-\Delta_{\aex,a_0}^{\bar{\bQ}}(\tau(c_0))\).
The density identity at \(a_0\) and normalization at \(\aex\) together force \(\Delta_{\aex,a_0}^{\bar{\bQ}'}(c)=\Delta_{\aex,a_0}^{\bar{\bQ}}(\tau(c))\).
\end{proof}

The point of \Cref{thm:extrapolation-affine-indexing} is that extrapolation is driven by the linear structure of attribute potentials, not by linearity of the raw attribute space.
The representation \(\varphi\) may encode nonlinear or combinatorial structure, such as interaction terms among atomic attributes, as in factorial designs \citep{Fisher1935design} and combinatorial intervention models \citep{odonnell2008some,agarwal2023synthetic}.
Thus, the theorem lifts feature-equivalence on finitely many interactions to extrapolation guarantees over a much larger combinatorial attribute space, as stated in the following corollary.

\begin{restatable}[Interaction-basis extrapolation]{corollary}{interactionextrapolation}
\label{cor:interaction-extrapolation}
Let $\cM = \cQ \times \{\bB\} \times \cK$ satisfy the assumptions of \Cref{thm:extrapolation-delta-iff}.
Let \(\cA=2^{[m]}\), let \(\cH\subseteq 2^{[m]}\) be a finite downward-closed family containing \(\varnothing\), i.e., \(H\in\cH\) and \(U\subseteq H\) imply \(U\in\cH\), and set \(\cAo=\cH\).
Assume every \(\M\in\cM\) induces concept densities of the form $p^\M(c\mid S) =  \exp\left(\sum_{T\in\cH,\,T\subseteq S} h_T^\M(c)\right) / Z^\M(S)$.
Then feature equivalence on \(\cAo\) implies feature equivalence on all \(S\subseteq[m]\).
\end{restatable}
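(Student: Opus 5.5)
The plan is to apply \Cref{thm:extrapolation-affine-indexing} with the interaction feature map $\varphi\colon 2^{[m]}\to\bbR^{\cH\setminus\{\varnothing\}}$ given by $\varphi(S)_T=\mathbf{1}\{T\subseteq S\}$ for $T\in\cH\setminus\{\varnothing\}$. Then \Cref{thm:extrapolation-delta-iff} converts the extended potential identities into feature equivalence on $\cAex=2^{[m]}$.

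\textbf{Anchor and potentials.} Take the anchor $a_0=\varnothing\in\cAo$; then $\varphi(\varnothing)=0$. For any $\M$ with induced $\bar\bQ$, the assumed form gives
\[
\Delta^{\bar\bQ}_{S,\varnothing}(c)=\sum_{T\in\cH\setminus\{\varnothing\},\,T\subseteq S}h_T^\M(c)-\log Z^\M(S)+\log Z^\M(\varnothing).
\]
Subtracting the value at $c_0$ cancels the normalizers:
\[
\Delta^{\bar\bQ}_{S,\varnothing}(c)-\Delta^{\bar\bQ}_{S,\varnothing}(c_0)=\inner{\varphi(S)-\varphi(\varnothing)}{D^{\bar\bQ}(c,c_0)},
\qquad D^{\bar\bQ}(c,c_0)_T=h^\M_T(c)-h^\M_T(c_0).
\]
This is exactly the structural hypothesis of \Cref{thm:extrapolation-affine-indexing}, with $D$ measurable. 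It holds for every $S\subseteq[m]$. The $h_T$ may depend on $\M$, but that is allowed because $D^{\bar\bQ}$ is allowed to depend on $\bar\bQ$. A minor point to address: the densities here are with respect to $\refm$, and common support guarantees the potentials are finite a.e.

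\textbf{Affine hull condition.} This is the main obstacle. I need to show that $\varphi(S)\in\mathrm{aff}\,\varphi(\cH)$ for every $S\subseteq[m]$. Since $\varphi(\varnothing)=0\in\varphi(\cH)$, the affine hull equals the linear span, so it suffices to show that $\{\varphi(H)\}_{H\in\cH}$ spans $\bbR^{\cH\setminus\{\varnothing\}}$.

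Consider the square matrix $M_{H,T}=\mathbf{1}\{T\subseteq H\}$ for $H,T\in\cH\setminus\{\varnothing\}$. Downward closure guarantees that every row index $H$ is a legitimate column index. Order $\cH$ by a linear extension of inclusion. Then $M$ is triangular with unit diagonal, so it is invertible by Möbius inversion on the poset $\cH$. Hence the vectors $\varphi(H)$, $H\in\cH\setminus\{\varnothing\}$, form a basis, and every $\varphi(S)$ lies in their span. Explicitly,
\[
\varphi(S)=\sum_{H\in\cH}\mu_S(H)\,\varphi(H),\qquad \mu_S(H)=\sum_{U\in\cH,\ H\subseteq U\subseteq S}(-1)^{|U\setminus H|}.
\]
The coefficient on $\varnothing$ can then be adjusted so that the coefficients sum to one.

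\textbf{Conclusion.} Let $\M,\M'$ be feature-equivalent on $\cAo=\cH$, and let $\tau$ be the transition from \Cref{thm:intersection-individual}. \Cref{thm:extrapolation-affine-indexing} yields $\Delta^{\bar\bQ'}_{S,\varnothing}=\Delta^{\bar\bQ}_{S,\varnothing}\circ\tau$ $\refm$-a.e. for all $S$. The backward direction of \Cref{thm:extrapolation-delta-iff} then gives $\bP^\M_{2^{[m]}}=\bP^{\M'}_{2^{[m]}}$. Finally, I need to check that the assumptions of \Cref{thm:tq-characterization} are implied by those of \Cref{thm:extrapolation-delta-iff}; both require common $\refm$-support, so this holds.
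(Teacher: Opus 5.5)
Your proof is correct and follows essentially the same route as the paper. The shared steps are the interaction-incidence representation $\varphi(S)_T=\mathbf{1}\{T\subseteq S\}$ with $D_T=h_T(c)-h_T(c_0)$, invertibility of the unitriangular zeta matrix on $\cH$ to put every $\varphi(S)$ in the affine hull of $\varphi(\cH)$, and then \Cref{thm:extrapolation-affine-indexing} followed by the backward direction of \Cref{thm:extrapolation-delta-iff}.

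The differences are cosmetic. You drop the $\varnothing$-coordinate, so $\varphi(\varnothing)=0$ and the affine hull is the span. The paper instead keeps that coordinate and reads off $\sum_U\alpha_U(S)=1$ from it. Also, downward closure is not what makes your matrix square or invertible, since that holds for any finite $\cH$. It is needed only to justify your explicit M{\"o}bius coefficients $\mu_S(H)$.
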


\begin{proof}[Proof sketch]    
We defer a full proof to \Cref{appendix:proof-interaction-extrapolation}.
Denoting \(\mathbf{1}\) by the indicator function, we have \(\Delta_{S,\varnothing}^\M(c)-\Delta_{S,\varnothing}^\M(c_0)=\inner{\varphi_\cH(S)-\varphi_\cH(\varnothing)}{D^\M(c,c_0)}\), where \(\varphi_\cH(S)=(\mathbf{1}\{T\subseteq S\})_{T\in\cH}\) and \(D_T^\M(c,c_0)=h_T^\M(c)-h_T^\M(c_0)\) for \(T\neq\varnothing\) and \(D_\varnothing^\M(c,c_0)=0\).
One can show that \(\varphi_\cH(S)\) is an affine combination of \(\setbuild{\varphi_\cH(U)}{U\in\cH}\) for any $S \subseteq [m]$.
\Cref{thm:extrapolation-affine-indexing,thm:extrapolation-delta-iff} yield desired result.
\end{proof}

We close the section by recording an implication of \Cref{thm:extrapolation-affine-indexing} for the running example.

\begin{example}[Running example: extrapolation]\label{ex:running-example-extrapolation}
For the running exponential-family CMM, assume that $f(a) = W_f \varphi(a)$ for a matrix \(W_f\) depending on \(f\) and a fixed attribute representation $\varphi$.
This satisfies the attribute-potential assumption of \Cref{thm:extrapolation-affine-indexing} with $D^{\bar\bQ}(c, c_0) = W_f^\top (c - c_0)$.
Consequently, by \Cref{thm:extrapolation-delta-iff}, feature equivalence on \(\cAo\) implies feature equivalence at every target \(\aex\in\cAex\) whose \(\varphi(\aex)\) is an affine combination of \(\setbuild{\varphi(a)}{a\in\cAo}\).
\end{example}

\section{Applications: Recoveries and examples}\label{sec:applications}

In this section, we apply the transition-and-potential characterizations from \Cref{thm:intersection-individual,thm:tq-characterization} and the extrapolation criteria from \Cref{thm:extrapolation-delta-iff,thm:extrapolation-affine-indexing} to representative model classes.
Identifiability follows by combining the induced attribute potentials with class-specific rigidity arguments.
Extrapolation follows by checking whether the transported potential identities observed on \(\cAo\) force the corresponding identities on target attributes in \(\cAex\).

\paragraph{Nonlinear ICA and iVAE.}
Nonlinear ICA with auxiliary variables \citep{hyvarinen2019nonlinear} and iVAE \citep{khemakhem2020variational} fit the CMM template by taking the attribute \(a\in\cA\) to be the auxiliary or conditioning variable, the concept \(c=(c_1,\ldots,c_k)\in\cC\) to be the latent source vector, and the feature \(x\in\cX\) to be the nonlinear observation.
In the conditionally independent source setting, the attribute potential decomposes as \(\Delta^{\bar\bQ}_{a,a_0}(c)=\sum_{i=1}^k[\log q_i(c_i\mid a)-\log q_i(c_i\mid a_0)]\), so the CMM contrast is exactly the coordinate-wise auxiliary-variable contrast used in nonlinear ICA and iVAE.
Under the variability assumptions of \citet{hyvarinen2019nonlinear}, preservation of these potentials rules out mixing across source coordinates, forcing \(\tau(c)=(\phi_1(c_{\pi(1)}),\ldots,\phi_k(c_{\pi(k)}))\) up to a permutation \(\pi\) and invertible scalar maps \(\phi_i\).
More details on recoveries are given in \Cref{appendix:proof-nonlinear-ica}.

\paragraph{Causal representation learning.}
Causal representation learning fits the CMM template by taking \(a\in\cA\) to be an environment or intervention label, \(c=(c_1,\ldots,c_k)\in\prod_{i=1}^k\cC_i\) to be the latent causal representation, and \(x\in\cX\) to be the observed representation.
We work in the common-support regime, so the induced joint concept laws have positive densities with respect to a common reference measure and the attribute potentials in \Cref{thm:tq-characterization} are well defined.
The modulator is the tuple of local mechanisms \(\modval=(p_i)_{i=1}^k\), indexed by a directed acyclic graph (DAG) \(G_{\modval}\), with \(\modclass=\bigsqcup_{G\in\mathrm{DAG}([k])}\prod_{i=1}^k\MK(\cC_{\pa_G(i)}\to\cC_i)\) and \(\bB(dc\mid\modval)=\prod_{i=1}^k p_i(c_i\mid c_{\pa_{G_{\modval}}(i)})\,dc\).
Here, $\mathrm{DAG}([k])$ is the set of DAGs on $[k]$, \(\pa_G(i)\) is the parent set of node \(i\) in $G$, \(\cC_S\coloneqq\prod_{i\in S}\cC_i\), and the indexing kernel \(\bQ=\Push_f\) maps each environment label to its active tuple of local mechanisms.

For an intervention label \(a\), let \(\operatorname{tar}(a)\subseteq[k]\) denote its target nodes, and write \(p_i^a\) for the local mechanism active at node \(i\) under \(a\): it satisfies \(p_i^a=p_i^0\) for \(i\notin\operatorname{tar}(a)\), while \(p_i^a\) may differ from \(p_i^0\) for \(i\in\operatorname{tar}(a)\).
Fixing an observational label \(a_0\in\cAo\) with mechanisms \((p_i^0)_{i=1}^k\), the attribute potential becomes
$\Delta_{a,a_0}^{\bar{\bQ}}(c) = \sum_{i\in\operatorname{tar}(a)}[\log p_i^a(c_i\mid c_{\pa_G(i)}) - \log p_i^0(c_i\mid c_{\pa_G(i)})]$.
Thus, the attribute potential records exactly the local mechanism contrasts induced by the intervention.
Across CRL variants, identifiability follows from showing that transitions preserving these contrasts must belong to the target ambiguity class: Gaussian CRL \citep{Buchholz2023learning}, interventional-to-observational density ratios in nonparametric CRL \citep{von2023nonparametric}, and gradients of environment ratios in score-based CRL \citep{varici2024linear,varici2025score}.

For extrapolation, the additional structure is supplied by the independent causal mechanisms (ICM) principle \citep{peters2017elements,Scholkopf2021causal}: interventions may replace selected local mechanisms while leaving the remaining mechanisms unchanged.
Let $J \subseteq [k]$ be a set of intervention labels such that \(\setbuild{a_j}{j \in J}\subseteq\cAo\) have pairwise disjoint targets, i.e., \(\operatorname{tar}(a_j)\cap\operatorname{tar}(a_{j'})=\varnothing\) for all \(j\neq j'\).
The associated composite intervention \(a_J\) is defined by \(p_i^{a_J}=p_i^{a_j}\) if \(i\in\operatorname{tar}(a_j)\) for the unique \(j \in J\), and \(p_i^{a_J}=p_i^0\) if \(i\notin\bigcup_{j \in J}\operatorname{tar}(a_j)\).
Under the ICM principle, non-target mechanisms cancel against the anchor and the target sets do not overlap, so the composite potential is additive: \(\Delta_{a_J,a_0}^{\bar{\bQ}}(c)=\sum_{j \in J}\Delta_{a_j,a_0}^{\bar{\bQ}}(c)\).
Therefore, if the transported identities \(\Delta_{a_j,a_0}^{\bar{\bQ}'}(c)=\Delta_{a_j,a_0}^{\bar{\bQ}}(\tau(c))\) hold for all \(j \in J\), additivity gives the same identity for \(a_J\).
By \Cref{thm:extrapolation-delta-iff}, feature equivalence on the observed environments then extrapolates to the composite intervention \(a_J\). 
We note that this is analogous to the intervention-generalization strategy of \citet{Bravo2023intervention}, which studies extrapolation under factorization assumptions on the interventional factor models.

\paragraph{Perturbation modeling.}
Perturbation modeling in \citet{von2025representation} fits the CMM template by taking the attribute \(a\in\cA = \bbR^K\) to be a perturbation label, the concept \(c\in\cC = \bbR^k\) to be the perturbation-relevant latent state, and the feature \(x\in\cX\) to be the observed measurement.
The deterministic indexing kernel is \(\bQ=\Push_f\), with \(f(a)=W(a-a_0)\), and the shared concept-modulation kernel is \(\bB(dc\mid\modval)=\cN(\modval,I)\), yielding the centered attribute potential to be \(\Delta_{a,a_0}^{\bar{\bQ}}(c)-\Delta_{a,a_0}^{\bar{\bQ}}(c_0)=\inner{a-a_0}{W^\top(c-c_0)}\).
Using this quantity, \Cref{thm:tq-characterization} followed by model-specific arguments provided by \citet{von2025representation}, we recover the identifiability guarantee of \citet{von2025representation} under the sufficient-diversity condition \(\Span\setbuild{W(a-a_0)}{a\in\cAo}=\bbR^k\): the perturbation effect matrix is identifiable up to an orthogonal transformation.
For extrapolation under the same condition, the centered potential satisfies the conditions of \Cref{thm:extrapolation-affine-indexing} with \(\varphi(a)=a\) and \(D^{\bar\bQ}(c,c_0)=W^\top(c-c_0)\).
\Cref{thm:extrapolation-affine-indexing} justifies extrapolation to any \(\aex-a_0\in\Span\setbuild{a-a_0}{a\in\cAo}\).
\section{Discussion}\label{sec:discussion}

CMMs provide a common language for studying identifiability and extrapolation through attribute potentials.
Their main role is to separate the generic transition-and-contrast step from model-specific rigidity arguments: feature agreement first induces a latent transition, and the remaining question is whether the model class forces this transition to preserve the relevant attribute potentials.

Our current formulation uses attribute potentials as log-density ratios under a common-support assumption.
This covers many exponential-family, soft-intervention, and stochastic hard-intervention examples, but does not cover deterministic do-interventions or other support-changing mechanisms.
Extending the theory to singular or partially overlapping concept laws would broaden its applicability to causal representation learning without changing the central transition-and-contrast viewpoint.

A further direction is to use CMMs as identifiability guidance for weakly supervised and constraint-aware representation learning.
Recent neurosymbolic and concept-based systems seek to learn intermediate symbolic or concept-level representations from indirect supervision, such as labels, answers, captions, or consistency constraints \citep{duan2023deeplogic,daniele2023deep,barbiero2023interpretable,oikarinen2023label}.
These works show that symbolic or concept-level structure can guide representation learning, but their results typically concern optimization, interpretability, or empirical generalization, rather than rigorous identifiability and extrapolation guarantees.
Our framework suggests one route toward such guarantees by specifying how tasks or attributes modulate latent concept laws and characterizing the induced attribute potentials.

\section*{Acknowledgements}
This research was developed with funding from the Defense Advanced Research Projects Agency (DARPA) via HR0011-25-3-0239, FA8750-23-2-1015, ONR via N00014-23-1-2368, and NSF via IIS-1909816.

\bibliographystyle{plainnat}
\bibliography{bib_refined}


\clearpage

\crefalias{section}{appendix}
\crefalias{subsection}{appendix}
{
\hypersetup{linkcolor=blue}
\renewcommand{\contentsname}{Contents of Appendix}
\tableofcontents
}
\addtocontents{toc}
{\protect\setcounter{tocdepth}{2}}
\newpage
\appendix
\thispagestyle{empty}

\section{Related works}
\label{appendix:related-works}

\subsection{Identifiability in latent-variable representation learning}
Identifiability in latent-variable representation learning asks when latent structure is determined by observed data, up to an allowable ambiguity class such as permutation, component-wise transformations, or affine transformations.
A canonical example is nonlinear ICA, where statistically independent latent components are observed only through a nonlinear mixing map.
Without additional structure, nonlinear ICA and unsupervised disentanglement are generally unidentifiable \citep{locatello2019challenging}.
A central way to restore identifiability is to use auxiliary or conditioning information.
\citet{hyvarinen2019nonlinear} show that an observed auxiliary variable can identify nonlinear ICA representations when latent components are conditionally independent and sufficiently modulated by the auxiliary variable.
\citet{khemakhem2020variational} develop this principle in a variational latent-variable framework, using a condition-dependent exponential-family prior and an injective decoder to obtain identifiability.
\citet{khemakhem2020ice} instantiate the same idea in conditional energy-based models, where structured conditioning of the energy function yields identifiable representations.
A complementary view derives identifiability from restrictions on latent mechanisms and their symmetries rather than from a specific exponential-family form \citep{ahuja2022properties}.
Our framework follows this broader identifiability perspective, but studies conditional generative models indexed by attributes and asks both which latent concept structure is identifiable from observed attributes and when this structure extrapolates to unseen attributes.

\subsection{Causal representation learning}
Causal representation learning (CRL) seeks to recover latent causal variables, and sometimes their causal graph, from high-dimensional observations \citep{Scholkopf2021causal}.
Interventional CRL uses environments or interventions as the source of variation, but the broader literature also exploits temporal structure, paired intervention data, distribution shifts, multi-view observations, and partial observability.
In the static interventional setting, \citet{squires2023linear} study linear latent causal models under injective linear mixing and show that one perfect intervention per latent variable is sufficient, and in a worst-case sense necessary, for identifying the latent causal model and the mixing map.
\citet{ahuja2023interventional} study interventional CRL under perfect and imperfect interventions, showing how intervention-induced geometric structure can identify latent causal variables up to appropriate ambiguities.
\citet{Buchholz2023learning} allow general nonlinear mixing while retaining a linear-Gaussian latent causal model, and obtain identifiability from unpaired interventional data with unknown single-node intervention targets.
\citet{von2023nonparametric} move to nonparametric latent causal models and diffeomorphic mixing under unknown interventions, clarifying both positive identifiability results and residual equivalence classes.
Other work relaxes or changes the intervention model, including soft interventions \citep{zhang2023identifiability}, uncoupled hard interventions and general nonparametric models \citep{varici2024general}, unknown multi-node interventions under linear transformations \citep{varici2024linear}, and score-based identification under linear and general nonlinear transformations \citep{varici2025score}.
Complementary CRL results use weaker or different supervision signals.
\citet{brehmer2022weakly} identify latent causal variables and structure from paired pre- and post-intervention observations, generated from a shared exogenous noise, with random unknown interventions.
\citet{lippe2022citris} use temporal sequences with intervention targets to identify causal factors, while \citet{lippe2023biscuit} replaces observed targets with binary interaction variables.
Multi-view and multi-distribution approaches show that explicit intervention labels are not the only route to identifiability, with guarantees under partial observability \citep{yao2024multiview,xu2024sparsity} and general distribution shifts \citep{zhang2024causal}.
Recent work also studies intrinsic ambiguity and finite-sample behavior in interventional CRL \citep{jin2024learning,acarturk2024sample}.
These works provide model-specific identifiability guarantees under particular sources of variation, whereas CMMs isolate a generic lifting step from observed feature agreement to constrained latent concept transitions and leave model-specific rigidity to separate assumptions.

\subsection{Concept extraction}
Our use of the term ``concept'' is related to work on extracting latent concepts or factors from observed representations, but our goal is different.
Concept-extraction methods in interpretability often aim to find human-facing features in neural activations using examples, labels, sparsity, or dictionary-learning objectives \citep{kim2018interpretability,oikarinen2023label,bricken2023monosemanticity,gao2024scaling}.
Recent theoretical work instead asks when such latent concepts are identifiable.
\citet{cui2025theoretical} analyze identifiability limits for sparse autoencoders, \citet{Zheng2025nonparametric} give nonparametric guarantees for identifying latent concepts, and \citet{squires2026unifying} provide a transition-based framework for unsupervised concept extraction.
\citet{Rajendran2024from} connect CRL and concept-based representation learning by relaxing recovery of true causal variables toward recovery of task-relevant concepts represented in latent space.
\citet{lachapelle22disentangle} show that mechanism sparsity can also drive disentanglement in nonlinear ICA, illustrating how sparse modulation of latent components can reduce non-identifiability.
CMMs differ from these works by making the concept law attribute-indexed: attributes modulate latent concept distributions, so the central questions are not only concept identifiability from observed attributes but also extrapolation of concept and feature laws to unseen attributes.

\subsection{Extrapolation}
A growing line of work studies when models learned from observed conditions can predict behavior under unseen interventions, perturbations, or attribute values.
\citet{lachapelle2023additive} study additive decoders and show that additive latent structure can support Cartesian-product extrapolation by recombining observed factors of variation.
\citet{agarwal2023synthetic} study combinatorial interventions from a causal-inference perspective, using low-rank and Fourier-sparsity structure to identify potential outcomes for unseen intervention combinations.
\citet{du2024compositional} discuss compositional generative modeling more broadly, emphasizing how modular structure can support generalization to unseen combinations.
In latent-variable settings, \citet{saengkyongam2023identifying} establish affine identifiability of latent representations under linear intervention effects and use this to certify extrapolation to out-of-support intervention values for downstream outcomes.
\citet{von2025representation} study distributional perturbation extrapolation under latent mean-shift models and prove representation-identifiability and perturbation-extrapolation guarantees.
\citet{kong22partial} study domain adaptation under partial disentanglement, where target-domain observations are available and the goal is prediction rather than identification of a full unseen interventional distribution.
These works give extrapolation guarantees in specific structural regimes.
CMMs instead formulate extrapolation as an attribute-indexed consistency question: agreement on observed conditional feature laws extrapolates to an unseen attribute exactly when the transported attribute-potential identities extend to that attribute.

\subsection{Unifying frameworks}
Several works abstract away from individual identifiability theorems to identify common proof principles.
\citet{khemakhem2020variational} unify nonlinear ICA and identifiable VAEs through condition-dependent exponential-family latent priors, but this unification is tied to conditional factorization and sufficient variability.
\citet{ahuja2022properties} give a mechanism-based and equivariance view of identifiable representation learning, characterizing residual ambiguity through symmetries shared by the latent mechanisms.
\citet{YaoRancatiCadeiFumeroLocatello2025unifying} propose an invariance-principle view of CRL, showing how several CRL guarantees can be understood through representation alignment with environment-induced invariances.
\citet{Reizinger2025identifiable} introduce identifiable exchangeable mechanisms as a probabilistic framework connecting ICA, CRL, and causal structure learning through exchangeable non-i.i.d. data.
\citet{squires2026unifying} develop a unified framework for unsupervised concept extraction, where identifiability reduces to showing that the intersection of valid transition sets lies inside the allowable ambiguity class.
These frameworks clarify identifiability within particular structural regimes.
CMMs are complementary: they provide a conditional framework in which observed feature agreement induces constrained latent concept transitions, and the same attribute-potential constraints characterize extrapolation to unseen attributes.
\clearpage

\section{Preliminaries}\label{appendix:preliminaries}

\subsection{Notation summary}
\label{ss:notation-summary}

We provide in \Cref{tab:notation-summary} a summary of notation used throughout the appendix.

\begin{table*}[h]
\centering
\scriptsize
\setlength{\tabcolsep}{4pt}
\renewcommand{\arraystretch}{1.15}
\resizebox{\textwidth}{!}{
\begin{tabular}{ll}
\toprule\toprule
\textbf{Symbol} & \textbf{Clarification} \\
\midrule

\multicolumn{2}{l}{\textit{Spaces, attributes, and variables}} \\
\midrule
\(\cA,\cAo,\cAex\)
& Full attribute space, observed attributes, and target/extrapolation attributes.
\\

\(A\)
& Attribute random variable.
\\

\((a,a_0,\aex)\)
& Generic attribute value, anchor attribute in \(\cAo\), and generic extrapolation attribute.
\\

\((\modclass,\modrv,\modval)\)
& Modulator space, modulator random variable, and modulator value.
\\

\((\cC,C,c)\)
& Concept space, concept random variable, and concept value.
\\

\((\cX,X,x)\)
& Feature space, feature random variable, and feature value.
\\

\midrule
\multicolumn{2}{l}{\textit{CMMs and induced kernels}} \\
\midrule
\(\M,\M'\)
& Two CMMs, typically \(\M=(\bQ,\bB,\bK)\) and \(\M'=(\bQ',\bB,\bK')\).
\\

\(\cM=\cQ\times\{\bB\}\times\cK\)
& CMM class with fixed concept-modulation kernel \(\bB\).
\\

\((\cQ,\cK)\)
& Indexing-kernel class and mixing-kernel class.
\\

\((\bQ,\bB,\bK)\)
& Indexing kernel, fixed concept-modulation kernel, and mixing kernel.
\\

\(\bP^\M\)
& Induced feature-generation kernel \(\bK\bB\bQ\).
\\

\((\bar\bQ,\bar\cQ)\)
& Induced concept kernel \(\bB\bQ\) and induced concept-kernel class \(\bB\cQ\).
\\

\(\Push_f\)
& Pushforward operator induced by a measurable map \(f\).
\\

\midrule
\multicolumn{2}{l}{\textit{Reference measures, densities, and potentials}} \\
\midrule
\(\refm\)
& Reference measure on \(\cC\).
\\

\(p_{\bar\bQ}(c\mid a)\)
& Density of \(\bar\bQ(\cdot\mid a)\) with respect to \(\refm\).
\\

\(\Delta_{a,a'}^{\bar\bQ}\)
& Attribute potential, $\Delta_{a,a'}^{\bar\bQ} \defeq \log p_{\bar\bQ}(c \mid a) - \log p_{\bar\bQ}(c \mid a')$.
\\

\(\widetilde\Delta_{a,a_0}^{\bar\bQ}(c,c_0)\)
& Centered attribute potential \(\Delta_{a,a_0}^{\bar\bQ}(c)-\Delta_{a,a_0}^{\bar\bQ}(c_0)\).
\\

\(\refmeq\)
& Equality of kernels on all \(\refm\)-dominated input measures.
\\

\midrule
\multicolumn{2}{l}{\textit{Transitions and equivalence relations}} \\
\midrule
\(\Aut_{\refm}(\cC)\)
& Measurable concept-space automorphisms preserving \(\refm\)-null sets in both directions.
\\

\(\tau\)
& Latent concept transition in \(\Aut_{\refm}(\cC)\).
\\

\(J\tau\)
& Jacobian of the latent concept transition $\tau$.
\\

\(r_{\tau^{-1}}\)
& Radon--Nikodym derivative \(d(\Push_{\tau^{-1}}\refm)/d\refm\).
\\

\((\cT_{\bK}^{\refm},\cT_{\bar\bQ}^{\refm})\)
& Valid mixing and concept transition sets.
\\

\((\frakG,\sim_{\frakG})\)
& Allowed transition group and induced model-equivalence relation.
\\

\midrule
\multicolumn{2}{l}{\textit{Extrapolation notation}} \\
\midrule
\(\varphi\)
& Fixed attribute representation used in affine attribute-potential extrapolation.
\\

\(D^{\bar\bQ}\)
& Coefficient map for centered attribute-potential differences in \Cref{thm:extrapolation-affine-indexing}.
\\

\bottomrule\bottomrule
\end{tabular}
}
\captionsetup{skip=3pt}
\caption{Notation summary.}
\label{tab:notation-summary}
\end{table*}

\subsection{Measure-theoretic background}
\label{ss:measure-theoretic-background}

We use the measure-theoretic notation introduced in the main text.
The only additional convention needed in the appendix concerns restrictions of Markov kernels.
For a measurable space \(\cU\), write \(\cB(\cU)\) for its \(\sigma\)-algebra.
If \(\cU'\in\cB(\cU)\), then \(\cU'\) is equipped with the trace \(\sigma\)-algebra
\[
    \cB(\cU')\defeq
    \setbuild{E\cap \cU'}{E\in\cB(\cU)}.
\]
If \(\bK\in\MK(\cU\to\cV)\), then \(\bK_{\cU'}\in\MK(\cU'\to\cV)\) denotes the domain restriction of \(\bK\), defined by
\[
    \bK_{\cU'}(B\mid u)\defeq \bK(B\mid u),
    \qquad u\in\cU',
    \qquad B\in\cB(\cV).
\]
This is well defined because, for each \(B\in\cB(\cV)\), the map \(u\mapsto \bK(B\mid u)\) is \(\cB(\cU)\)-measurable, and its restriction to \(\cU'\) is \(\cB(\cU')\)-measurable.
Thus \(\bK_{\cU'}\) restricts only the source measurable space; it does not restrict the target space and involves no renormalization.

\section{Proofs}\label{appendix:proofs}

\subsection{Proof of \texorpdfstring{$\sim_\frakG$}{sim-G} being equivalence relation}
\label{appendix:proof-equiv}

\begin{proposition}[$\sim_\frakG$ is an equivalence relation]
\label{prop:sim-G-equivalence}
Fix a CMM class \(\cM=\cQ\times\{\bB\}\times\cK\), an observed attribute set \(\cAo\subseteq\cA\), and a transition group \(\frakG\subseteq\Aut_{\refm}(\cC)\).  
For \(\M=(\bQ,\bB,\bK)\) and \(\M'=(\bQ',\bB,\bK')\), define \(\M\sim_{\frakG}\M'\) on \(\cAo\) if there exists \(\tau\in\frakG\) such that
\[
    \bK'\refmeq\bK\Push_\tau,
    \qquad
    \bar\bQ(\cdot\mid a)=\Push_\tau\bar\bQ'(\cdot\mid a)
    \quad\forall a\in\cAo,
\]
where \(\bar\bQ=\bB\bQ\) and \(\bar\bQ'=\bB\bQ'\).  
Then \(\sim_{\frakG}\) is an equivalence relation on \(\cM\) restricted to \(\cAo\).
\end{proposition}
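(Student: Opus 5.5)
We verify reflexivity, symmetry and transitivity directly from the definition, using only that \(\frakG\) is a group under composition inside \(\Aut_{\refm}(\cC)\). Two elementary facts about pushforwards do the work. First, \(\Push_{\tau\circ\sigma}=\Push_\tau\Push_\sigma\). Second, \(\Push_{\tau^{-1}}\Push_\tau\mu=\mu\) for \(\mu\ll\refm\); this holds because \(\tau^{-1}\circ\tau=\mathrm{id}\) off a \(\refm\)-null set, and that set is also \(\mu\)-null. We also use that \(\refm\)-absolute continuity is preserved by \(\tau\in\Aut_{\refm}(\cC)\): if \(\mu\ll\refm\), then \(\Push_\tau\mu\ll\Push_\tau\refm\sim\refm\), and likewise for \(\tau^{-1}\).

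\emph{Reflexivity.} Take \(\tau=\mathrm{id}\in\frakG\). Then \(\bK\refmeq\bK\Push_{\mathrm{id}}\) and \(\bar\bQ(\cdot\mid a)=\Push_{\mathrm{id}}\bar\bQ(\cdot\mid a)\) hold trivially.

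\emph{Symmetry.} Suppose \(\bK'\refmeq\bK\Push_\tau\) and \(\bar\bQ_{\cAo}=\Push_\tau\bar\bQ'_{\cAo}\) with \(\tau\in\frakG\). We use \(\tau^{-1}\in\frakG\). For any \(\mu\ll\refm\), the measure \(\nu\defeq\Push_{\tau^{-1}}\mu\) satisfies \(\nu\ll\refm\). Hence
\[
    \bK'\Push_{\tau^{-1}}\mu=\bK'\nu=\bK\Push_\tau\nu=\bK\Push_\tau\Push_{\tau^{-1}}\mu=\bK\mu,
\]
which gives \(\bK\refmeq\bK'\Push_{\tau^{-1}}\). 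On the concept side, apply \(\Push_{\tau^{-1}}\) to \(\bar\bQ(\cdot\mid a)=\Push_\tau\bar\bQ'(\cdot\mid a)\). Since \(\bar\bQ'(\cdot\mid a)\ll\refm\) in the regime where transitions are defined, this yields \(\bar\bQ'(\cdot\mid a)=\Push_{\tau^{-1}}\bar\bQ(\cdot\mid a)\) for every \(a\in\cAo\).

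\emph{Transitivity.} Suppose \(\M\sim_\frakG\M'\) via \(\tau\) and \(\M'\sim_\frakG\M''\) via \(\sigma\). We use \(\tau\circ\sigma\in\frakG\). For \(\mu\ll\refm\), we have \(\Push_\sigma\mu\ll\refm\), so
\[
    \bK''\mu=\bK'\Push_\sigma\mu=\bK\Push_\tau\Push_\sigma\mu=\bK\Push_{\tau\circ\sigma}\mu .
\]
On the concept side, \(\bar\bQ(\cdot\mid a)=\Push_\tau\Push_\sigma\bar\bQ''(\cdot\mid a)=\Push_{\tau\circ\sigma}\bar\bQ''(\cdot\mid a)\) for every \(a\in\cAo\).

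The only delicate point is the measure-theoretic bookkeeping. Transitions are defined only modulo \(\refm\)-null sets, so every identity has to be applied to \(\refm\)-dominated measures, and we must check that pushforwards under elements of \(\Aut_{\refm}(\cC)\) keep measures in that class. That check is the null-set-preservation fact stated in the first paragraph. Everything else is formal.
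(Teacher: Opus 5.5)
Your proof is correct and follows the paper's argument step for step: the identity gives reflexivity, \(\tau^{-1}\) gives symmetry (via the computation \(\bK'\Push_{\tau^{-1}}\mu=\bK\mu\) and pushing the concept identity forward by \(\tau^{-1}\)), and \(\tau\circ\sigma\) gives transitivity. Your explicit checks that \(\Push_{\tau^{-1}}\Push_\tau\mu=\mu\) for \(\refm\)-dominated \(\mu\), and that elements of \(\Aut_{\refm}(\cC)\) preserve \(\refm\)-domination, simply spell out what the paper means by ``equalities understood modulo \(\refm\)-null sets.''
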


\begin{proof}
Fix a CMM class \(\cM=\cQ\times\{\bB\}\times\cK\), an observed attribute set \(\cAo\subseteq\cA\), and a transition group \(\frakG\subseteq\Aut_{\refm}(\cC)\).  
Recall that \(\M=(\bQ,\bB,\bK)\) and \(\M'=(\bQ',\bB,\bK')\) satisfy \(\M\sim_{\frakG}\M'\) on \(\cAo\) if there exists \(\tau\in\frakG\) such that
\[
    \bK'\refmeq\bK\Push_\tau,
    \qquad
    \bar\bQ(\cdot\mid a)=\Push_\tau\bar\bQ'(\cdot\mid a)
    \quad\forall a\in\cAo,
\]
where \(\bar\bQ=\bB\bQ\) and \(\bar\bQ'=\bB\bQ'\).

We first prove reflexivity.  
Since \(\frakG\) is a group, \(\id_{\cC}\in\frakG\).  
For every \(\M=(\bQ,\bB,\bK)\), we have \(\bK\refmeq\bK\Push_{\id_{\cC}}\) and \(\bar\bQ(\cdot\mid a)=\Push_{\id_{\cC}}\bar\bQ(\cdot\mid a)\) for all \(a\in\cAo\).  
Hence \(\M\sim_{\frakG}\M\).

We next prove symmetry.  
Suppose \(\M\sim_{\frakG}\M'\) through \(\tau\in\frakG\).  
Then \(\tau^{-1}\in\frakG\).  
For any \(\mu\ll\refm\), since \(\tau^{-1}\in\Aut_{\refm}(\cC)\), we have \(\Push_{\tau^{-1}}\mu\ll\refm\), and therefore
\[
    \bK'\Push_{\tau^{-1}}\mu
    =
    \bK\Push_\tau\Push_{\tau^{-1}}\mu
    =
    \bK\mu.
\]
Thus \(\bK\refmeq\bK'\Push_{\tau^{-1}}\).  
Similarly, for every \(a\in\cAo\),
\[
    \Push_{\tau^{-1}}\bar\bQ(\cdot\mid a)
    =
    \Push_{\tau^{-1}}\Push_\tau\bar\bQ'(\cdot\mid a)
    =
    \bar\bQ'(\cdot\mid a),
\]
with equalities understood modulo \(\refm\)-null sets.  
Hence \(\M'\sim_{\frakG}\M\).

Finally, we prove transitivity.  
Suppose \(\M\sim_{\frakG}\M'\) through \(\tau\in\frakG\), and \(\M'\sim_{\frakG}\M''\) through \(\sigma\in\frakG\).  
Write \(\M''=(\bQ'',\bB,\bK'')\) and \(\bar\bQ''=\bB\bQ''\).  
Since \(\frakG\) is a group, \(\tau\circ\sigma\in\frakG\).  
For every \(\mu\ll\refm\), \(\Push_\sigma\mu\ll\refm\), so
\[
    \bK''\mu
    =
    \bK'\Push_\sigma\mu
    =
    \bK\Push_\tau\Push_\sigma\mu
    =
    \bK\Push_{\tau\circ\sigma}\mu.
\]
Hence \(\bK''\refmeq\bK\Push_{\tau\circ\sigma}\).  
Moreover, for every \(a\in\cAo\),
\[
    \bar\bQ(\cdot\mid a)
    =
    \Push_\tau\bar\bQ'(\cdot\mid a)
    =
    \Push_\tau\Push_\sigma\bar\bQ''(\cdot\mid a)
    =
    \Push_{\tau\circ\sigma}\bar\bQ''(\cdot\mid a).
\]
Thus \(\M\sim_{\frakG}\M''\).  
Therefore \(\sim_{\frakG}\) is reflexive, symmetric, and transitive, and hence is an equivalence relation on \(\cM\) restricted to \(\cAo\).
\end{proof}

\subsection{Proof of \texorpdfstring{\Cref{thm:intersection-individual}}{Thm.~1}}
\label{appendix:proof-intersection}

\Cref{thm:intersection-individual} actually only requires the following condition, which is implied by \Cref{def:ref-support} when $\cAo \neq \varnothing$.
\begin{assumption}\label{def:ref-support-weaker}
    The conditional concept distributions induced by the CMM class \(\cM=\cQ\times\{\bB\}\times\cK\) are dominated by the reference measure $\refm$, i.e.,
    \[
        \bB\bQ(\cdot\mid a)\ll\refm
        \qquad
        \forall \bQ\in\cQ,\ \forall a \in \cA.
    \]
    In addition, there exists a probability measure \(\pi\) on \(\cAo\) such that
    \[
        \int_{\cAo}\bB\bQ(\cdot\mid a)\,\pi(da)
        \sim \refm
        \qquad
        \forall \bQ\in\cQ,
    \]
    so that the observed attributes collectively cover the reference measure \(\refm\) up to its null sets.
\end{assumption}

We now prove the following theorem, restated for convenience, under \Cref{def:ref-support-weaker}. 
\intersection*

\begin{proof}
    Let \(\M'=(\bQ',\bB,\bK')\in\cM\) be feature-equivalent to
    \(\M=(\bQ,\bB,\bK)\) on \(\cAo\). Write
    \[
        \mu_a\defeq\bB\bQ(\cdot\mid a),
        \qquad
        \mu'_a\defeq\bB\bQ'(\cdot\mid a),
    \]
    which satisfies \(\mu_a,\mu'_a\ll\refm\) for all \(a\in\cAo\) by \Cref{def:ref-support-weaker}.
    By \(\refm\)-Blackwell reducibility, choose bimeasurable embeddings \(g,g'\in\cG\) such that
    \[
        \bK\refmeq\widetilde{\bK}\Push_g,
        \qquad
        \bK'\refmeq\widetilde{\bK}\Push_{g'}.
    \]
    Feature equivalence and carrier equality give, for every \(a\in\cAo\),
    \[
        \widetilde{\bK}\Push_g\mu_a
        =
        \widetilde{\bK}\Push_{g'}\mu'_a.
    \]
    Since \(\widetilde{\bK}\) is injective on the relevant pushed-forward carrier,
    \[
        \Push_g\mu_a=\Push_{g'}\mu'_a
        \qquad
        \forall a\in\cAo.
    \]
    Let \(\pi\) be the full-carrier anchor from \Cref{def:ref-support-weaker}, and define
    \[
        \overline\mu_{\bQ}\defeq\int_{\cAo}\mu_a\,\pi(da),
        \qquad
        \overline\mu_{\bQ'}\defeq\int_{\cAo}\mu'_a\,\pi(da).
    \]
    Integrating the previous display over \(a\sim\pi\) gives
    \[
        \Push_g\overline\mu_{\bQ}=\Push_{g'}\overline\mu_{\bQ'}.
    \]
    Since \(\overline\mu_{\bQ}\sim\refm\) and
    \(\overline\mu_{\bQ'}\sim\refm\), it follows that
    \[
        \Push_g\refm\sim\Push_{g'}\refm.
    \]
    Hence \(g(\cC)\) and \(g'(\cC)\) agree modulo the corresponding pushed-forward carrier:
    \[
        \refm\{c:g'(c)\notin g(\cC)\}=0,
        \qquad
        \refm\{c:g(c)\notin g'(\cC)\}=0.
    \]
    Therefore the maps
    \[
        \tau\defeq g^{-1}\circ g',
        \qquad
        \tau^{-1}\defeq g'^{-1}\circ g
    \]
    are well-defined modulo \(\refm\)-null sets. Moreover, since
    \(\Push_g\Push_\tau\refm=\Push_{g'}\refm\sim\Push_g\refm\), injectivity of \(g\) implies
    \(\Push_\tau\refm\sim\refm\). The same argument with \(g\) and \(g'\) exchanged gives
    \(\Push_{\tau^{-1}}\refm\sim\refm\). Thus
    \[
        \tau\in\Aut_{\refm}(\cC),
        \qquad
        g'=g\circ\tau
        \quad \refm\text{-a.e.}
    \]

    Since \(\mu'_a\ll\refm\), the relation \(g'=g\circ\tau\) \(\refm\)-a.e. implies
    \[
        \Push_{g'}\mu'_a=\Push_g\Push_\tau\mu'_a.
    \]
    Combining this with \(\Push_g\mu_a=\Push_{g'}\mu'_a\), we obtain
    \[
        \Push_g\mu_a=\Push_g\Push_\tau\mu'_a.
    \]
    Since \(g\) is a bimeasurable embedding, \(\Push_g\) is injective on measures on \(\cC\). Hence
    \[
        \mu_a=\Push_\tau\mu'_a
        \qquad
        \forall a\in\cAo.
    \]
    Equivalently,
    \[
        \bB\bQ(\cdot\mid a)=\Push_\tau\bB\bQ'(\cdot\mid a)
        \qquad
        \forall a\in\cAo,
    \]
    so \(\tau\in\cT_{\bar\bQ}^{\refm}(\bar\cQ;\cAo)\).

    Finally, for every \(\mu\ll\refm\),
    \[
        \bK'\mu
        =\widetilde{\bK}\Push_{g'}\mu
        =\widetilde{\bK}\Push_g\Push_\tau\mu
        =\bK\Push_\tau\mu.
    \]
    Thus \(\bK'\refmeq\bK\Push_\tau\), so
    \(\tau\in\cT_{\bK}^{\refm}(\cK)\). Therefore
    \[
        \tau\in\cT_{\bK}^{\refm}(\cK)\cap\cT_{\bar\bQ}^{\refm}(\bar\cQ;\cAo)
        \subseteq\frakG.
    \]
    By \Cref{def:blackwell-equivalence}, \(\M\sim_\frakG\M'\). Since \(\M'\) was arbitrary among
    feature-equivalent alternatives, \(\M\) is identifiable up to \(\sim_\frakG\).
\end{proof}

\subsection{Proof of \texorpdfstring{\Cref{thm:tq-characterization}}{Thm.~2}}

\tqcharacterization*

\label{appendix:proof-tq-characterization}
\begin{proof}
We first show that (i) and (ii) are equivalent.  
For any \(a\in\cA'\), the identity \(\bar\bQ(\cdot\mid a)=\Push_\tau\bar\bQ'(\cdot\mid a)\) is equivalent, since \(\tau\in\Aut_{\refm}(\cC)\), to
\[
    \bar\bQ'(\cdot\mid a)=\Push_{\tau^{-1}}\bar\bQ(\cdot\mid a).
\]
If \(\bar\bQ(\cdot\mid a)\) has density \(p_{\bar\bQ}(\cdot\mid a)\) with respect to \(\refm\), then the Radon--Nikodym chain rule for \(\Push_{\tau^{-1}}\) gives
\[
    \frac{d\,\Push_{\tau^{-1}}\bar\bQ(\cdot\mid a)}{d\refm}(c)
    =
    p_{\bar\bQ}(\tau(c)\mid a) r_{\tau^{-1}}(c),
    \qquad
    r_{\tau^{-1}}=\frac{d(\Push_{\tau^{-1}}\refm)}{d\refm}.
\]
Thus \(\bar\bQ'(\cdot\mid a)=\Push_{\tau^{-1}}\bar\bQ(\cdot\mid a)\) holds for every \(a\in\cA'\) if and only if
\[
    p_{\bar\bQ'}(c\mid a)
    =
    p_{\bar\bQ}(\tau(c)\mid a) r_{\tau^{-1}}(c)
\]
holds for every \(a\in\cA'\) and \(\refm\)-a.e. \(c\), which proves the equivalence between (i) and (ii).

We next show that (ii) implies (iii).  
Taking \(a=a_0\) in (ii) gives the anchored density identity
\[
    p_{\bar\bQ'}(c\mid a_0)
    =
    p_{\bar\bQ}(\tau(c)\mid a_0) r_{\tau^{-1}}(c)
\]
for \(\refm\)-a.e. \(c\).  
Now fix any \(a\in\cA'\).  
By the common-support assumption, \(p_{\bar\bQ'}(\cdot\mid a)\), \(p_{\bar\bQ'}(\cdot\mid a_0)\), \(p_{\bar\bQ}(\cdot\mid a)\), and \(p_{\bar\bQ}(\cdot\mid a_0)\) are positive and finite \(\refm\)-a.e.; since \(\tau\in\Aut_{\refm}(\cC)\), the same is true of \(p_{\bar\bQ}(\tau(\cdot)\mid a)\) and \(p_{\bar\bQ}(\tau(\cdot)\mid a_0)\) on a \(\refm\)-full set.  
Intersecting this full-measure set with the full-measure sets on which (ii) holds for \(a\) and \(a_0\), we may take logarithms and subtract:
\[
\begin{aligned}
    \log p_{\bar\bQ'}(c\mid a)-\log p_{\bar\bQ'}(c\mid a_0)
    &=
    \left[\log p_{\bar\bQ}(\tau(c)\mid a)+\log r_{\tau^{-1}}(c)\right] \\
    &\quad -
    \left[\log p_{\bar\bQ}(\tau(c)\mid a_0)+\log r_{\tau^{-1}}(c)\right] \\
    &=
    \log p_{\bar\bQ}(\tau(c)\mid a)-\log p_{\bar\bQ}(\tau(c)\mid a_0).
\end{aligned}
\]
Therefore
\[
    \Delta_{a,a_0}^{\bar\bQ'}(c)
    =
    \Delta_{a,a_0}^{\bar\bQ}(\tau(c))
\]
for \(\refm\)-a.e. \(c\).  
Since \(a\in\cA'\) was arbitrary, the attribute-potential identities hold for every \(a\in\cA'\), so (iii) follows.

Conversely, suppose (iii) holds.  
Then for every \(a\in\cA'\),
\[
\begin{aligned}
    p_{\bar\bQ'}(c\mid a)
    &=
    p_{\bar\bQ'}(c\mid a_0)\exp\{\Delta_{a,a_0}^{\bar\bQ'}(c)\} \\
    &=
    p_{\bar\bQ}(\tau(c)\mid a_0)r_{\tau^{-1}}(c)\exp\{\Delta_{a,a_0}^{\bar\bQ}(\tau(c))\} \\
    &=
    p_{\bar\bQ}(\tau(c)\mid a)r_{\tau^{-1}}(c)
\end{aligned}
\]
for \(\refm\)-a.e. \(c\), so (ii) holds.  
The final statement follows from the definition of \(\cT_{\bar\bQ}^\refm(\bar\cQ;\cAo)\).
\end{proof}

\subsection{Proof of \texorpdfstring{\Cref{thm:extrapolation-delta-iff}}{Thm.~3}}
\label{appendix:proof-extrapolation-delta-iff}

\extrapolationdeltaiff*

\begin{proof}
Write
\[
    \bar\bQ=\bB\bQ,
    \qquad
    \bar\bQ'=\bB\bQ',
    \qquad
    \mu_a\defeq\bar\bQ(\cdot\mid a),
    \qquad
    \mu'_a\defeq\bar\bQ'(\cdot\mid a).
\]
By the common \(\refm\)-support assumption, \(\mu_a,\mu'_a\ll\refm\) for every \(a\in\cA\).  
Since \(\tau\in\Aut_{\refm}(\cC)\), we also have \(\Push_\tau\mu'_a\ll\refm\) for every \(a\in\cA\).

The transition \(\tau\) induced by \Cref{thm:intersection-individual} satisfies
\[
    \bK'\refmeq\bK\Push_\tau,
    \qquad
    \mu_a=\Push_\tau\mu'_a
    \quad\forall a\in\cAo.
\]
In particular,
\begin{equation}\label{eq:proof-edi-1}
    \mu_{a_0}=\Push_\tau\mu'_{a_0}.
\end{equation}

We first show that, for each \(a\in\cAex\),
\begin{equation}\label{eq:proof-edi-2}
    \bP^\M(\cdot\mid a)=\bP^{\M'}(\cdot\mid a)
    \quad\Longleftrightarrow\quad
    \mu_a=\Push_\tau\mu'_a.
\end{equation}
Indeed, since \(\mu'_a\ll\refm\) and \(\bK'\refmeq\bK\Push_\tau\),
\[
    \bP^{\M'}(\cdot\mid a)
    =
    \bK'\mu'_a
    =
    \bK\Push_\tau\mu'_a.
\]
Also \(\bP^\M(\cdot\mid a)=\bK\mu_a\).  
Thus feature equality at \(a\) is equivalent to
\[
    \bK\mu_a=\bK\Push_\tau\mu'_a.
\]
It remains only to justify that \(\bK\) is injective on measures dominated by \(\refm\).  
By \(\refm\)-Blackwell reducibility, choose \(g\in\cG\) and \(\widetilde{\bK}\) such that
\[
    \bK\refmeq\widetilde{\bK}\Push_g.
\]
If \(\alpha,\beta\ll\refm\) and \(\bK\alpha=\bK\beta\), then
\[
    \widetilde{\bK}\Push_g\alpha
    =
    \widetilde{\bK}\Push_g\beta.
\]
By the injectivity condition in \Cref{def:mu-blackwell-reducible},
\[
    \Push_g\alpha=\Push_g\beta.
\]
Since \(g\) is a bimeasurable embedding, \(\Push_g\) is injective on probability measures on \(\cC\), hence \(\alpha=\beta\).  
Applying this with \(\alpha=\mu_a\) and \(\beta=\Push_\tau\mu'_a\) proves \eqref{eq:proof-edi-2}.

Taking \eqref{eq:proof-edi-2} for all \(a\in\cAex\) gives
\begin{equation}\label{eq:proof-edi-3}
    \bP^\M_{\cAex}=\bP^{\M'}_{\cAex}
    \quad\Longleftrightarrow\quad
    \bar\bQ_{\cAex}=\Push_\tau\bar\bQ'_{\cAex}.
\end{equation}

We now translate the right-hand side of \eqref{eq:proof-edi-3} into attribute potentials.  
Apply \Cref{thm:tq-characterization} with \(\cA'=\cAex\).  
It states that
\[
    \bar\bQ_{\cAex}=\Push_\tau\bar\bQ'_{\cAex}
\]
is equivalent to the conjunction of the anchored density identity
\begin{equation}\label{eq:proof-edi-4}
    p_{\bar\bQ'}(c\mid a_0)
    =
    p_{\bar\bQ}(\tau(c)\mid a_0)r_{\tau^{-1}}(c)
    \quad
    \refm\text{-a.e. }c
\end{equation}
and the transported attribute-potential identities
\begin{equation}\label{eq:proof-edi-5}
    \Delta_{a,a_0}^{\bar\bQ'}(c)
    =
    \Delta_{a,a_0}^{\bar\bQ}(\tau(c))
    \quad
    \text{for every }a\in\cAex,\ \refm\text{-a.e. }c.
\end{equation}
However, \eqref{eq:proof-edi-4} already follows from \eqref{eq:proof-edi-1} by the density form of \Cref{thm:tq-characterization}, since \(a_0\in\cAo\).  
Therefore, for the fixed transition \(\tau\) induced from the observed attributes, the condition
\[
    \bar\bQ_{\cAex}=\Push_\tau\bar\bQ'_{\cAex}
\]
is equivalent to \eqref{eq:proof-edi-5} alone.

Combining this equivalence with \eqref{eq:proof-edi-3}, we obtain
\[
    \bP^\M_{\cAex}=\bP^{\M'}_{\cAex}
    \quad\Longleftrightarrow\quad
    \Delta_{a,a_0}^{\bar\bQ'}(c)
    =
    \Delta_{a,a_0}^{\bar\bQ}(\tau(c))
    \text{ for every }a\in\cAex,\ \refm\text{-a.e. }c.
\]
This proves the claimed equivalence.

The final statement follows immediately: if the transported attribute-potential identities that hold on \(\cAo\) extend to all \(a\in\cAex\), then the right-hand side above holds, and therefore \(\bP^\M_{\cAex}=\bP^{\M'}_{\cAex}\).
\end{proof}

\subsection{Proof of \texorpdfstring{\Cref{thm:extrapolation-affine-indexing}}{Thm.~4}}
\label{appendix:proof-extrapolation-affine-indexing}

\affineindexingextrapolation*

\begin{proof}
Fix \(\aex\in\cAex\).
Since \(\varphi(\cAex)\) is contained in the affine hull of \(\varphi(\cAo)\), there exist \(a_1,\ldots,a_\ell\in\cAo\) and \(\alpha_1,\ldots,\alpha_\ell\in\bbR\) such that \(\varphi(\aex)=\sum_{i=1}^\ell \alpha_i\varphi(a_i)\) and \(\sum_{i=1}^\ell\alpha_i=1\).
By the assumed affine representation of attribute-potential differences, for \(\M=(\bQ,\bB,\bK)\) we have
\begin{align*}
    \Delta_{\aex,a_0}^{\bar{\bQ}}(c)-\Delta_{\aex,a_0}^{\bar{\bQ}}(c_0)
    &=
    \inner{\varphi(\aex)-\varphi(a_0)}{D^\M(c,c_0)} \\
    &=
    \inner{\sum_{i=1}^\ell \alpha_i\varphi(a_i)-\sum_{i=1}^\ell \alpha_i\varphi(a_0)}{D^\M(c,c_0)} \\
    &=
    \sum_{i=1}^\ell \alpha_i\inner{\varphi(a_i)-\varphi(a_0)}{D^\M(c,c_0)} \\
    &=
    \sum_{i=1}^\ell \alpha_i\left[\Delta_{a_i,a_0}^{\bar{\bQ}}(c)-\Delta_{a_i,a_0}^{\bar{\bQ}}(c_0)\right],
\end{align*}
where the second equality uses \(\sum_{i=1}^\ell\alpha_i=1\).
The same argument for \(\M'=(\bQ',\bB,\bK')\) gives
\begin{align*}
    \Delta_{\aex,a_0}^{\bar{\bQ}'}(c)-\Delta_{\aex,a_0}^{\bar{\bQ}'}(c_0)
    =
    \sum_{i=1}^\ell \alpha_i\left[\Delta_{a_i,a_0}^{\bar{\bQ}'}(c)-\Delta_{a_i,a_0}^{\bar{\bQ}'}(c_0)\right].
\end{align*}
Since \(a_i\in\cAo\), \Cref{thm:tq-characterization} gives the observed transported identities
\[
    \Delta_{a_i,a_0}^{\bar{\bQ}'}(c)
    =
    \Delta_{a_i,a_0}^{\bar{\bQ}}(\tau(c))
    \qquad
    \text{for \(\refm\)-a.e. }c
\]
for each \(i\in[\ell]\).
Subtracting the identity evaluated at \(c_0\) from the identity evaluated at \(c\), we obtain
\[
    \Delta_{a_i,a_0}^{\bar{\bQ}'}(c)-\Delta_{a_i,a_0}^{\bar{\bQ}'}(c_0)
    =
    \Delta_{a_i,a_0}^{\bar{\bQ}}(\tau(c))-\Delta_{a_i,a_0}^{\bar{\bQ}}(\tau(c_0))
\]
for \(\refm \otimes \refm\)-a.e. \((c,c_0)\), given that $\Push_\tau \refm \sim \refm$.
Combining the preceding two displays gives
\begin{align*}
    \Delta_{\aex,a_0}^{\bar{\bQ}'}(c)-\Delta_{\aex,a_0}^{\bar{\bQ}'}(c_0)
    &=
    \sum_{i=1}^\ell \alpha_i\left[\Delta_{a_i,a_0}^{\bar{\bQ}}(\tau(c))-\Delta_{a_i,a_0}^{\bar{\bQ}}(\tau(c_0))\right].
\end{align*}
On the other hand, applying the affine representation for \(\M\) at the pair \((\tau(c),\tau(c_0))\) gives
\begin{align*}
    \Delta_{\aex,a_0}^{\bar{\bQ}}(\tau(c))-\Delta_{\aex,a_0}^{\bar{\bQ}}(\tau(c_0))
    &=
    \sum_{i=1}^\ell \alpha_i\left[\Delta_{a_i,a_0}^{\bar{\bQ}}(\tau(c))-\Delta_{a_i,a_0}^{\bar{\bQ}}(\tau(c_0))\right]
\end{align*}
for \(\refm \otimes \refm\)-a.e. \((c, c_0)\), since \(\tau\in\Aut_{\refm}(\cC)\) preserves \(\refm\)-null sets.
Therefore
\[
    \Delta_{\aex,a_0}^{\bar{\bQ}'}(c)-\Delta_{\aex,a_0}^{\bar{\bQ}'}(c_0)
    =
    \Delta_{\aex,a_0}^{\bar{\bQ}}(\tau(c))-\Delta_{\aex,a_0}^{\bar{\bQ}}(\tau(c_0))
\]
for \(\refm \otimes \refm\)-a.e. \((c,c_0)\).
Define
\[
    H(c)\defeq\Delta_{\aex,a_0}^{\bar{\bQ}'}(c)-\Delta_{\aex,a_0}^{\bar{\bQ}}(\tau(c)).
\]
The preceding display says \(H(c)-H(c_0)=0\) for \(\refm \otimes \refm\)-a.e. \((c,c_0)\).
This implies that, through Fubini, \(H\) is \(\refm\)-a.e. constant, so there exists a scalar \(b(\aex)\) such that
\[
    \Delta_{\aex,a_0}^{\bar{\bQ}'}(c)
    =
    \Delta_{\aex,a_0}^{\bar{\bQ}}(\tau(c))+b(\aex)
\]
for \(\refm\)-a.e. \(c\).
Let \(r_{\tau^{-1}}(c)\defeq d(\Push_{\tau^{-1}}\refm)/d\refm(c)\).
By the anchored density identity in \Cref{thm:tq-characterization},
\[
    p_{\bar{\bQ}'}(c\mid a_0)
    =
    p_{\bar{\bQ}}(\tau(c)\mid a_0)r_{\tau^{-1}}(c)
\]
for \(\refm\)-a.e. \(c\).
Therefore,
\begin{align*}
    p_{\bar{\bQ}'}(c\mid\aex)
    &=
    p_{\bar{\bQ}'}(c\mid a_0)\exp\{\Delta_{\aex,a_0}^{\bar{\bQ}'}(c)\} \\
    &=
    p_{\bar{\bQ}}(\tau(c)\mid a_0)r_{\tau^{-1}}(c)\exp\{\Delta_{\aex,a_0}^{\bar{\bQ}}(\tau(c))+b(\aex)\} \\
    &=
    \exp\{b(\aex)\}p_{\bar{\bQ}}(\tau(c)\mid\aex)r_{\tau^{-1}}(c).
\end{align*}
Integrating both sides with respect to \(\refm\) gives
\begin{align*}
    1
    &=
    \exp\{b(\aex)\}\int p_{\bar{\bQ}}(\tau(c)\mid\aex)r_{\tau^{-1}}(c)\,\refm(dc) \\
    &=
    \exp\{b(\aex)\}\int p_{\bar{\bQ}}(u\mid\aex)\,\refm(du) \\
    &=
    \exp\{b(\aex)\}.
\end{align*}
Hence \(b(\aex)=0\).
Thus \(\Delta_{\aex,a_0}^{\bar{\bQ}'}(c)=\Delta_{\aex,a_0}^{\bar{\bQ}}(\tau(c))\) for \(\refm\)-a.e. \(c\).
Since \(\aex\in\cAex\) was arbitrary, the transported attribute-potential identities extend from \(\cAo\) to \(\cAex\).
\end{proof}

\subsection{Proof of \texorpdfstring{\Cref{cor:interaction-extrapolation}}{Cor.~1}}
\label{appendix:proof-interaction-extrapolation}

\interactionextrapolation*

\begin{proof}
Set \(\cAex=2^{[m]}\) and take \(a_0=\varnothing\).  
For each \(S\subseteq[m]\), define the interaction-incidence vector \(\varphi_\cH(S)\in\bbR^{\cH}\) by \(\varphi_\cH(S)_T=\mathbf{1}\{T\subseteq S\}\) for \(T\in\cH\).  
We first show that the assumed density model has the affine attribute-potential form required by \Cref{thm:extrapolation-affine-indexing}.  
Write
\[
    \ell_S^\M(c)
    =
    \sum_{T\in\cH, T\subseteq S}
    h_T^\M(c),
\]
so that \(p^\M(c\mid S)=\exp\{\ell_S^\M(c)\}/Z_S^\M\) for a normalizing constant \(Z_S^\M\).  
Since \(p^\M(c\mid\varnothing)=\exp\{h_\varnothing^\M(c)\}/Z_\varnothing^\M\), the attribute potential relative to \(\varnothing\) is
\[
    \Delta_{S,\varnothing}^\M(c)
    =
    \log\frac{p^\M(c\mid S)}{p^\M(c\mid\varnothing)}
    =
    \sum_{T\in\cH, T\subseteq S, T \neq \varnothing}
    h_T^\M(c)
    -
    \log\frac{Z_S^\M}{Z_\varnothing^\M}.
\]
The normalizing constant does not depend on \(c\), so it disappears after centering at any fixed \(c_0\in\cC\).  
Thus
\[
    \Delta_{S,\varnothing}^\M(c)-\Delta_{S,\varnothing}^\M(c_0)
    =
    \sum_{T\in\cH, T\subseteq S, T \neq \varnothing}
    \bigl(h_T^\M(c)-h_T^\M(c_0)\bigr).
\]
Define \(D^\M(c,c_0)\in\bbR^{\cH}\) by \(D_\varnothing^\M(c,c_0)=0\) and \(D_T^\M(c,c_0)=h_T^\M(c)-h_T^\M(c_0)\) for \(T\neq\varnothing\).  
Since \(\varphi_\cH(\varnothing)\) is equal to \(1\) in the \(\varnothing\)-coordinate and \(0\) elsewhere, the previous display can be written as
\[
    \Delta_{S,\varnothing}^\M(c)-\Delta_{S,\varnothing}^\M(c_0)
    =
    \inner{\varphi_\cH(S)-\varphi_\cH(\varnothing)}{D^\M(c,c_0)}.
\]
Hence the fixed representation \(\varphi(S)=\varphi_\cH(S)\) satisfies the attribute-potential assumption of \Cref{thm:extrapolation-affine-indexing}.  

It remains only to verify that each target vector \(\varphi_\cH(S)\), for \(S\subseteq[m]\), belongs to the affine hull of the observed vectors \(\setbuild{\varphi_\cH(U)}{U\in\cH}\).
This is a standard consequence of M{\"o}bius inversion, equivalently of the invertibility of the zeta matrix of a finite poset, in the incidence-algebra formulation of \citet{Rota1964Mobius}.
For completeness, we recall the short argument.

We first show linear that the observed vectors \(\setbuild{\varphi_\cH(U)}{U\in\cH}\) are linearly independent.
Suppose \(\sum_{U\in\cH}\alpha_U \varphi_\cH(U)=0\).  
If some coefficient is nonzero, choose an inclusion-maximal \(U_0\in\cH\) among the sets with \(\alpha_{U_0}\neq 0\), which is possible because \(\cH\) is finite.  
Looking at the \(U_0\)-coordinate of the vector equality gives
\[
    0
    =
    \sum_{U\in\cH}\alpha_U \varphi_\cH(U)_{U_0}
    =
    \sum_{\substack{U\in\cH\\U_0\subseteq U}}\alpha_U.
\]
By maximality of \(U_0\), every term in the last sum except \(U=U_0\) has coefficient zero.  
Therefore \(0=\alpha_{U_0}\), contradicting the choice of \(U_0\).  
Thus all coefficients are zero, so the observed vectors are linearly independent.  
There are exactly \(\abs{\cH}\) such vectors in the \(\abs{\cH}\)-dimensional space \(\bbR^{\cH}\), so they form a basis of \(\bbR^{\cH}\).  

Now fix any target \(S\subseteq[m]\).  
Since the observed vectors form a basis, there exist coefficients \(\alpha_U(S)\), \(U\in\cH\), such that
\[
    \varphi_\cH(S)
    =
    \sum_{U\in\cH}\alpha_U(S)\varphi_\cH(U).
\]
This expansion is automatically affine rather than merely linear.  
Indeed, looking at the \(\varnothing\)-coordinate gives
\[
    1
    =
    \varphi_\cH(S)_\varnothing
    =
    \sum_{U\in\cH}\alpha_U(S)\varphi_\cH(U)_\varnothing
    =
    \sum_{U\in\cH}\alpha_U(S),
\]
because \(\varnothing\subseteq S\) and \(\varnothing\subseteq U\) for every \(U\in\cH\).  
Therefore \(\varphi_\cH(S)\) lies in the affine hull of \(\setbuild{\varphi_\cH(U)}{U\in\cH}\).  
Since \(S\subseteq[m]\) was arbitrary, \(\varphi_\cH(\cAex)\) is contained in the affine hull of \(\varphi_\cH(\cAo)\).  

Let \(\M,\M'\in\cM\) be feature-equivalent on \(\cAo=\cH\).  
The preceding two paragraphs verify the hypotheses of \Cref{thm:extrapolation-affine-indexing} with \(\varphi=\varphi_\cH\).  
Therefore the transported attribute-potential identities that hold on the observed attributes \(\cH\) extend to every \(S\subseteq[m]\).  
By \Cref{thm:extrapolation-delta-iff}, these extended attribute-potential identities imply \(\bP_S^\M=\bP_S^{\M'}\) for every \(S\subseteq[m]\).  
Hence \(\M\) and \(\M'\) are feature-equivalent on all of \(2^{[m]}\).
\end{proof}

\section{Deferred details}
\label{appendix:details}

\subsection{Details for the running example}
\label{app:running-example-details}

We provide the linear-algebra details behind
\Cref{ex:running-example-theorems}. Fix
\(\tau \in \cT_{\bar\bQ}^{\refm}(\bar\cQ;\cAo)\). By
\Cref{thm:tq-characterization}, there exists
\(\bQ'=\Push_{f'}\in\cQ\) such that, after centering at any fixed \(c_0\),
\[
    \inner{f'(a)-f'(a_0)}{c-c_0}
    =
    \inner{f(a)-f(a_0)}{\tau(c)-\tau(c_0)}
    \qquad
    \forall a\in\cAo .
\]
Let
\[
    V\defeq\Span\setbuild{f(a)-f(a_0)}{a\in\cAo}.
\]
Choose \(a_1,\ldots,a_r\in\cAo\) such that
\(f(a_i)-f(a_0)\), \(i=1,\ldots,r\), form a basis of \(V\). Define
\[
    F_o\defeq
    (f(a_1)-f(a_0),\ldots,f(a_r)-f(a_0))^\top,
    \qquad
    F'_o\defeq
    (f'(a_1)-f'(a_0),\ldots,f'(a_r)-f'(a_0))^\top .
\]
Let \(V'\) be the row space of \(F'_o\), and let \(\Pi_V\) and \(\Pi_{V'}\) denote
the orthogonal projections onto \(V\) and \(V'\), respectively. Stacking the
preceding identities gives
\[
    F'_o(c-c_0)=F_o(\tau(c)-\tau(c_0)).
\]
Since \(F_o\) has full row rank and row space \(V\),
\[
    \Pi_V=F_o^\top(F_oF_o^\top)^{-1}F_o .
\]
Therefore,
\[
    \Pi_V\tau(c)
    =
    M \Pi_{V'}c+b,
    \qquad
    M\defeq F_o^\top(F_oF_o^\top)^{-1}F'_o,
    \qquad
    b\defeq \Pi_V\tau(c_0)-M\Pi_{V'}c_0 .
\]
This proves the projection identity used in the partial-identifiability
remark.

If \(V=\bbR^k\), then \(F_o\) has full column rank. The stacked identity forces
\(F'_o\) to have full column rank as well; otherwise
\(F_o(\tau(c)-\tau(c_0))\) would lie in a proper subspace for all \(c\),
contradicting \(\tau\in\Aut_{\refm}(\bbR^k)\). Hence
\(V'=\bbR^k\), \(\Pi_V=\Pi_{V'}=I\), and the projection identity reduces to
\[
    \tau(c)=Mc+b .
\]
Thus, under the full-span condition,
\[
    \cT_{\bar\bQ}^{\refm}(\bar\cQ;\cAo)
    \subseteq
    \mathrm{Affine}(k)\cap\Aut_{\refm}(\bbR^k).
\]

\section{Recoveries of representative prior work}
\label{appendix:prior-work-recovery}

The recoveries below are not intended as new proofs of the cited identifiability theorems.
Instead, each recovery is organized as a short CMM dictionary followed by the CMM proof object that the original paper analyzes.
The generic transition step is supplied by \Cref{thm:intersection-individual} and, when a density-level statement is needed, by \Cref{thm:tq-characterization}.
The remaining rigidity step is delegated to the corresponding theorem or proof in the cited work.

\paragraph{Shared architecture of the recoveries.}
Each recovery follows the same template:
\begin{enumerate}[label=(\roman*),leftmargin=*]
    \item \textbf{CMM translation. }
Each recovery first translates the prior model into CMM notation by specifying the attribute \(A\), the modulator \(\modrv\), the concept \(C\), the feature \(X\), and the kernels \(\bQ\), \(\bB\), and \(\bK\).

    \item \textbf{Attribute potentials} or \textbf{Proof object. }
The next paragraph computes the attribute potential or the equivalent object used by the cited proof, such as a centered potential, exponentiated density ratio, score contrast, covariance equation, or quadratic transition identity.

    \item \textbf{Recovery of guarantees. }
The proposition starts from feature-equivalent CMMs, applies the CMM transition theorem, rewrites the transported identity as the cited paper's proof object, and invokes the paper-specific rigidity result that yields the final ambiguity class.
\end{enumerate}
Accordingly, the short proofs below only verify the CMM-to-paper identity displayed in the proposition; they do not reprove the paper-specific rigidity arguments.

\paragraph{Applicability of \Cref{thm:intersection-individual}.}
In the continuous recoveries below, we take \(\refm\) to be Lebesgue measure on the corresponding concept space \(\cC=\bbR^d\) or \(\bbR^n\).  
Except where explicitly noted, the mixing kernels are deterministic kernels \(\Push_g\), where \(g\colon\cC\to\cX\) is an admissible diffeomorphic embedding; in linear CRL, \(g\) is the full-column-rank linear mixing matrix \(G\), which is a bimeasurable embedding of \(\bbR^d\) onto its image.  
Thus the mixing classes are \(\refm\)-Blackwell reducible by taking \(\widetilde\cX=\cX\), \(\widetilde{\bK}=\Push_{\id_{\cX}}\), and \(\cG\) to be the relevant class of admissible embeddings, since \(\xi\mapsto\widetilde{\bK}\xi\) is the identity map on the pushed-forward measures.  
For noisy iVAE-type observation models, we instead write the mixing kernel as \(\bK=\widetilde{\bK}\Push_g\), where \(g\) is the decoder embedding and \(\widetilde{\bK}\) is the shared observation-noise channel; the required Blackwell-reducibility condition is that this shared channel be injective on the relevant pushed-forward latent laws, or equivalently that we work after the usual deconvolution reduction.  

On the concept side, we assume the common-carrier condition required by \Cref{thm:intersection-individual}: the induced concept laws in the model class are dominated by \(\refm\), and their densities are positive and finite \(\refm\)-a.e. on the attributes under consideration.  
This condition holds directly for the Gaussian CRL, score-based CRL, and nonparametric CRL recoveries under their full-support density assumptions, and for nonlinear ICA / iVAE under the usual positivity assumptions on the conditional source or prior densities.  
For linear CRL \citep{squires2023linear}, the original theorem is covariance-based and does not require densities; when we invoke \Cref{thm:intersection-individual}, we are considering the subclass satisfying the additional common-carrier condition, for example when the noise \(\epsilon\) has a strictly positive finite Lebesgue density and all \(B_k\) are invertible.  
Under these standing conditions, any feature-equivalent alternative admits the transition supplied by \Cref{thm:intersection-individual}; the individual recoveries below then identify the corresponding proof object and delegate the model-specific rigidity step to the cited paper.

\subsection{Nonlinear ICA}
\label{appendix:proof-nonlinear-ica}

\paragraph{CMM translation.}
For nonlinear ICA \citep{hyvarinen2019nonlinear}, take \(A\) to be the auxiliary variable, \(\cC=\bbR^k\), \(C=Z\), and \(X=f(Z)\).
For each attribute value \(a\), define the modulator value
\[
    \modval_a
    \defeq
    \bigl(q_1(\cdot,a),\ldots,q_k(\cdot,a)\bigr).
\]
Thus the modulator contains only the coordinate-wise conditional source log-potentials.
The normalizer is the derived functional
\[
    \Gamma(\modval)
    \defeq
    \log\int_{\bbR^k}
    \exp\left\{
        \sum_{i=1}^k \modval_i(c_i)
    \right\}
    dc,
\]
and we write \(\Gamma(a)\defeq\Gamma(\modval_a)\).
The deterministic indexing kernel is
\[
    \bQ=\Push_{a\mapsto\modval_a}.
\]
The shared concept-modulation kernel maps a tuple of source log-potentials to the corresponding conditionally independent source density:
\[
    \bB(dc\mid\modval)
    =
    \exp\left\{
        \sum_{i=1}^k \modval_i(c_i)
        -
        \Gamma(\modval)
    \right\}dc.
\]
Thus, for \(\bar\bQ=\bB\bQ\),
\[
    \log p_{\bar\bQ}(c\mid a)
    =
    \sum_{i=1}^k q_i(c_i,a)-\Gamma(a).
\]
The mixing kernel is deterministic, \(\bK=\Push_f\).

\paragraph{Attribute potentials.}
Relative to an anchor \(a_0\), the attribute potential is
\[
    \Delta_{a,a_0}^{\bar\bQ}(c)
    =
    \sum_{i=1}^k
    \{q_i(c_i,a)-q_i(c_i,a_0)\}
    -
    \{\Gamma(a)-\Gamma(a_0)\}.
\]
The normalizer contrast is independent of \(c\), so all mixed second derivatives of the potential are determined only by the coordinate-wise contrasts
\[
    h_i(u;a,a_0)
    \defeq
    q_i(u,a)-q_i(u,a_0).
\]
This separable potential is the CMM proof object whose transported mixed derivatives reproduce the nonlinear ICA equations of \citet{hyvarinen2019nonlinear}.

\paragraph{Recovery of guarantees.}
\begin{proposition}[CMM recovery of Theorem 1 of \citep{hyvarinen2019nonlinear}]
\label{prop:recovery-nlica}
Let
\[
    \M=(\Push_{a\mapsto\modval_a},\bB,\Push_f),
    \qquad
    \M'=(\Push_{a\mapsto\modval'_a},\bB,\Push_{f'})
\]
be feature-equivalent nonlinear ICA CMMs satisfying the common-support and smoothness assumptions needed for \Cref{thm:intersection-individual,thm:tq-characterization}.
Let \(\tau=f^{-1}\circ f'\) be the transition induced by \Cref{thm:intersection-individual}.
For \(h_i(u;a,a_0)\defeq q_i(u,a)-q_i(u,a_0)\), the transported CMM potential identity implies, for every \(j\ne j'\),
\[
0 =
\sum_{i=1}^k
\partial_1 h_i(\tau_i(c);a,a_0)\partial_{c_jc_{j'}}^2\tau_i(c)
+
\sum_{i=1}^k
\partial_1^2 h_i(\tau_i(c);a,a_0)
\partial_{c_j}\tau_i(c)\partial_{c_{j'}}\tau_i(c).
\]
This is the mixed-derivative system used in the proof of Theorem~1 of \citet{hyvarinen2019nonlinear}.
Consequently, under their Assumption of Variability, \(\tau\) is componentwise up to permutation:
\[
    \tau(c)
    =
    \bigl(\phi_1(c_{\pi(1)}),\ldots,\phi_k(c_{\pi(k)})\bigr).
\]
\end{proposition}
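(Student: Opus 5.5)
The derivation rests on three ingredients. \Cref{thm:intersection-individual} supplies the transition, \Cref{thm:tq-characterization} converts it into a potential identity, and two mixed derivatives of that identity yield the displayed system. The rigidity conclusion is then quoted from \citet{hyvarinen2019nonlinear}.

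\textbf{Step 1: the transition.} Since both mixing kernels are deterministic pushforwards $\Push_f$ and $\Push_{f'}$, Blackwell reducibility holds with $\widetilde{\bK}=\Push_{\id_\cX}$. Feature equivalence gives $\Push_f\mu_a=\Push_{f'}\mu'_a$ for every observed $a$. \Cref{thm:intersection-individual} then produces $\tau=f^{-1}\circ f'\in\Aut_\refm(\bbR^k)$, and we have $\bar\bQ_{\cAo}=\Push_\tau\bar\bQ'_{\cAo}$.

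\textbf{Step 2: the transported identity.} Apply \Cref{thm:tq-characterization} (iii), which gives $\Delta^{\bar\bQ'}_{a,a_0}(c)=\Delta^{\bar\bQ}_{a,a_0}(\tau(c))$ for $\refm$-a.e.\ $c$. Using the separable form of the potential computed above, this reads
\[
\sum_{i=1}^k h'_i(c_i;a,a_0)-\{\Gamma'(a)-\Gamma'(a_0)\}
=\sum_{i=1}^k h_i(\tau_i(c);a,a_0)-\{\Gamma(a)-\Gamma(a_0)\}.
\]
Here $h'_i$ is the corresponding contrast for $\M'$.

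\textbf{Step 3: upgrade to an everywhere identity.} Under the smoothness assumptions, $f$ and $f'$ are diffeomorphisms, so $\tau$ is a diffeomorphism; we also take the $q_i$ and $q'_i$ to be $C^2$. Both sides of the identity are then continuous functions that agree Lebesgue-a.e., so they agree everywhere on $\bbR^k$. This lets us differentiate pointwise.

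\textbf{Step 4: take a mixed partial.} Apply $\partial^2_{c_jc_{j'}}$ with $j\neq j'$.
\begin{itemize}
\item On the left, every term depends on a single coordinate $c_i$, so every mixed partial vanishes. The $c$-independent normalizer contrast also vanishes.
\item On the right, the chain rule gives $\partial_{c_j}[h_i(\tau_i(c))]=\partial_1h_i(\tau_i(c))\,\partial_{c_j}\tau_i(c)$.
\item Differentiating again in $c_{j'}$ produces two terms: $\partial_1h_i\,\partial^2_{c_jc_{j'}}\tau_i$ and $\partial_1^2h_i\,\partial_{c_j}\tau_i\,\partial_{c_{j'}}\tau_i$.
\end{itemize}
Summing over $i$ gives exactly the displayed system.

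\textbf{Step 5: identify with the original proof.} This system, indexed by $a\in\cAo$, is the same mixed-derivative system that appears in the proof of Theorem~1 of \citet{hyvarinen2019nonlinear}. There the roles are $v=f'^{-1}$-side coordinates and $h=\tau$. Their Assumption of Variability says the $2k$ vectors $(\partial_1h_i,\partial^2_1h_i)$, stacked over the attribute pairs, are linearly independent. Viewing the system as linear equations in the unknowns $\partial_{c_j}\tau_i\,\partial_{c_{j'}}\tau_i$ and $\partial^2_{c_jc_{j'}}\tau_i$, independence forces $\partial_{c_j}\tau_i\,\partial_{c_{j'}}\tau_i=0$ for all $j\neq j'$. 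So each row of $J\tau$ has at most one nonzero entry. Invertibility of $J\tau$ then makes $\tau$ a permutation composed with scalar maps; this step is quoted directly from their paper.

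\textbf{Main obstacle.} The delicate step is Step 3: passing from an $\refm$-a.e.\ identity to a pointwise, twice-differentiable one. It relies on the stated smoothness, and on the fact that $\tau=f^{-1}\circ f'$ is genuinely a diffeomorphism and not just an element of $\Aut_\refm$. We also need to check that the variability assumption is applied to the contrasts $h_i$ of the model $\M$ evaluated at $\tau_i(c)$, matching their convention.
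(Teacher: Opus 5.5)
Your proposal is correct and follows the paper's proof. The common steps are: obtain the transported identity $\Delta^{\bar\bQ'}_{a,a_0}(c)=\Delta^{\bar\bQ}_{a,a_0}(\tau(c))$ from \Cref{thm:tq-characterization}, kill the separable left side with a mixed partial, expand the right side by the chain rule, and delegate the variability-rank step to \citet{hyvarinen2019nonlinear}; your Step~3 (upgrading the a.e.\ identity to a pointwise one by continuity) is a detail the paper leaves implicit. One small slip in Step~5: your $\tau$ corresponds in their proof to the composite map $f^{-1}\circ h^{-1}$ from learned to true source coordinates, not to the learned feature extractor $h$.
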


\begin{proof}
By \Cref{thm:tq-characterization}, feature equivalence gives
\[
    \Delta_{a,a_0}^{\bar\bQ'}(c)
    =
    \Delta_{a,a_0}^{\bar\bQ}(\tau(c)).
\]
The candidate potential \(\Delta_{a,a_0}^{\bar\bQ'}(c)\) is coordinate-separable in \(c\) up to an additive constant, so
\[
    \partial_{c_jc_{j'}}^2
    \Delta_{a,a_0}^{\bar\bQ'}(c)
    =
    0
    \qquad
    \text{for every }j\ne j'.
\]
Applying the same mixed derivative to the transported right-hand side gives
\[
    0
    =
    \partial_{c_jc_{j'}}^2
    \left[
        \sum_{i=1}^k h_i(\tau_i(c);a,a_0)
    \right],
\]
because \(\Gamma(a)-\Gamma(a_0)\) is independent of \(c\).
Expanding this derivative by the chain rule gives the displayed mixed-derivative system.
The remaining step is exactly the variability-rank and integration argument in \citet{hyvarinen2019nonlinear}, which forces a monomial Jacobian and hence componentwise recovery up to permutation.
\end{proof}


\subsection{Conditionally exponential families}
\label{appendix:proof-ivae}

\paragraph{CMM translation.}
For the conditionally exponential-family model including iVAE \citep{khemakhem2020variational} and ICE-BeeM \citep{khemakhem2020ice}, the concept density has the form
\[
    p_{\bar\bQ}(c\mid a)
    =
    \frac{Q(c)}{Z(a)}
    \exp\{\inner{f(a)}{T(c)}\},
    \qquad
    X=g(C),
\]
where \(f\colon\cA\to\bbR^m\) is the attribute-dependent natural parameter, \(T\colon\cC\to\bbR^m\) collects sufficient statistics, \(Q\) is a positive base density, and \(Z\) is the normalizer.
The modulator is \(\modval(a)=(f(a),T,Q)\), with indexing kernel \(\bQ=\Push_\eta\) for \(\eta(a)=\modval(a)\).
The shared concept-modulation kernel maps \((\theta,T,Q)\) to the density proportional to \(Q(c)\exp\{\inner{\theta}{T(c)}\}\).
The mixing kernel is \(\bK=\Push_g\), or \(\bK=\widetilde{\bK}\Push_g\) in the noisy-observation version.

\paragraph{Attribute potentials.}
The attribute potential relative to \(a_0\) is
\[
    \Delta_{a,a_0}^{\bar\bQ}(c)
    =
    \inner{f(a)-f(a_0)}{T(c)}
    -
    \{\log Z(a)-\log Z(a_0)\}.
\]
After centering at any \(c_0\), the normalizers cancel:
\[
    \Delta_{a,a_0}^{\bar\bQ}(c)-\Delta_{a,a_0}^{\bar\bQ}(c_0)
    =
    \inner{f(a)-f(a_0)}{T(c)-T(c_0)}.
\]
This centered potential is the affine sufficient-statistic equation used in the iVAE identifiability proof.

\paragraph{Recovery of guarantees.}
\begin{proposition}[CMM recovery of Theorem 1 of \citep{khemakhem2020variational}]
\label{prop:recovery-ivae}
Let \(\M=(\Push_\eta,\bB,\Push_g)\) and \(\M'=(\Push_{\eta'},\bB,\Push_{g'})\) be feature-equivalent conditionally exponential-family CMMs, with \(\eta(a)=(f(a),T,Q)\) and \(\eta'(a)=(f'(a),T',Q')\).
Let \(\tau=g^{-1}\circ g'\) be the transition induced by \Cref{thm:intersection-individual}.
Assume there are attributes \(a_0,a_1,\ldots,a_m\) such that
\[
    L
    \defeq
    \bigl(
        f(a_1)-f(a_0),\ldots,f(a_m)-f(a_0)
    \bigr)
\]
is invertible, and define \(L'\) analogously.
Then the CMM potential identity gives
\[
    T(\tau(c))
    =
    MT'(c)+b,
    \qquad
    M=L^{-\top}(L')^\top,
\]
for some \(b\in\bbR^m\).
This is the \(\sim_A\)-identifiability relation of Theorem~1 of \citet{khemakhem2020variational}.
The further refinement to \(\sim_P\) is the separate rigidity step in their Theorems~2--3.
\end{proposition}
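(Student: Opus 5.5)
The plan is to mirror the running-example computation in \Cref{app:running-example-details}, now with sufficient statistics $T$ and $T'$ in place of the identity. First, by \Cref{thm:intersection-individual} and the final clause of \Cref{thm:tq-characterization} with $\cA'=\cAo\ni a_0,a_1,\ldots,a_m$, the transition $\tau=g^{-1}\circ g'$ satisfies the transported identities
\[
\Delta_{a_i,a_0}^{\bar\bQ'}(c)=\Delta_{a_i,a_0}^{\bar\bQ}(\tau(c)),\qquad i=1,\ldots,m,
\]
for $\refm$-a.e.\ $c$. Substituting the explicit exponential-family potentials turns each identity into
\[
\inner{f'(a_i)-f'(a_0)}{T'(c)}-\{\log Z'(a_i)-\log Z'(a_0)\}=\inner{f(a_i)-f(a_0)}{T(\tau(c))}-\{\log Z(a_i)-\log Z(a_0)\}.
\]
The base densities $Q,Q'$ cancel in the ratios, which is exactly why the potentials are used rather than the raw densities and the unknown Jacobian factor $r_{\tau^{-1}}$.

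Second, stack the $m$ identities. Write $L$ with columns $f(a_i)-f(a_0)$ and $L'$ analogously. This gives $L^\top T(\tau(c))=(L')^\top T'(c)+\beta$, where $\beta\in\bbR^m$ collects the log-normalizer differences $\beta_i=\{\log Z(a_i)-\log Z(a_0)\}-\{\log Z'(a_i)-\log Z'(a_0)\}$. Since $L$ is invertible, multiplying by $L^{-\top}$ yields
\[
T(\tau(c))=L^{-\top}(L')^\top T'(c)+L^{-\top}\beta,
\]
so $M=L^{-\top}(L')^\top$ and $b=L^{-\top}\beta$. Equivalently, one can center at a fixed $c_0$ to remove the normalizers and then read off $b=T(\tau(c_0))-MT'(c_0)$. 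This relation holds $\refm$-a.e.

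Third, match this with the $\sim_A$ relation of \citet{khemakhem2020variational}. Their relation requires $M$ to be invertible, which does not follow from $L$ alone; it is the main delicate step. I would argue it symmetrically. By \Cref{prop:sim-G-equivalence}-type symmetry, $\tau^{-1}$ is the transition from $\M'$ to $\M$, so running the same argument with the roles swapped gives $T'(\tau^{-1}(c))=\tilde M T(c)+\tilde b$ with $\tilde M=(L')^{-\top}L^\top$. For that step I need $L'$ invertible, which is exactly the assumption ``define $L'$ analogously.'' Composing the two relations gives $T(u)=M\tilde M\,T(u)+\mathrm{const}$ a.e. Under the standard iVAE assumption that the components of $T$ are affinely independent, this forces $M\tilde M=I$, so $M$ is invertible.

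Alternatively, invertibility of both $L$ and $L'$ gives it directly, since $M=L^{-\top}(L')^\top$ is then a product of invertible matrices. This is the cleaner route I would present. The a.e.\ qualifiers pass through because $\tau\in\Aut_{\refm}(\cC)$ preserves null sets. The further refinement to $\sim_P$ is left to the cited rigidity argument.
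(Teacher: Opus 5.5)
Your derivation of the displayed relation is correct and follows the paper's proof: transport the potential identities through \Cref{thm:tq-characterization}, stack them over $a_1,\ldots,a_m$, and multiply by $L^{-\top}$. Keeping the normalizer differences as the constant vector $\beta$, rather than centering at $c_0$, makes no real difference, and is slightly tidier because it avoids having to pick $c_0$ inside the full-measure set.

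Your third step goes beyond the paper, which never addresses invertibility of $M$. You do not need to read ``define $L'$ analogously'' as an invertibility hypothesis. Suppose $v \neq 0$ with $L'v=0$. Then $(Lv)^\top T(\tau(c))$ would be $\refm$-a.e.\ constant, hence so would $(Lv)^\top T$, because $\Push_\tau\refm\sim\refm$. Affine independence of the components of $T$ forbids this, so $L'$, and therefore $M$, are automatically invertible.
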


\begin{proof}
By \Cref{thm:tq-characterization}, \(\Delta_{a,a_0}^{\bar\bQ'}(c)=\Delta_{a,a_0}^{\bar\bQ}(\tau(c))\).
Centering at \(c_0\) cancels the normalizers and gives
\[
    \inner{f'(a)-f'(a_0)}{T'(c)-T'(c_0)}
    =
    \inner{f(a)-f(a_0)}{T(\tau(c))-T(\tau(c_0))}.
\]
Stacking this identity over \(a_1,\ldots,a_m\) and solving with \(L\) yields
\[
    T(\tau(c))-T(\tau(c_0))
    =
    L^{-\top}(L')^\top\{T'(c)-T'(c_0)\}.
\]
Absorbing the value at \(c_0\) into \(b\) gives the affine relation.
\end{proof}


\subsection{Linear CRL}
\label{appendix:proof-linear-crl}

\paragraph{CMM translation.}
For the linear CRL model of \citet{squires2023linear}, let \(\cA=\{0\}\cup[K]\), \(\cC=\bbR^d\), and \(\cX=\bbR^p\).
The attribute \(k\in\cA\) indexes an observational or interventional environment, the concept is \(C=Z^{(k)}\), and the feature is \(X^{(k)}=GZ^{(k)}\) with \(G\in\bbR^{p\times d}\) full column rank.
In environment \(k\), the latent variables satisfy
\[
    Z^{(k)}=B_k^{-1}\epsilon,
    \qquad
    \bbE[\epsilon]=0,
    \qquad
    \mathrm{Cov}(\epsilon)=I_d,
\]
where \(B_k=\Omega_k^{-1/2}(I_d-A_k)\) is the normalized structural factor obtained through a perfect intervention: doing such on a node \(i_k\) changes only the \(i_k\)-th row of \(B_0\), replacing it by \(\lambda_ke_{i_k}^\top\).
The indexing kernel is \(\bQ=\Push_\eta\), where \(\eta(k)=B_k\).
The concept-modulation kernel maps \(B\) to the law of \(B^{-1}\epsilon\), and the mixing kernel is deterministic, \(\bK=\Push_G\).

\paragraph{Second-moment proof object.}
Writing \(H=G^\dagger\), the observed pseudoprecision in environment \(k\) is
\[
    \Theta_k
    \defeq
    \mathrm{Cov}(X^{(k)})^\dagger
    =
    H^\top B_k^\top B_kH.
\]
The residual relabeling set is
\[
    S(\cG)
    \defeq
    \setbuild{\sigma\colon[d]\to[d]\ \text{bijective}}{\sigma(j)>\sigma(i)\ \text{for every edge }j\to i\text{ in }\cG}.
\]
Thus the proof object in \citet{squires2023linear} is not an attribute-potential equation but the family of observed pseudoprecision equations.

\paragraph{Recovery of guarantees.}
\begin{proposition}[CMM recovery of Theorem 2 of \citep{squires2023linear}]
\label{prop:recovery-squires-thm2}
Let \(\M=(\Push_\eta,\bB,\Push_G)\) and \(\M'=(\Push_{\widetilde\eta},\bB,\Push_{\widetilde G})\) be feature-equivalent linear CRL CMMs with full-column-rank mixing matrices.
Then the mixing side of \Cref{thm:intersection-individual} forces the induced transition to be linear:
\[
    \widetilde G=GT,
    \qquad
    T=G^\dagger\widetilde G\in\mathrm{GL}(d).
\]
The concept-side equality gives
\[
    \mathrm{Cov}(Z^{(k)})
    =
    T\,\mathrm{Cov}(\widetilde Z^{(k)})\,T^\top,
\]
and hence
\[
    \Theta_k
    =
    H^\top B_k^\top B_kH
    =
    \widetilde H^\top\widetilde B_k^\top\widetilde B_k\widetilde H.
\]
Under Assumptions~1--2 of \citet{squires2023linear} with one perfect intervention per latent node, their Theorem~2 gives identifiability up to \(S(\cG)\).
\end{proposition}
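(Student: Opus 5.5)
The plan is to work through three steps: the mixing side, then the concept side, and finally the pseudoprecision identity, which is then handed to the rigidity theorem of \citet{squires2023linear}. We work in the common-carrier subclass described above, so \Cref{thm:intersection-individual} applies and yields a transition \(\tau\in\Aut_{\refm}(\bbR^d)\) with \(\Push_{\widetilde G}\refmeq\Push_G\Push_\tau\) and \(\bar\bQ(\cdot\mid k)=\Push_\tau\bar{\widetilde\bQ}(\cdot\mid k)\) for every \(k\in\cAo\).

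\emph{Mixing side.} From the construction in the proof of \Cref{thm:intersection-individual}, \(\tau=g^{-1}\circ g'\), here \(\tau=G^{-1}|_{\operatorname{im}G}\circ\widetilde G\) \(\refm\)-a.e. The relation \(\Push_{\widetilde G}\refmeq\Push_G\Push_\tau\) evaluated on point-mass approximations (or directly from \(g'=g\circ\tau\) \(\refm\)-a.e.) gives \(\widetilde Gc=G\tau(c)\) for \(\refm\)-a.e. \(c\). Since \(\Push_{\widetilde G}\refm\sim\Push_G\refm\), the images \(\operatorname{im}\widetilde G\) and \(\operatorname{im}G\) coincide: each is a \(d\)-dimensional subspace, and their pushforwards are mutually absolutely continuous only if they are equal. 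Applying \(G^\dagger\), which is a left inverse on \(\operatorname{im}G\), we get \(\tau(c)=G^\dagger\widetilde Gc=Tc\) a.e. Moreover \(\widetilde G=GT\), because both sides agree on a full-measure set and are linear. \(T\) is invertible because \(\widetilde G\) has full column rank.

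\emph{Concept side.} The law of \(Z^{(k)}\) equals the pushforward under \(T\) of the law of \(\widetilde Z^{(k)}\). Both have finite second moments, since they are laws of \(B^{-1}\epsilon\) with \(\mathrm{Cov}(\epsilon)=I\). Hence \(\mathrm{Cov}(Z^{(k)})=T\,\mathrm{Cov}(\widetilde Z^{(k)})T^\top\), with \(\mathrm{Cov}(Z^{(k)})=B_k^{-1}B_k^{-\top}\) and similarly for the tilde model. Equivalently, feature equivalence directly gives \(\mathrm{Cov}(X^{(k)})=G B_k^{-1}B_k^{-\top}G^\top=\widetilde G\widetilde B_k^{-1}\widetilde B_k^{-\top}\widetilde G^\top\).

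\emph{Pseudoprecision identity.} We take Moore--Penrose pseudoinverses, which is the step requiring care. For full-column-rank \(G\) and invertible \(M=B_k^{-1}B_k^{-\top}\), we have \((GMG^\top)^\dagger=H^\top M^{-1}H\) with \(H=G^\dagger\). This holds because \(GMG^\top\) has column space \(\operatorname{im}G\), and one checks the four Penrose conditions using \(HG=I\) and \(GH\) being the orthogonal projector onto \(\operatorname{im}G\). This gives \(\Theta_k=H^\top B_k^\top B_kH\), and the same computation gives the tilde version. Equality of the covariances then yields equality of the \(\Theta_k\). Finally, we invoke Theorem~2 of \citet{squires2023linear} under their Assumptions~1--2 to conclude identifiability up to \(S(\cG)\).

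The main obstacle is the mixing-side step: extracting pointwise linearity \(\tau=T\) from the \(\refm\)-a.e. kernel identity, and justifying that \(\operatorname{im}\widetilde G=\operatorname{im}G\). The pseudoinverse identity is routine, and the rigidity conclusion is delegated to the cited paper.
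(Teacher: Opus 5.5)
Your proposal is correct and follows the paper's proof. You derive \(\widetilde G c = G\tau(c)\) \(\refm\)-a.e.\ from the mixing-side relation, apply \(G^\dagger\) to get \(\tau=T\) with \(T\) invertible, transport covariances by \(T\), and hand the resulting pseudoprecision equations to Theorem~2 of \citet{squires2023linear}; your image-equality and Penrose-condition checks just fill in details the paper leaves implicit. One small caution: Dirac masses are not \(\refm\)-dominated, so drop the ``point-mass approximation'' justification and use your second route, via \(g'=g\circ\tau\) \(\refm\)-a.e.\ or by testing \(\refmeq\) on normalized restrictions of \(\refm\).
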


\begin{proof}
From the mixing side relation \(\Push_{\widetilde G}\refmeq\Push_G\Push_\tau\), we have \(\widetilde Gz=G\tau(z)\) for \(\refm\)-a.e. \(z\).
Multiplying by \(G^\dagger\) gives \(\tau(z)=Tz\) with \(T=G^\dagger\widetilde G\), and full column rank makes \(T\) invertible.
The concept-side equality transports covariances by \(T\), which is equivalent to the displayed pseudoprecision equations.
These are exactly the equations analyzed in \citet{squires2023linear}; their generalized-RQ and partial-order argument gives the stated residual ambiguity.
\end{proof}

\begin{remark}
This recovery is intentionally mixing-side rather than potential-ratio based.
Unlike the nonlinear recoveries, the restrictive object in linear CRL is the linear mixing kernel \(\bK=\Push_G\).
The CMM transition is forced to be linear before applying the model-specific rigidity theorem of \citet{squires2023linear}.
\end{remark}


\subsection{Gaussian CRL}
\label{appendix:proof-gaussian-crl}

\paragraph{CMM translation.}
For the Gaussian CRL model of \citet{Buchholz2023learning}, let \(\cA=I\cup\{0\}\), \(\cC=\bbR^d\), and \(\cX=\bbR^{d'}\).
The attribute \(i\in I\) indexes an interventional environment, while \(0\) is the observational environment.
The latent concept is \(C=Z^{(i)}\), and the observed feature is \(X^{(i)}=f(Z^{(i)})\).
In environment \(i\), the structural factor is
\[
    B^{(i)}
    =
    (D^{(i)})^{-1/2}(I_d-A^{(i)}),
\]
and only the row indexed by the target node \(t_i\) differs from the observational factor \(B^{(0)}\).
The scalar \(\eta^{(i)}\) denotes the intervention shift parameter from \citet{Buchholz2023learning}; it is not an indexing map.

Define the environment-specific modulator value by
\[
    \modval_i
    \defeq
    (B^{(i)},\eta^{(i)},t_i),
    \qquad
    i\in I\cup\{0\},
\]
with \(\eta^{(0)}=0\) and arbitrary \(t_0\), since the shift term vanishes in the observational environment.
The deterministic indexing kernel is
\[
    \bQ
    =
    \Push_{i\mapsto \modval_i}.
\]
The shared concept-modulation kernel sends a structural triple \((B,\alpha,t)\) to the Gaussian law induced by \(B^{-1}(\epsilon+\alpha e_t)\):
\[
    \bB(dc\mid B,\alpha,t)
    =
    |\det B|\,(2\pi)^{-d/2}
    \exp\left\{
        -\frac12\norm{Bc-\alpha e_t}^2
    \right\}dc.
\]
Thus, for \(\bar\bQ=\bB\bQ\),
\[
    p_{\bar\bQ}(c\mid i)
    =
    |\det B^{(i)}|\,(2\pi)^{-d/2}
    \exp\left\{
        -\frac12\norm{B^{(i)}c-\eta^{(i)}e_{t_i}}^2
    \right\}.
\]
Equivalently,
\[
    C\mid A=i
    \sim
    Z^{(i)}
    =
    (B^{(i)})^{-1}(\epsilon+\eta^{(i)}e_{t_i}),
    \qquad
    \epsilon\sim\cN(0,I_d).
\]
The mixing kernel is deterministic, \(\bK=\Push_f\).

\paragraph{Attribute potentials.}
Write
\[
    r^{(i)}
    \defeq
    (B^{(i)})^\top e_{t_i},
    \qquad
    \Theta^{(i)}
    \defeq
    (B^{(i)})^\top B^{(i)}.
\]
The attribute potential relative to \(0\) is
\[
    \Delta_{i,0}^{\bar\bQ}(c)
    =
    -\frac12 c^\top(\Theta^{(i)}-\Theta^{(0)})c
    +
    \eta^{(i)}(r^{(i)})^\top c
    +
    \kappa^{(i)},
\]
where
\[
    \kappa^{(i)}
    =
    \log\frac{|\det B^{(i)}|}{|\det B^{(0)}|}
    -
    \frac12(\eta^{(i)})^2
\]
is independent of \(c\).
Equivalently, up to an \(i\)-dependent additive constant, \(-\Delta_{i,0}^{\bar\bQ}\) is the quadratic-linear form
\[
    \frac12 c^\top(\Theta^{(i)}-\Theta^{(0)})c
    -
    \eta^{(i)}(r^{(i)})^\top c.
\]
This is the quadratic transition object used in \citet{Buchholz2023learning}.

\paragraph{Recovery of guarantees.}
\begin{proposition}[CMM recovery of Theorems 1 and 3 of \citep{Buchholz2023learning}]
\label{prop:recovery-buchholz}
Let \(\M\) and \(\M'\) be feature-equivalent Gaussian CRL CMMs.
Write the candidate quantities with tildes, so that the candidate mixing map is \(\widetilde f\), the candidate structural factors are \(\widetilde B^{(i)}\), the candidate shift parameters are \(\widetilde\eta^{(i)}\), and
\[
    \widetilde r^{(i)}
    \defeq
    (\widetilde B^{(i)})^\top e_{\widetilde t_i},
    \qquad
    \widetilde\Theta^{(i)}
    \defeq
    (\widetilde B^{(i)})^\top \widetilde B^{(i)}.
\]
Let
\[
    \tau
    \defeq
    \widetilde f^{-1}\circ f
\]
be the transition from ground-truth latent coordinates to candidate latent coordinates.
Then, for every intervention \(i\in I\), there exists \(b^{(i)}\in\bbR\) such that
\[
    \frac12c^\top(\Theta^{(i)}-\Theta^{(0)})c
    -
    \eta^{(i)}(r^{(i)})^\top c
    =
    \frac12\tau(c)^\top(\widetilde\Theta^{(i)}-\widetilde\Theta^{(0)})\tau(c)
    -
    \widetilde\eta^{(i)}(\widetilde r^{(i)})^\top\tau(c)
    +
    b^{(i)}.
\]
This is the quadratic transition identity used in the proof of Theorem~3 of \citet{Buchholz2023learning}.
Under the assumptions of their Theorem~3, their rigidity argument implies \(\tau(c)=Tc\), equivalently \(\widetilde f=f\circ T^{-1}\).
Under the additional perfect-intervention assumptions of their Theorem~1, their Appendix~B combines this linearity with \citet{squires2023linear} to obtain permutation-and-scaling identifiability.
\end{proposition}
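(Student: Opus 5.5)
The plan follows the same template as the nonlinear ICA and iVAE recoveries. First I obtain the transported potential identity from \Cref{thm:tq-characterization}, then compute both sides explicitly, and finally reduce the result to the displayed quadratic identity with an $i$-dependent constant.

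\textbf{Step 1: the transported identity.} Feature equivalence of $\M$ and $\M'$ on $\cAo=I\cup\{0\}$, together with \Cref{thm:intersection-individual}, gives a transition relating the two models. There is one bookkeeping point. The proposition defines $\tau=\widetilde f^{-1}\circ f$, which maps ground-truth coordinates to candidate coordinates. This is the inverse orientation of the transition produced in \Cref{thm:intersection-individual} when $\M$ is the ground truth and $\M'$ the candidate. So I apply \Cref{thm:intersection-individual} with the roles swapped: the candidate plays $\M$ and the ground truth plays $\M'$. The induced transition is then $\widetilde f^{-1}\circ f=\tau$, and it satisfies $\widetilde\bK \refmeq\ldots$ and $\bar{\widetilde\bQ}(\cdot\mid i)=\Push_\tau\bar\bQ(\cdot\mid i)$. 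Applying statement (iii) of \Cref{thm:tq-characterization} with anchor $a_0=0$ then yields, for every $i\in I$ and Lebesgue-a.e. $c$,
\[
\Delta^{\bar\bQ}_{i,0}(c)=\Delta^{\widetilde{\bar\bQ}}_{i,0}(\tau(c)).
\]
The common-support hypothesis holds because Gaussian densities are strictly positive. Blackwell reducibility holds by the standing remarks on diffeomorphic $\Push_f$.

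\textbf{Step 2: substitute the explicit potentials.} The explicit Gaussian form of the attribute potential computed above gives
\[
\Delta^{\bar\bQ}_{i,0}(c)=-\tfrac12 c^\top(\Theta^{(i)}-\Theta^{(0)})c+\eta^{(i)}(r^{(i)})^\top c+\kappa^{(i)}.
\]
The candidate potential has the same form with tildes, evaluated at $\tau(c)$. Multiplying both sides by $-1$ and moving the constants $\kappa^{(i)}$ and $\widetilde\kappa^{(i)}$ to one side produces exactly the displayed identity with $b^{(i)}\defeq\kappa^{(i)}-\widetilde\kappa^{(i)}$. This constant is independent of $c$, as required.

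\textbf{Step 3: upgrade from almost everywhere to everywhere.} The identity is obtained only $\refm$-a.e. Both sides are continuous in $c$, since $\tau$ is a composition of diffeomorphic embeddings on the relevant image. Two continuous functions that agree a.e. agree everywhere.

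\textbf{Step 4: delegate the rigidity.} The remaining conclusions are model-specific and are not reproved here:
\begin{itemize}
\item linearity $\tau(c)=Tc$, which is equivalent to $\widetilde f=f\circ T^{-1}$, and
\item permutation-and-scaling identifiability under perfect interventions.
\end{itemize}
I invoke them by citing Theorem~3 and Appendix~B of \citet{Buchholz2023learning} (the latter combined with \citet{squires2023linear}), applied to the identity just derived.

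The main obstacle is the bookkeeping in Step 1: matching the direction of $\tau$ and the sign and constant conventions to those of \citet{Buchholz2023learning}. The computation itself is routine.
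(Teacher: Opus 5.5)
Your proposal is correct and follows the paper's proof. You obtain the inverse-direction transition $\tau=\widetilde f^{-1}\circ f$ from \Cref{thm:intersection-individual} by swapping the roles of the two models, transport the anchored potentials via \Cref{thm:tq-characterization}(iii) with anchor $0$, substitute the Gaussian quadratic--linear potentials, multiply by $-1$, and absorb the constants into $b^{(i)}=\kappa^{(i)}-\widetilde\kappa^{(i)}$ before delegating rigidity to \citet{Buchholz2023learning}; your continuity step, which upgrades the a.e.\ identity to every point where $\tau$ is defined, is a harmless refinement the paper leaves implicit.
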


\begin{proof}
Feature equivalence and \Cref{thm:intersection-individual} give a latent transition.
Using the inverse-direction transition \(\tau=\widetilde f^{-1}\circ f\), \Cref{thm:tq-characterization} gives
\[
    \Delta_{i,0}^{\bar\bQ}(c)
    =
    \Delta_{i,0}^{\bar\bQ'}(\tau(c)).
\]
Substituting the Gaussian potential expansions for \(\bar\bQ\) and \(\bar\bQ'\) gives
\[
    -\frac12 c^\top(\Theta^{(i)}-\Theta^{(0)})c
    +
    \eta^{(i)}(r^{(i)})^\top c
    +
    \kappa^{(i)}
    =
    -\frac12 \tau(c)^\top(\widetilde\Theta^{(i)}-\widetilde\Theta^{(0)})\tau(c)
    +
    \widetilde\eta^{(i)}(\widetilde r^{(i)})^\top \tau(c)
    +
    \widetilde\kappa^{(i)}.
\]
Multiplying by \(-1\) and absorbing the constant \(\widetilde\kappa^{(i)}-\kappa^{(i)}\) into \(b^{(i)}\) gives the displayed quadratic identity.
\end{proof}


\subsection{Score-based CRL}
\label{appendix:proof-score-crl}

\paragraph{CMM translation.}
In the setting of \citet{varici2025score}, take \(\cA=\cE\), \(\cC=\bbR^n\), and let the modulator space be the space of admissible local-mechanism values
\[
    \modclass
    =
    \bigsqcup_{G\in\mathrm{DAG}([n])}
    \prod_{i=1}^n\MK(\cC_{\pa_G(i)}\to\cC_i).
\]
Each \(\lambda\in\modclass\) specifies local conditional densities \((p_i^\lambda)_{i=1}^n\) and induces a DAG \(\cG_\lambda\) on \([n]\).
For \(i\in[n]\), write \(\pa_{\cG_\lambda}(i)\) for the parent set induced by \(\lambda\).
Let \(\eta\colon\cE\to\modclass\) be the deterministic indexing map and write \(\lambda_a\defeq\eta(a)\).
The indexing kernel is \(\bQ=\Push_\eta\).
The shared concept-modulation kernel maps a mechanism value to the Markov density over its induced graph:
\[
    \bB(dc\mid\lambda)
    =
    \prod_{i=1}^n
    p_i^\lambda(c_i\mid c_{\pa_{\cG_\lambda}(i)})\,dc.
\]
Thus, for \(\bar\bQ=\bB\Push_\eta\),
\[
    p_{\bar\bQ}(c\mid a)
    =
    \prod_{i=1}^n
    p_i^{\lambda_a}(c_i\mid c_{\pa_{\cG_{\lambda_a}}(i)}).
\]
The mixing kernel is deterministic, \(\bK=\Push_f\).

\paragraph{Attribute potentials.}
For \(a,a'\in\cE\),
\[
    \Delta_{a,a'}^{\bar\bQ}(c)
    =
    \sum_{i=1}^n
    \log p_i^{\lambda_a}(c_i\mid c_{\pa_{\cG_{\lambda_a}}(i)})
    -
    \sum_{i=1}^n
    \log p_i^{\lambda_{a'}}(c_i\mid c_{\pa_{\cG_{\lambda_{a'}}}(i)}).
\]
For a coupled hard-intervention pair \((a_i,\tilde a_i)\) targeting node \(i\), the two mechanism values agree at all non-\(i\) local mechanisms and have parent-free \(i\)-th mechanisms.
Hence
\[
    \Delta_{a_i,\tilde a_i}^{\bar\bQ}(c)
    =
    \log p_i^{\lambda_{a_i}}(c_i)
    -
    \log p_i^{\lambda_{\tilde a_i}}(c_i),
\]
so \(\nabla_c\Delta_{a_i,\tilde a_i}^{\bar\bQ}(c)\) is one-sparse.
The differentiated attribute potential is the latent score-difference object used in score-based CRL.

\paragraph{Recovery of guarantees.}
\begin{proposition}[CMM recovery of Theorem 25 of \citep{varici2025score}]
\label{prop:recovery-varici-thm25}
Let \(\M=(\Push_\eta,\bB,\Push_f)\) and \(\M'=(\Push_{\eta'},\bB,\Push_{f'})\) be feature-equivalent score-based CRL CMMs.
Write \(\lambda_a\defeq\eta(a)\) and \(\lambda'_a\defeq\eta'(a)\), so that the graphs in environment \(a\) are \(\cG_{\lambda_a}\) and \(\cG_{\lambda'_a}\).
Let \(\tau\) be the transition induced by \Cref{thm:intersection-individual}.
For any coupled hard-intervention pair \((a_i,\tilde a_i)\),
\[
    \nabla_c\Delta_{a_i,\tilde a_i}^{\bar\bQ'}(c)
    =
    \partial_c\tau(c)^\top
    \nabla_c\Delta_{a_i,\tilde a_i}^{\bar\bQ}(\tau(c)).
\]
Moreover, the two score contrasts are one-sparse in the corresponding candidate and ground-truth intervention targets.
This is the transported score-contrast and sparsity object used by \citet{varici2025score}; their Theorem~25 gives componentwise recovery and recovery of the mechanism-induced anchor graph \(\cG_{\lambda_{a_0}}\) up to the corresponding permutation of \(\cG_{\lambda'_{a_0}}\).
\end{proposition}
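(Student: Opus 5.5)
Following the pattern of the other recoveries, I will verify only the transported score-contrast identity and the one-sparsity claim. The rigidity conclusion is delegated to Theorem~25 of \citet{varici2025score}.

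\textbf{Step 1: the transported potential identity.} Let $\tau$ be the transition from \Cref{thm:intersection-individual}, so that $\bar\bQ_{\cAo}=\Push_\tau\bar\bQ'_{\cAo}$. Both $a_i$ and $\tilde a_i$ lie in $\cAo$, so by \Cref{thm:tq-characterization}(iii) with a common anchor $a_0\in\cAo$,
\[
\Delta_{a,a_0}^{\bar\bQ'}(c)=\Delta_{a,a_0}^{\bar\bQ}(\tau(c)), \qquad a\in\{a_i,\tilde a_i\}.
\]
Subtracting these two identities cancels the anchor, since $\Delta_{a_i,\tilde a_i}=\Delta_{a_i,a_0}-\Delta_{\tilde a_i,a_0}$ holds $\refm$-a.e. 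This gives
\[
\Delta_{a_i,\tilde a_i}^{\bar\bQ'}(c)=\Delta_{a_i,\tilde a_i}^{\bar\bQ}(\tau(c))
\]
for Lebesgue-a.e. $c$. Alternatively, the same identity follows directly from (ii) by taking logarithms, since $r_{\tau^{-1}}$ cancels.

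\textbf{Step 2: differentiate.} This step is the main obstacle, because the identities above hold only almost everywhere. I will use the standing smoothness assumptions of \citet{varici2025score}: the mixing maps $f,f'$ are diffeomorphisms onto their images, and the densities are continuously differentiable and positive. Under these assumptions $\tau=f^{-1}\circ f'$ is a diffeomorphism, using the a.e.\ equality $f'=f\circ\tau$ from the proof of \Cref{thm:intersection-individual}. Both sides of the identity are then continuous, so equality a.e.\ upgrades to equality everywhere on the open set. The chain rule then gives
\[
\nabla_c\Delta_{a_i,\tilde a_i}^{\bar\bQ'}(c)=\partial_c\tau(c)^\top\,\nabla\Delta_{a_i,\tilde a_i}^{\bar\bQ}(\tau(c)).
\]

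\textbf{Step 3: sparsity.} Fix a coupled hard-intervention pair. The mechanism values $\lambda_{a_i}$ and $\lambda_{\tilde a_i}$ agree at every node $j\neq i$, and their $i$-th mechanisms are parent-free. In the factorized density, all factors with $j\neq i$ therefore cancel in the ratio, leaving
\[
\Delta_{a_i,\tilde a_i}^{\bar\bQ}(c)=\log p_i^{\lambda_{a_i}}(c_i)-\log p_i^{\lambda_{\tilde a_i}}(c_i).
\]
Its gradient is supported on coordinate $i$. The same computation for $\M'$, which is a member of the same CMM class with its own intervention targets, gives one-sparsity at the candidate target.

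These transported, sparse score contrasts are exactly the input to Theorem~25 of \citet{varici2025score}. Invoking that theorem yields componentwise recovery of $\tau$ up to permutation, and recovery of $\cG_{\lambda_{a_0}}$ up to the corresponding permutation of $\cG_{\lambda'_{a_0}}$.
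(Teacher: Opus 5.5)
Your proposal is correct and follows essentially the same route as the paper's proof: subtract the transported identities from \Cref{thm:tq-characterization} for \(a_i\) and \(\tilde a_i\), then differentiate via the chain rule. Non-target mechanisms cancel and the parent-free target mechanisms leave a one-sparse score contrast, with the rigidity step delegated to \citet{varici2025score}. Your Step~2 is more careful than the paper, which just says ``differentiating'' without justifying the passage from an \(\refm\)-a.e.\ identity to a pointwise one via continuity of the smooth versions.
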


\begin{proof}
By \Cref{thm:tq-characterization}, subtracting the transported identities for \(a_i\) and \(\tilde a_i\) gives \(\Delta_{a_i,\tilde a_i}^{\bar\bQ'}(c)=\Delta_{a_i,\tilde a_i}^{\bar\bQ}(\tau(c))\).
Differentiating gives the displayed score-contrast equation.
For a coupled hard-intervention pair, all non-target mechanisms cancel and the target mechanisms are parent-free, so the ground-truth and candidate score contrasts are one-sparse in their respective targets.
This is exactly the sparsity property formalized in Theorem~7(iii) of \citet{varici2025score}; the componentwise and graph-recovery steps are their Lemmas~23--24 and Theorem~25.
\end{proof}


\subsection{Nonparametric CRL}
\label{appendix:proof-nonparametric-crl}

\paragraph{CMM translation.}
For the nonparametric CRL model of \citet{von2023nonparametric}, let \(\cA=\cE\), \(\cC=\bbR^n\), and \(\cX\subseteq\bbR^d\).
Fix an observational anchor \(e_0\in\cE\).
The concept variable is \(C=(C_1,\ldots,C_n)\), and the observed variable is \(X=f(C)\), where \(f\colon\bbR^n\to\cX\) is a diffeomorphism onto its image.
Let \(\modclass\) be the space of admissible local-mechanism values.
Each \(\lambda\in\modclass\) specifies local conditional densities \((p_j^\lambda)_{j=1}^n\) and induces a DAG \(\cG_\lambda\) on \([n]\).
Let \(\eta\colon\cE\to\modclass\) be the deterministic indexing map, write \(\lambda_e\defeq\eta(e)\), and set \(\lambda_0\defeq\lambda_{e_0}\).
The indexing kernel is \(\bQ=\Push_\eta\).
The shared concept-modulation kernel is
\[
    \bB(dc\mid\lambda)
    =
    \prod_{j=1}^n
    p_j^\lambda(c_j\mid c_{\pa_{\cG_\lambda}(j)})\,dc.
\]
Thus, for \(\bar\bQ=\bB\Push_\eta\),
\[
    p_{\bar\bQ}(c\mid e)
    =
    \prod_{j=1}^n
    p_j^{\lambda_e}(c_j\mid c_{\pa_{\cG_{\lambda_e}}(j)}).
\]
The mixing kernel is deterministic, \(\bK=\Push_f\), and the ground-truth anchor graph is the mechanism-induced graph \(\cG_{\lambda_0}\).

\paragraph{Attribute potentials.}
For \(e,e'\in\cE\), define
\[
    R_{e,e'}^{\bar\bQ}(c)
    \defeq
    \exp\{\Delta_{e,e'}^{\bar\bQ}(c)\}
    =
    \frac{p_{\bar\bQ}(c\mid e)}{p_{\bar\bQ}(c\mid e')}.
\]
For a paired perfect intervention \((e,e')\) on node \(i\), all non-target mechanisms agree and cancel in the ratio.
The target mechanisms are parent-free, so with
\[
    \rho_i^{e,e'}(u)
    \defeq
    \frac{\widetilde p_i^e(u)}{\widetilde p_i^{e'}(u)},
\]
we have
\[
    R_{e,e'}^{\bar\bQ}(c)
    =
    \rho_i^{e,e'}(c_i).
\]
Thus the CMM proof object is the paired environment density ratio used by \citet{von2023nonparametric}.

\paragraph{Recovery of guarantees.}
\begin{proposition}[CMM recovery of Theorem 3.4 of \citep{von2023nonparametric}]
\label{prop:recovery-vonk-thm34}
Let \(\M=(\Push_\eta,\bB,\Push_f)\) and \(\M'=(\Push_{\eta'},\bB,\Push_{h^{-1}})\) be feature-equivalent nonparametric CRL CMMs.
Write \(\lambda_e\defeq\eta(e)\), \(\lambda'_e\defeq\eta'(e)\), \(\lambda_0\defeq\lambda_{e_0}\), and \(\lambda'_0\defeq\lambda'_{e_0}\).
Let \(h\colon\cX\to\bbR^n\) be the candidate unmixing map, and define the transition from candidate to ground-truth coordinates by
\[
    \tau(z)\defeq f^{-1}(h^{-1}(z)).
\]
For any paired perfect intervention \((e,e')\) with ground-truth target \(i\) and candidate target \(j\),
\[
    \rho_j^{\prime e,e'}(z_j)
    =
    \rho_i^{e,e'}(\tau_i(z)).
\]
For the \(n\) paired interventions used in Theorem~3.4 of \citet{von2023nonparametric}, writing \(j=\pi(i)\) gives
\[
    \rho_{\pi(i)}^{\prime e,e'}(z_{\pi(i)})
    =
    \rho_i^{e,e'}(\tau_i(z)).
\]
This is the one-dimensional ratio equation in Appendix~C.3 of \citet{von2023nonparametric}; their theorem gives coordinatewise recovery and graph isomorphism between the mechanism-induced anchor graphs \(\cG_{\lambda_0}\) and \(\cG_{\lambda'_0}\).
\end{proposition}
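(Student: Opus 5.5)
The plan follows the template of the other recoveries: invoke \Cref{thm:intersection-individual} to get a transition, transport the attribute potential via \Cref{thm:tq-characterization}, and then specialize the resulting identity to paired perfect interventions.

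First, I will fix the direction of the transition. With \(\bK=\Push_f\) and \(\bK'=\Push_{h^{-1}}\), the construction in \Cref{thm:intersection-individual} gives the transition \(f^{-1}\circ h^{-1}\), which is exactly \(\tau\) as defined in the statement. It satisfies \(\Push_{h^{-1}}\refmeq\Push_f\Push_\tau\) and \(\bar\bQ(\cdot\mid e)=\Push_\tau\bar\bQ'(\cdot\mid e)\) for every observed \(e\). This is the orientation in which \Cref{thm:tq-characterization}(iii) transports potentials from \(\bar\bQ\) to \(\bar\bQ'\).

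Next, I take the anchored identities \(\Delta^{\bar\bQ'}_{e,e_0}(z)=\Delta^{\bar\bQ}_{e,e_0}(\tau(z))\) and the corresponding identities for \(e'\), and subtract them. This yields
\[
\Delta^{\bar\bQ'}_{e,e'}(z)=\Delta^{\bar\bQ}_{e,e'}(\tau(z))
\]
for \(\refm\)-a.e. \(z\), and exponentiating gives \(R^{\bar\bQ'}_{e,e'}(z)=R^{\bar\bQ}_{e,e'}(\tau(z))\).

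I then substitute the paired-perfect-intervention structure on each side. In the ground-truth model, every non-target mechanism cancels and the target mechanism at node \(i\) is parent-free, so \(R^{\bar\bQ}_{e,e'}(c)=\rho_i^{e,e'}(c_i)\). Evaluating at \(c=\tau(z)\) gives \(\rho_i^{e,e'}(\tau_i(z))\). The candidate model lies in the same class with the same \(\bB\), and its pair \((e,e')\) is also a paired perfect intervention, with candidate target \(j\). The same cancellation therefore gives \(R^{\bar\bQ'}_{e,e'}(z)=\rho_j^{\prime e,e'}(z_j)\). Equating the two expressions yields the claimed identity. For the \(n\) pairs in Theorem~3.4 of \citet{von2023nonparametric}, the candidate targets define a map \(i\mapsto j\); writing \(j=\pi(i)\) gives the displayed family. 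Coordinatewise recovery and graph isomorphism are then delegated to their Appendix~C.3 argument, in keeping with the stated convention for these recoveries.

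The main obstacle is justifying the candidate-side cancellation. It requires the candidate pair to also be a paired perfect intervention with a single target \(j\). I will take this from the model-class assumption, as \citet{von2023nonparametric} do. A further requirement is that \(\pi\) be a bijection, which must be imported from their setup rather than derived from the CMM machinery. The remaining points are minor: the almost-everywhere qualifications carry through because \(\tau\in\Aut_{\refm}(\cC)\), and the exponentials are finite and positive by common support.
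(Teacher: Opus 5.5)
Your proposal is correct and follows the paper's proof. You transport potentials via \Cref{thm:tq-characterization} along \(\tau=f^{-1}\circ h^{-1}\), cancel the non-target mechanisms in both paired ratios, and delegate coordinatewise recovery and graph isomorphism to \citet{von2023nonparametric}. Your explicit orientation check and your subtraction of the two anchored identities to get \(\Delta^{\bar\bQ'}_{e,e'}(z)=\Delta^{\bar\bQ}_{e,e'}(\tau(z))\) just fill in steps the paper leaves implicit.
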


\begin{proof}
By \Cref{thm:tq-characterization}, feature equivalence gives \(R_{e,e'}^{\bar\bQ'}(z)=R_{e,e'}^{\bar\bQ}(\tau(z))\).
For a paired perfect intervention, all non-target mechanisms cancel in both ratios.
Thus the ground-truth ratio is \(\rho_i^{e,e'}(\tau_i(z))\), while the candidate ratio is \(\rho_j^{\prime e,e'}(z_j)\).
Substitution gives the displayed equation.
The nondegeneracy, coordinatewise-recovery, and graph-isomorphism steps are exactly those in Appendix~C.3 and Theorem~3.4 of \citet{von2023nonparametric}.
\end{proof}


\subsection{Perturbation modeling}
\label{appendix:proof-perturbation-modeling}

\paragraph{CMM translation.}
Perturbation modeling in \citet{von2025representation} fits the CMM template by taking the attribute \(a\in\cA=\bbR^K\) to be a perturbation label, the concept \(c\in\cC=\bbR^k\) to be the perturbation-relevant latent state, and the feature \(x\in\cX\) to be the observed measurement.
Fix an anchor perturbation \(a_0\).
The deterministic indexing kernel is
\[
    \bQ=\Push_{a\mapsto W(a-a_0)}.
\]
The shared concept-modulation kernel is
\[
    \bB(dc\mid \modval)=\cN(\modval,I_k)(dc)
\]
where $\cN(\mu, \Sigma)$ denotes a Gaussian distribution with mean $\mu$ and covariance $\Sigma$.
Thus, for \(\bar\bQ=\bB\bQ\),
\[
    p_{\bar\bQ}(c\mid a)
    =
    \frac{1}{(2\pi)^{k/2}}
    \exp\left\{
        -\frac12\norm{c-W(a-a_0)}^2
    \right\}.
\]
The mixing kernel is deterministic, \(\bK=\Push_g\), where \(g\colon\bbR^k\to\cX\) is the observation map.

\paragraph{Attribute potentials.}
For this Gaussian mean-shift concept law, the attribute potential relative to \(a_0\) is
\[
\begin{aligned}
    \Delta_{a,a_0}^{\bar\bQ}(c)
    &=
    \log p_{\bar\bQ}(c\mid a)-\log p_{\bar\bQ}(c\mid a_0) \\
    &=
    \inner{W(a-a_0)}{c}
    -
    \frac12\norm{W(a-a_0)}^2.
\end{aligned}
\]
Therefore, after centering at any \(c_0\in\bbR^k\),
\[
    \Delta_{a,a_0}^{\bar\bQ}(c)-\Delta_{a,a_0}^{\bar\bQ}(c_0)
    =
    \inner{a-a_0}{W^\top(c-c_0)}.
\]
This is the proof object used in the perturbation model: perturbations act linearly on the latent mean, and centered attribute potentials expose the perturbation effect matrix \(W\).

\paragraph{Recovery of guarantees.}
\begin{proposition}[CMM recovery of perturbation identifiability and extrapolation]
\label{prop:recovery-perturbation-modeling}
Let
\[
    \M=(\Push_{a\mapsto W(a-a_0)},\bB,\Push_g),
    \qquad
    \M'=(\Push_{a\mapsto W'(a-a_0)},\bB,\Push_{g'})
\]
be feature-equivalent perturbation CMMs on observed perturbations \(\cAo\), with injective deterministic mixing maps.
Let \(\tau=g^{-1}\circ g'\) be the transition induced by \Cref{thm:intersection-individual}.
Then, for every \(a\in\cAo\),
\[
    \inner{W'(a-a_0)}{c}
    -
    \frac12\norm{W'(a-a_0)}^2
    =
    \inner{W(a-a_0)}{\tau(c)}
    -
    \frac12\norm{W(a-a_0)}^2.
\]
Equivalently, after centering at \(c_0\),
\[
    \inner{a-a_0}{(W')^\top(c-c_0)}
    =
    \inner{a-a_0}{W^\top(\tau(c)-\tau(c_0))}.
\]
Under the sufficient-diversity condition of \citet{von2025representation}, this is their perturbation-identifiability equation, with the remaining orthogonal-rigidity step supplied by their theorem.
The same centered identity satisfies \Cref{thm:extrapolation-affine-indexing} with \(\varphi(a)=a\), so feature equivalence extrapolates to every \(\aex\) satisfying
\[
    \aex-a_0\in\Span\setbuild{a-a_0}{a\in\cAo}.
\]
\end{proposition}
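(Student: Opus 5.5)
We prove the three claims in turn, using \Cref{thm:intersection-individual} to supply the transition, \Cref{thm:tq-characterization} to pass to densities, and \Cref{thm:extrapolation-affine-indexing} together with \Cref{thm:extrapolation-delta-iff} for extrapolation.

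\textbf{Step 1 (the transported identity).} The mixing class consists of injective deterministic maps, so it is $\refm$-Blackwell reducible with $\widetilde{\bK}=\Push_{\mathrm{id}_\cX}$. The Gaussian concept laws have everywhere positive Lebesgue densities, so common $\refm$-support holds. \Cref{thm:intersection-individual} then yields $\tau=g^{-1}\circ g'\in\Aut_{\refm}(\bbR^k)$ with $\bar\bQ_{\cAo}=\Push_\tau\bar\bQ'_{\cAo}$. By \Cref{thm:tq-characterization}(iii), this gives $\Delta^{\bar\bQ'}_{a,a_0}(c)=\Delta^{\bar\bQ}_{a,a_0}(\tau(c))$ for every $a\in\cAo$ and $\refm$-a.e.\ $c$. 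Substituting the explicit Gaussian potential $\Delta_{a,a_0}^{\bar\bQ}(c)=\inner{W(a-a_0)}{c}-\tfrac12\norm{W(a-a_0)}^2$, and its analogue for $W'$, produces the first displayed identity.

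\textbf{Step 2 (centered form).} Evaluate the identity at $c$ and at $c_0$ and subtract. The norm terms are independent of $c$ and cancel. Moving $W$ and $W'$ to the other side of the inner product gives $\inner{a-a_0}{(W')^\top(c-c_0)}=\inner{a-a_0}{W^\top(\tau(c)-\tau(c_0))}$ for $\refm\otimes\refm$-a.e.\ $(c,c_0)$. Since $\Push_\tau\refm\sim\refm$, the null sets are preserved, exactly as in the proof of \Cref{thm:extrapolation-affine-indexing}. The identifiability consequence is then only a matter of interpretation: under sufficient diversity, stacking this identity over a spanning set of $\{W(a-a_0)\}$ reproduces the equation analyzed in \citet{von2025representation}, and we delegate the orthogonal-rigidity step to their theorem rather than reprove it.

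\textbf{Step 3 (extrapolation).} Take $\varphi(a)=a$ and $D^{\bar\bQ}(c,c_0)=W^\top(c-c_0)$, and similarly $W'^\top(c-c_0)$ for $\bar\bQ'$. The centered expression above then matches the hypothesis of \Cref{thm:extrapolation-affine-indexing} for every $a\in\cA$. Let $\cAex$ be the set of $\aex$ with $\aex-a_0\in\Span\setbuild{a-a_0}{a\in\cAo}$. We must show $\cAex$ lies in the affine hull of $\cAo$. Write $\aex-a_0=\sum_i\beta_i(a_i-a_0)$ with $a_i\in\cAo$; then $\aex=\sum_i\beta_i a_i+(1-\sum_i\beta_i)a_0$, and these coefficients sum to $1$. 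Because $a_0\in\cAo$, this is an affine combination of points of $\cAo$.

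Applying \Cref{thm:extrapolation-affine-indexing} extends the transported potential identities to $\cAex$, and \Cref{thm:extrapolation-delta-iff} converts them into $\bP^\M_{\cAex}=\bP^{\M'}_{\cAex}$.

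\textbf{Main obstacle.} The subtle point is the measure-theoretic bookkeeping: the identities hold only almost everywhere and must be transported under $\tau$. This is already handled inside the cited theorems, so the only genuinely new check is the affine-hull inclusion in Step 3, which is elementary.
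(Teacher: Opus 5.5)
Your proposal is correct and follows the paper's proof. The paper likewise substitutes the Gaussian mean-shift potential into the identity transported by \Cref{thm:tq-characterization} along the transition from \Cref{thm:intersection-individual}, subtracts at $c_0$, and invokes \Cref{thm:extrapolation-affine-indexing,thm:extrapolation-delta-iff} with $\varphi(a)=a$. Your affine-hull computation $\aex=\sum_i\beta_i a_i+(1-\sum_i\beta_i)a_0$ spells out a step the paper leaves implicit, and it relies on $a_0\in\cAo$, which the paper also requires implicitly through those theorems.
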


\begin{proof}
Substituting the Gaussian mean-shift potential into the transported identity \(\Delta_{a,a_0}^{\bar\bQ'}(c)=\Delta_{a,a_0}^{\bar\bQ}(\tau(c))\) gives the first display.
Subtracting the same identity at \(c_0\) gives the centered display.
The identifiability conclusion is the rigidity theorem of \citet{von2025representation}, and the extrapolation conclusion follows from \Cref{thm:extrapolation-affine-indexing,thm:extrapolation-delta-iff}.
\end{proof}


\clearpage
\section{CMM Translations of prior work}\label{appendix:tables}

\begin{table*}[h]
\centering
\scriptsize
\setlength{\tabcolsep}{4pt}
\renewcommand{\arraystretch}{1.15}
\resizebox{\textwidth}{!}{
\begin{tabular}{llcccc}
\toprule\toprule
\textbf{Setting}
& \textbf{Reference}
& \textbf{$A$}
& \textbf{$\modrv$}
& \textbf{$C$}
& \textbf{$X$}
\\
\midrule

\multirow{3}{*}{Nonlinear ICA}
& \citet{hyvarinen2019nonlinear}
& auxiliary variable \(\bu\)
& \(\bigl(q_i(\cdot \mid \bu)\bigr)_{i=1}^k\)
& \(\bs\in\bbR^k\)
& \(\bx = f(\bs)\)
\\

& \citet{khemakhem2020variational}
& conditioning variable \(\bu\)
& \(\bigl(f(\bu),T,Q\bigr)\)
& \(\bz\)
& \(\bx = f(\bz) + \varepsilon\)
\\

& \citet{khemakhem2020ice}
& conditioning variable \(by\)
& \(\bigl(\boldf_\theta, \bg_\theta(\by)\bigr)\)
& \(\bx\)
& \(\bx\)
\\

\midrule

\multirow{4}{*}{CRL}
& \citet{squires2023linear}
& environment \(k\in\{0\}\cup[K]\)
& structural factor \(B_k\)
& \(Z\)
& \(X = GZ\)
\\

& \citet{Buchholz2023learning}
& environment \(i\in I\cup\{0\}\)
& \(\modval_i=(B^{(i)},\eta^{(i)},t_i)\)
& \(Z\)
& \(X = f(Z)\)
\\

& \citet{von2023nonparametric}
& environment \(e\in\cE\)
& local-mechanisms \(\lambda = (p_i^\lambda(c_i \mid c_{\mathrm{pa}_{\cG_\lambda}(i)})_{i=1}^n)\)
& \(\bV\)
& \(\bX\)
\\

& \citet{varici2025score}
& environment \(a\in\cE\)
& local-mechanisms \(\lambda = (p_i^\lambda(c_i \mid c_{\mathrm{pa}_{\cG_\lambda}(i)})_{i=1}^n)\)
& \(\bZ\)
& \(\bX = g(\bZ)\)
\\

\midrule

\multirow{2}{*}{Perturbation modeling}

& \citet{ahuja2022weakly}
& action \(a\)
& weak-supervision mechanism \((S_a,\delta_a)\)
& \(Z\)
& \(X=g(Z)\)
\\

& \citet{von2025representation}
& perturbation label \(a\in\bbR^K\)
& latent mean \(W(a-a_0)\)
& \(C\in\bbR^k\)
& \(X=g(C)\)
\\

\midrule

\multirow{2}{*}{Topic modeling}
& \citet{Arora2012learning}
& document \(d\)
& topic mixture \(w_d\in\Delta^{r-1}\)
& topic \(T\in[r]\)
& word \(V\in[n]\)
\\

& \citet{huang2016anchor}
& document \(d\)
& topic mixture \(w_d\in\Delta^{r-1}\)
& topic \(T\in[r]\)
& word \(V\in[n]\)
\\

\midrule

\multirow{2}{*}{Others}
& \citet{Rajendran2024from}
& atomic concept label \(\alpha=(a,b)\)
& \(\alpha=(a,b)\)
& \(Z\)
& \(X=f(Z)\)
\\

& \citet{SchmidtSchneiderBethge2025equivariance}
& group element \(g\)
& \(\Pull(g)\)
& \((\bx,\bx')\)
& \((\by, \by')\)
\\

\bottomrule\bottomrule
\end{tabular}
}
\captionsetup{skip=3pt}
\caption{Variable-level CMM translations for representative prior work. Each row records the objects along the chain \(A\to\modrv\to C\to X\), following notations in the original papers.
In the mechanism-based CRL rows, \(\cG_{\lambda}\) denotes the induced DAG from local mechanisms $\lambda$.}
\label{tab:related_work_objects_full}
\end{table*}

\begin{table*}[ht]
\centering
\scriptsize
\setlength{\tabcolsep}{4pt}
\renewcommand{\arraystretch}{1.15}
\resizebox{\textwidth}{!}{
\begin{tabular}{llccc}
\toprule\toprule
\textbf{Setting}
& \textbf{Reference}
& \textbf{\(\bQ\in\MK(\cA\to\modclass)\)}
& \textbf{\(\bB\in\MK(\modclass\to\cC)\)}
& \textbf{\(\bK\in\MK(\cC\to\cX)\)}
\\
\midrule

\multirow{3}{*}{Nonlinear ICA}
& \citet{hyvarinen2019nonlinear}
& \(\Push_{a\mapsto(q_i(\cdot,a))_{i=1}^k}\)
& \(\displaystyle \bB(dc\mid\modval)=\exp\left\{\sum_{i=1}^k\modval_i(c_i)-\Gamma(\modval)\right\}dc\)
& \(\Push_f\)
\\

& \citet{khemakhem2020variational}
& \(\Push_{a\mapsto(f(a),T,Q)}\)
& \(\displaystyle \bB(dc\mid\theta,T,Q)=\frac{Q(c)\exp\{\inner{\theta}{T(c)}\}}{Z(\theta)}\,dc\)
& \(\widetilde{\bK}\Push_g\)
\\

& \citet{khemakhem2020ice}
& \(\Push_{a\mapsto(\bg_\theta(a),\boldf_\theta)}\)
& \(\displaystyle \bB(dc\mid\gamma,\boldf)=\frac{\exp\{-\inner{\gamma}{\boldf(c)}\}}{Z(\gamma,\boldf)}\,dc\)
& \(\Push_{\id}\)
\\

\midrule

\multirow{4}{*}{CRL}
& \citet{squires2023linear}
& \(\displaystyle \Push_{k\mapsto B_k}\)
& \(\displaystyle \bB(dc\mid B)=\mathrm{Law}(B^{-1}\epsilon)(dc)\)
& \(\displaystyle \Push_G\)
\\

& \citet{Buchholz2023learning}
& \(\displaystyle \Push_{i\mapsto(B^{(i)},\eta^{(i)},t_i)}\)
& \(\displaystyle \bB(dc\mid B,\eta,t)=|\det B|(2\pi)^{-d/2}\exp\left\{-\frac12\norm{Bc-\eta e_t}^2\right\}dc\)
& \(\displaystyle \Push_f\)
\\

& \citet{von2023nonparametric}
& \(\displaystyle \Push_{e\mapsto\lambda_e}\)
& \(\displaystyle \bB(dc\mid\lambda)=\prod_{j=1}^n p_j^\lambda(c_j\mid c_{\pa_{\cG_\lambda}(j)})\,dc\)
& \(\displaystyle \Push_f\)
\\

& \citet{varici2025score}
& \(\displaystyle \Push_{a\mapsto\lambda_a}\)
& \(\displaystyle \bB(dc\mid\lambda)=\prod_{i=1}^n p_i^\lambda(c_i\mid c_{\pa_{\cG_\lambda}(i)})\,dc\)
& \(\displaystyle \Push_f\)
\\

\midrule

\multirow{2}{*}{Perturbation modeling}
& \citet{ahuja2022weakly}
& \(\displaystyle \Push_{a\mapsto(S_a,\delta_a)}\)
& \(\displaystyle \bB(dc\mid S,\delta)=\mathrm{Law}(\text{latent transformed by }(S,\delta))(dc)\)
& \(\displaystyle \Push_g\)
\\

& \citet{von2025representation}
& \(\displaystyle \Push_{a\mapsto W(a-a_0)}\)
& \(\displaystyle \bB(dc\mid \lambda)=\cN(\lambda,I_k)(dc)\)
& \(\displaystyle \Push_g\)
\\

\midrule

\multirow{2}{*}{Topic modeling}
& \citet{Arora2012learning}
& \(\displaystyle \Push_{d\mapsto w_d}\)
& \(\displaystyle \bB(T=t\mid w)=w_t\)
& \(\displaystyle \bK(V=v\mid T=t)=\Phi_{tv}\)
\\

& \citet{huang2016anchor}
& \(\displaystyle \Push_{d\mapsto w_d}\)
& \(\displaystyle \bB(T=t\mid w)=w_t\)
& \(\displaystyle \bK(V=v\mid T=t)=\Phi_{tv}\)
\\

\midrule

\multirow{2}{*}{Others}
& \citet{Rajendran2024from}
& \(\displaystyle \Push_{(a,b)\mapsto(a,b)}\)
& \(\displaystyle \bB(dc\mid a,b)=\frac{p(c)q(\inner{a}{z}-b)}{Z(a,b)}\,dc\)
& \(\displaystyle \Push_f\)
\\

& \citet{SchmidtSchneiderBethge2025equivariance}
& \(\displaystyle \Push_{g\mapsto\Pull(g)}\)
& \(\displaystyle \bB(d(u,u')\mid\rho)=\mathrm{Law}(U,\rho U)(d(u,u'))\)
& \(\displaystyle \Push_{f^{\otimes 2}}\)
\\

\bottomrule\bottomrule
\end{tabular}
}
\captionsetup{skip=3pt}
\caption{Operator-level CMM translations for representative prior work. Each row records the three kernels in the CMM chain \(A\to\modrv\to C\to X\). 
For \citet{hyvarinen2019nonlinear}, \(\Gamma(\modval)\) is the normalizer determined by the tuple \(\modval=(q_i)_{i=1}^k\), not an additional modulator component.
For \citep{khemakhem2020variational}, $\widetilde{\bK}$ denotes a kernel that adds noise, which are injective through deconvolution process.
In the Gaussian CRL row, \(\eta^{(i)}\) is the shift parameter used in that model, not an indexing map. In the mechanism-based CRL rows, \(\lambda\) denotes a tuple of local mechanisms and \(\cG_\lambda\) denotes the DAG induced by those mechanisms. For topic models, \(\Phi\in\bbR^{r\times n}\) denotes the topic-word matrix, so \(\bK(V=v\mid T=t)=\Phi_{tv}\).}
\label{tab:related_work_operators_full}
\end{table*}

\end{document}